\DeclareMathOperator*{\argmax}{argmax}
\DeclareMathOperator*{\argmin}{argmin}
\DeclareMathOperator*{\E}{\mathbb{E}}
\renewcommand{\d}{\mathcal{D}}
\newcommand{\g}{\mathcal{G}}
\newcommand{\p}{\mathcal{P}}
\newcommand{\R}{\mathbb{R}}
\newcommand{\x}{\mathcal{X}}
\newcommand{\y}{\mathcal{Y}}
\newcommand{\z}{\mathcal{Z}}
\newcommand{\1}{\mathbbm{1}}
\newcommand{\cL}{\mathcal{L}}
\newcommand{\cA}{\mathcal{A}}
\newcommand{\ls}{\textrm{LS}}
\newcommand{\range}{\textrm{Range}}
\newtheorem{definition}{Definition}
\newtheorem{theorem}{Theorem}
\newtheorem{lemma}{Lemma}
\newtheorem{claim}{Claim}
\newtheorem{assumption}{Assumption}
\newtheorem{observation}{Observation}
\newtheorem{remark}{Remark}
\newtheorem{fact}{Fact}
\title{The Scope of Multicalibration: \\ Characterizing Multicalibration via Property Elicitation}
\author[1]{Georgy Noarov\thanks{This work was done in part while the author was visiting the Simons Institute for the Theory of Computing.}}
\author[1]{Aaron Roth}
\affil[1]{Department of Computer and Information Sciences, University of Pennsylvania}
\begin{document}
\maketitle

\begin{abstract}
We make a connection between multicalibration and property elicitation and show that (under mild technical conditions) it is possible to produce a multi-calibrated predictor for a continuous scalar property $\Gamma$ if and only if $\Gamma$ is \emph{elicitable}. 

On the negative side, we show that for non-elicitable continuous properties there exist simple data distributions on which even the true distributional predictor is not calibrated. On the positive side, for elicitable $\Gamma$, we give simple canonical algorithms for the batch and the online adversarial setting, that learn a $\Gamma$-multicalibrated predictor. This generalizes past work on multicalibrated means and quantiles, and in fact strengthens existing online  quantile multicalibration results.

To further counter-weigh our negative result, we show that if a property $\Gamma^1$ is not elicitable by itself, but \emph{is} elicitable \emph{conditionally} on another elicitable property $\Gamma^0$, then there is a canonical algorithm that \emph{jointly} multicalibrates $\Gamma^1$ and $\Gamma^0$; this generalizes past work on mean-moment multicalibration. 

Finally, as applications of our theory, we provide novel algorithmic and impossibility results for fair (multicalibrated) risk assessment.
\end{abstract}
\thispagestyle{empty} \setcounter{page}{0}
\clearpage

\section{Introduction}

Consider a distribution $D$ over a labeled data domain $\z =\x\times \R$ of examples with observable features $x \in \x$ and labels $y \in \R$. A predictor $f:\x\rightarrow \R$ is \emph{(mean) calibrated} if, informally, it correctly estimates the \emph{mean label value} even conditional on its own predictions: i.e., $\E_{(x,y) \sim D}[y | f(x) = v] = v$ for all predictions $v$. Calibration is a desirable property, but a weak one, since it only refers to the \emph{average} value of the label, averaged over all examples such that $f(x) = v$; it might be, for example, that there is a structured subset of examples $G \subset \x$ such that $f$ systematically under-estimates label means for examples $x \in G$ --- such a predictor can still be calibrated if it compensates by over-estimating the mean labels for $x \not\in G$.

Multicalibration was introduced by \cite{hebert2018multicalibration} to strengthen the notion of calibration. A multicalibrated predictor is parameterized by a collection of groups $\g \subseteq 2^\x$, and is calibrated not just overall, but also conditional on membership in $G$ for all groups $G \in \g$. That is, for all $v, G$, we must have:
\[\E_{(x,y) \sim D}[y | f(x) = v, x\in G] = v.\]

Multicalibration was generalized from means to moments by \cite{jung_moment}. Thought-provokingly, by way of an explicit counterexample,~\cite{jung_moment} showed that the variance (and other higher moments) cannot be multicalibrated by themselves but \emph{can} be multicalibrated \emph{jointly} with the mean, i.e., as part of a (mean, moment) pair. 
 Later, \cite{gupta2022online} and \cite{jung2022batch} showed how to obtain a \emph{quantile} analogue of multicalibration, which requires that for any target coverage level $\tau \in [0,1]$, for any $v$ in the range of a predictor $f$ and for any $G \in \g$: 
 \[\E[\mathbbm{1}[y \leq f(x)] | f(x) = v, x \in G] = \tau.\]

Thus, by now we have efficient batch \citep{hebert2018multicalibration,jung_moment,jung2022batch} and online \citep{gupta2022online, bastani2022practical} multicalibration algorithms for several natural distributional properties (means and quantiles), an impossibility result for the variance and higher moments, and a result showing how to multicalibrate means and moments together despite  moments not being multicalibratable on their own \citep{jung_moment}. 
But are these one-off results, or is there a more general theory of multicalibration for \emph{distributional properties} --- i.e., arbitrary functionals $\Gamma: \p \to \R$ mapping any distribution $P \in \p$ to a scalar statistic $\Gamma(P)$? To study this question, it is natural to define (cf.~\cite{dwork2022beyond}) that a predictor $f$ is \emph{$(\g,\Gamma)$-multicalibrated} for property $\Gamma$ if for all groups $G \in \g$ and values $\gamma$ in $f$'s range, it holds that:
\[\Gamma(D|(f(x) = \gamma, x \in G)) = \gamma,\]
where $D|(f(x) = \gamma, x \in G)$ is the data distribution conditioned on the event $\{x: f(x)=\gamma, x \in G\}$. 
(When $\g = \{\x\}$, we would simply say that $f$ is $\Gamma$-calibrated.)

\paragraph{Our motivation} In developing our theory of property multicalibration, we are guided by trying to answer several questions of special significance:

\begin{enumerate}
    \item \emph{Which} distributional properties of interest \emph{are} possible to multicalibrate, and which ones are not?
    \item For those properties that \emph{are} possible to multicalibrate in the batch setting, do we always also have a solution in the online adversarial setting, or is there an online-offline separation?
    \item Both in the batch and in the online setting, can one formulate a natural \emph{canonical} algorithm with simple and generic performance guarantees, which takes in a description (in some simple format) of any multicalibratable property of interest and outputs a multicalibrated predictor for it? 
    \item For practically important properties that are not multicalibratable \emph{per se}, are there any reasonably general techniques that would come to the rescue and let us nonetheless achieve some modified notion of multicalibration? (Cf. the case of variance, where packaging it together with multicalibrated means brings it back into the realm of multicalibratable properties as shown by \cite{jung_moment}.)
\end{enumerate}

\subsection{Our results}

We give an (almost) complete answer to all these questions by connecting property multicalibration to the well-studied  theory of \emph{property elicitation}. 

The modern formulation of property elicitation theory is due to~\cite{lambert2008eliciting}, but it has been extensively developed in both earlier and later works. In a nutshell, properties are called \emph{elicitable} if their value on any data distribution can always be directly learned as the minimizer of some loss function over the dataset. For example, means and quantiles are elicitable as they can be solved for, respectively, via least squares and quantile regression. But, for instance,  variance is not elicitable, hence why one typically first predicts the mean and only then computes the variance around it.

An equivalent (subject to mild assumptions) notion is that of \emph{identifiable} properties: an identification function (typically a first-order condition on the property's ``loss function'') tells us if we over- or under-estimated the property value, in expectation over the dataset. For example, an \emph{expected} identification function for a mean predictor $f_m$ is simply $\E_{(x, y)}[f_m(x) - y]$, and an \emph{expected} identification function for a $\tau$-quantile predictor $f_\tau$ is $\Pr_{(x, y)}[y \leq f_\tau(x)] - \tau$, the average overcoverage of $f_\tau$. We give the following collection of results.

\begin{enumerate}
    \item \textbf{A Feasibility Criterion: $\Gamma$-multicalibration is possible if and only if $\Gamma$ is elicitable.} We provide a very general \emph{if-and-only-if} characterization, that subject to mild assumptions categorizes various distributional properties of interest as possible or not possible to multicalibrate. 
    We show that a property $\Gamma$ is \emph{sensible for calibration} (see Definition~\ref{def:sensible}) if and only if it is elicitable, if and only if it is identifiable. See Theorem~\ref{thm:sensible} in Section~\ref{sec:sensibility}. A crucial tool that we use is a central result of~\cite{steinwart_elicitation}: (under mild conditions) a property $\Gamma$ is elicitable $\iff$ it is identifiable $\iff$ its level sets are convex; the key is that we tightly relate sensibility for $\Gamma$-calibration to the convexity of $\Gamma$'s level sets.
    \item \textbf{General canonical batch and online algorithms.} We identify two ``canonical'' $\Gamma$-multicalibration algorithms for elicitable properties $\Gamma$: the batch Algorithm~\ref{alg:batch} and the online adversarial Algorithm~\ref{alg:online}, and prove convergence guarantees for them. See Sections~\ref{sec:batch} and~\ref{sec:sequential}, respectively.
    Our batch algorithm naturally extends the known methods for means and quantiles \citep{hebert2018multicalibration,jung2022batch}. 
    Our online algorithm generalizes  existing online algorithms for means and quantiles~\citep{gupta2022online,bastani2022practical, lee2022online} (and achieves stronger results than prior work for quantiles).

    \item \textbf{\emph{Joint} multicalibration for conditionally elicitable properties.} We show that if a property $\Gamma^0$ is elicitable, and $\Gamma^1$ is \emph{conditionally} elicitable given $\Gamma^1$ (meaning, informally, that conditional on knowing the exact value of $\Gamma^0$, there \emph{is} a regression procedure to learn $\Gamma^1$)
    then (under technical conditions), the pair $(\Gamma^0,\Gamma^1)$ is \emph{jointly} multicalibratable using a canonical Algorithm~\ref{alg:conditional}.
    This generalizes (mean, moment) multicalibration of~\cite{jung_moment}. See Section~\ref{sec:multi}.
    \item \textbf{Applications: Positive and negative results on fair risk assessment.} Prior work on mean, joint mean-moment and quantile multicalibration offer  theoretical and empirical evidence for the promise of deploying multicalibration for the purposes of risk estimation. Previously, however, nothing was known about multicalibrating any of the large collection  of other \emph{risk measures} beyond quantiles and variances. In Section~\ref{sec:examples}, we begin to fill this gap by applying our theory to derive results about a host of risk measures of central significance in financial risk assessment. For example, we show a general negative result that the large family of \emph{distortion risk measures} are not multicalibratable, except for means and quantiles (and two other technical quantile variants). On the positive side, we establish that so-called \emph{Bayes risks} are multicalibratable jointly with the elicitable property whose risk they measure. This is exemplified by \emph{Conditional Value at Risk} (CVaR), also known as \emph{Expected Shortfall} (ES) --- a risk assessment measure of central theoretical and practical significance --- which, as we show, is not multicalibratable on its own but is multicalibratable jointly with quantiles.
\end{enumerate}

\subsection{Additional related work}
The main conceptual contribution of our paper is to connect the literature on \emph{multicalibration} with the literature on \emph{property elicitation}, both of which have a number of related threads. 

\paragraph{Multicalibration} (Mean) multicalibration in the batch setting was introduced by \cite{hebert2018multicalibration}, and has subsequently been generalized in a number of ways. As already discussed, \cite{jung_moment} study mean conditioned moment multicalibration in the batch setting, \cite{gupta2022online} study mean, quantile, and mean conditioned moment multicalibration in the sequential setting, and \cite{jung2022batch} studies quantile multicalibration in the batch setting. These generalizations can be seen as asking for calibration with respect to different distributional properties --- i.e.\ they are generalizations of the type that we characterize in our paper. \cite{dwork2021outcome} study outcome indistinguishability, generalizing batch  multi-calibration in a binary-label setting to allow for \emph{distinguishers} that can evaluate the expectations of arbitrary predicates of triples $(x, f(x), y)$, and consider strengthenings in which the distinguisher gets further access to $f$ --- e.g.\ by being able to query it in arbitrary points, or by having access to its code. In particular, they show that without additional access to $f$, outcome indistinguishability can be reduced to (mean) multicalibration. Note that many distributional properties (e.g.\ variances, quantiles, etc) only become interesting when the label space becomes real valued rather than binary valued.  

The most closely related works study abstract generalizations of multi-calibration. \cite{dwork2022beyond} study a generalization of outcome indistinguishability to real valued labels, and along the way consider batch multicalibration with respect to \emph{linearizing statistics}. In our language, these are distributional properties $\Gamma$ that behave linearly over mixture distributions --- informally that for any two distributions $D_1,D_2$ and any $\alpha \in [0,1]$, $\Gamma(\alpha D_1 + (1-\alpha) D_2) = \alpha \Gamma(D_1) + (1-\alpha) \Gamma(D_2)$. We adopt one of their definitions of multicalibration with respect to general distributional properties. All linearizing statistics have convex level sets (and so are elicitable), but not all elicitable properties are linear in this sense --- for example, quantiles do not linearize. So our characterization of multicalibration implies that it is possible to multicalibrate with respect to a broader class of properties than are studied by \cite{dwork2022beyond}. \cite{lee2022online} study a  general online learning problem that they call ``Online Minimax Multiobjective Optimization'', and derive algorithms for multicalibration along with a number of other applications in this framework. We make use of this framework to derive our sequential multicalibration bounds, using an \emph{identification function} that arises from the connection we make to property elicitation. Recent work of \cite{happymap} studies a one-dimensional generalization of multicalibration that asks for the condition that $\E[c(f(x),x)s(f(x),y)] = 0$ for abstract functions $c$ and $s$, and derives sufficient (but not necessary) conditions under which this can be achieved in the batch setting. The algorithms we derive for batch multicalibration are similar to theirs, where an identification function takes the place of their $s$ function, and a scoring function takes the place of their potential function; they do not consider sequential or multi-dimensional problems. Relative to this line of work, our result is the first to provide a characterization of when property multicalibration can be obtained, and to provide unifying results for both batch and sequential multicalibration.

There are also generalizations in orthogonal directions. \cite{gopalan2022low} define ``low degree multicalibration'', which is a hierarchy of properties of predictors that are still trying to predict \emph{means}, but relax the conditioning event that $f(x) = v$. At the bottom of the hierarchy is \emph{multi-accuracy} \cite{hebert2018multicalibration,kim2019multiaccuracy} which does not condition on $f(x)$ at all. Multicalibration lies at the top of the hierarchy; in between are conditions that depend on $f(x)$ only smoothly, through a degree $k$ polynomial. \cite{gopalan2022low} show that intermediate levels of this hierarchy have some of the desirable properties of multicalibration and can be easier to obtain. Several works \citep{kim2019multiaccuracy,gopalan2022omnipredictors,kim2022universal,globus2023multicalibration} study generalizations of mean multicalibration in which ``groups'' representing subsets of the data domain are relaxed to arbitrary real valued functions and give a number of applications. In this setting, \cite{globus2023multicalibration} provide a characterization of when mean multicalibration implies Bayes optimality. See \cite{rothuncertain} for an introductory exposition of much of this work. 

\paragraph{Property elicitation} \cite{brier1950verification}, \cite{good1992rational}, and \cite{savage1971elicitation} study proper scoring rules, which are contracts for paying experts as a function of their predictions and realized outcomes that maximally reward them (in expectation) if they report the true probability of the outcome event. Since scoring rules directly elicit probabilities, they could in principle be used to elicit an entire probability distribution by eliciting the probability of every event in its support, but this is generally infeasible. Instead, \cite{lambert2008eliciting} introduce the problem of \emph{property elicitation}, whose goal is to design contracts that incentivize experts to truthfully report some \emph{property} of a large or infinite support distribution --- like its mean, variance, median, etc. Informally a property is \emph{elicitable} if there exists some function of a report and an outcome that in expectation over the outcome is minimized at the property value. There is now a large literature on property elicitation; we make use of several key results. \cite{osband1985providing} and \cite{gneiting2011making} define the notion of an identification function for a property, which like a scoring rule is a function of a report and an outcome; an identification function takes value 0 in expectation over the outcome if the report is equal to the property value. \cite{steinwart_elicitation} prove a central characterization theorem (subject to mild technical conditions) --- a continuous property is elicitable if and only if it has an identification function if and only if it has convex level sets. The characterization of \cite{steinwart_elicitation} holds generally for continuous outcome spaces. When the outcome space is finite, \cite{finocchiaro2018convex} show that (subject to technical conditions), elicitable properties can be elicited with convex scoring rules.

\section{Preliminaries}
\label{sec:prelim}

We study prediction problems over labeled datapoints in $\z = \x \times \y$, where $\x$ is a space of feature vectors and $\y \subseteq \R$ is a space of labels. We study both batch (offline) and sequential (online) prediction problems. The online setting will be discussed in Section~\ref{sec:sequential}, and we defer sequential prediction definitions and preliminaries to that section. Informally, property multicalibration is defined identically in the sequential setting as we define it here in the batch setting, with the \emph{empirical} distribution over the realized data sequence taking the place of the underlying dataset distribution considered here. 

In the batch setting, there is a distribution $D \in \Delta \z$ over labeled examples. Such a distribution induces a distribution $X$ over features and $Y$ over labels. Keeping $D$ implicit, we let $Y_x := (Y | \{X = x\})$ denote the conditional label distribution given feature vector $x \in \x$, 
and more generally, given a subset $G \subseteq \x$, we write $Y_G := (Y | \{x \in G\})$ for the conditional label distribution conditional on $x \in G$.
A \emph{predictor} or \emph{model} is simply a real valued mapping $f:\x\rightarrow \mathbb{R}$.

Both offline and online, our goal is to make \emph{(multi)calibrated} predictions about some \emph{distributional property}. We now introduce the requisite background for both these notions.

\subsection{Distributional properties}

A one-dimensional \emph{(distributional) property} is a functional $\Gamma: \p \to \R$, where $\p$ is some space of probability distributions of real-valued random variables. We write $\range_\Gamma$ to denote the range of $\Gamma$, and without loss of generality, scale properties in this paper so that $\range_\Gamma \subseteq [0,1]$. 

Examples of distributional properties include the mean, or the median, or a $\tau$-quantile, or the variance of a distribution (for further examples see Section~\ref{sec:examples}). What all of these notions have in common is that each of them puts a single real number in correspondence with a given distribution.

\paragraph{Formally defining $\p$} Throughout this paper, the basic assumption we make about the space of distributions $\p$ that $\Gamma$ is defined on is that $\p$ is a \emph{convex} subspace of some vector space, so that taking convex combinations over distributions in $\p$ is a well-defined operation. 

The distributions $P \in \p$ will be defined over the label space $\y$, and we have two convenient ways to define the structure of $\p$, depending on whether or not $\y$ is a finite set. If $\y$ is finite ($|\y| = d < \infty)$, then every distribution $P \in \p$ can be embedded into $\R^d$ as an element of the $d$-dimensional simplex $\Delta(d)$, so we simply define that $\p \subseteq \Delta(d) \subset \R^d$ is a convex subset of the simplex $\Delta(d)$, equipped with the usual Euclidean norm. 

If $\y$ is infinite, we assume that $\p \subseteq W_\text{TV}$, where $W_\text{TV}$ is a \emph{Banach space} (i.e.\ a complete normed vector space; see~\cite{diestel1977vector} for an accessible introduction to Banach spaces) of probability distributions over $\y$ that have almost everywhere bounded densities, equipped with the \emph{total variation norm} $||\cdot||_\text{TV}$. In particular, $W_\text{TV}$ is a metric space where the distance between any two probability distributions $P_1, P_2 \in \p$ is computed as the \emph{total variation distance} $||P_1 - P_2||_\text{TV}$.

An assumption we will often place on $\Gamma$ is continuity: i.e.\ $\Gamma$ cannot take drastically different values on very similar distributions. (A great many properties, including means, variances, quantiles, entropies, etc.\ are indeed continuous.)
\begin{definition}[Continuous Property]
    If the functional $\Gamma: \p \to \R$ is continuous (with respect to the metric topology on $\p$ and the standard topology on $\R$), we call $\Gamma$ a \emph{continuous property}.
\end{definition}

\paragraph{Property prediction on datasets} In our batch setting, we deal with datasets over $\x \times \y$, and we are interested in training predictors $f_\Gamma: \x \to \R$ for properties $\Gamma$ of the conditional label distributions $Y_x$ over $\y$. Informally, a good predictor for property $\Gamma$ would satisfy $f_\Gamma(x) \approx \Gamma(Y_x)$ for every $x \in \x$.

Since we study properties $\Gamma: \p \to \R$ defined over some family $\p$ of distributions over the dataset's label space $\y$, we will want to restrict our attention to those dataset distributions $D \in \Delta (\x \times \y)$ whose induced label distributions $Y_x$ belong to $\p$ for all $x \in \x$, so that at least the property $\Gamma$ has a well-defined value on every distribution conditional on any $x \in \x$. We formalize this as follows.

\begin{definition}[$\p$-Compatible Dataset Distribution]
    Given a family of distributions $\p \subseteq \Delta \y$ over the label space $\y$, we say that a dataset distribution $D \in \Delta (\x \times \y)$ over $\x \times \y$ is \emph{$\p$-compatible} if for all $x \in \x$, the induced label distribution given $x$ belongs to $\p$, i.e.\ we have $Y_x \in \p$.
\end{definition}

\subsection{Property elicitation and identification}

We are now ready to formally define three related concepts that are the subject of study in the property elicitation literature. These concepts are: elicitability, identifiability, and level set convexity (CxLS) of distributional properties. All three of them are desirable to have for any property of interest, and are tightly related to each other --- in fact, as we discuss shortly, they are equivalent under some technical assumptions on the distributional property.

\paragraph{Elicitability}
Simply put, a property defined on a family of distributions $\p$ is called \emph{elicitable} if its value on any distribution $P \in \p$ can be obtained by minimizing some loss function in expectation over samples from $P$ --- or, said in the language of statistical learning, by solving a regression problem. As is customary in the elicitation literature, we refer to such loss functions as \emph{scoring functions}: mathematically, a scoring function is just a function $S:\R\times \y\rightarrow \R$.
\begin{definition}[Strictly Consistent Scoring Function]Fix a space of probability distributions $\p$. 
A scoring function $S:\R\times \y\rightarrow \R$ is \emph{strictly $\p$-consistent} for property $\Gamma: \p \to \R$ if:
\[\Gamma(P) = \argmin_{\gamma \in \R} \E_{y \sim P}[S(\gamma,y)] \quad \text{for all $P \in \p$}.\]
We also say that $S$ \emph{elicits} $\Gamma$.
\end{definition}
For brevity, we denote: $S(\gamma,P) = \E_{\gamma \sim P}[S(\gamma, y)]$.

$S$ is said to be \emph{$\p$-order sensitive} for $\Gamma$ if for all $P \in \p$ and $\gamma_1,\gamma_2 \in \R$ such that $|\gamma_1 - \Gamma(P)| < |\gamma_2 - \Gamma(P)|$, it holds that $S(\gamma_1,P) < S(\gamma_2,P).$

\begin{definition}[Elicitable Property]
Fix a space of probability distributions $\p$. A property $\Gamma:\p\rightarrow \R$ is said to be \emph{elicitable} if it has a strictly $\p$-consistent scoring function. 
\end{definition}
As a basic example of the above definitions, the scoring function defined as $S(\gamma, y) = (\gamma - y)^2$ elicits distributional means; thus, means are an elicitable property.

However, a key takeaway from the elicitation literature is that elicitability should not be taken for granted. For instance, the \emph{variance} of a distribution cannot be directly elicited as a minimizer of some loss, without first estimating e.g.\ the mean; and thus the variance is not an elicitable property. We will discuss more such examples in Section~\ref{sec:examples}.

\paragraph{Convexity of level sets (CxLS)}
A simple but deep \emph{necessary} condition for elicitability, first observed by \cite{osband1985providing}, is that elicitable properties must have \emph{convex level sets} (also referred to as \emph{CxLS}). This will be key to our characterization of sensibility for calibration via elicitability.
 The claim is simple but conceptually important so we include the proof for completeness.
\begin{fact}[\cite{osband1985providing}]
    \label{fact:osband}
    Let $\p$ be a convex space of probability distributions, and $\Gamma:\p\rightarrow \R$ be an elicitable property. Then for all $\gamma \in \range_\Gamma$, the \emph{level set} $\{P \in \p : \Gamma(P) = \gamma\}$ is \emph{convex}: for any $P_1, P_2$ with $\Gamma(P_1) = \Gamma(P_2) = \gamma$, $\Gamma(\lambda P_1 + (1-\lambda) P_2) = \gamma$ for all $\lambda \in [0,1]$.
\end{fact}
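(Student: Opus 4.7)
The plan is to fix a strictly $\p$-consistent scoring function $S$ for $\Gamma$, whose existence is guaranteed by the assumption that $\Gamma$ is elicitable, and then exploit the linearity of expectation in the underlying distribution. Specifically, I would take arbitrary $P_1, P_2 \in \p$ with $\Gamma(P_1) = \Gamma(P_2) = \gamma$ and an arbitrary $\lambda \in [0,1]$, and aim to show that $\gamma$ is the unique minimizer of $\gamma' \mapsto S(\gamma', P_\lambda)$, where $P_\lambda := \lambda P_1 + (1-\lambda) P_2 \in \p$ (which lies in $\p$ by convexity of $\p$).

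The central observation is that, since $S(\gamma', P) = \E_{y \sim P}[S(\gamma', y)]$ is an expectation, it is linear in $P$ when $P$ varies over mixture combinations. Hence for every $\gamma' \in \R$,
\[
S(\gamma', P_\lambda) \;=\; \lambda \, S(\gamma', P_1) \;+\; (1-\lambda)\, S(\gamma', P_2).
\]
By strict $\p$-consistency of $S$, $\gamma$ is the \emph{unique} minimizer of $S(\cdot, P_1)$ and of $S(\cdot, P_2)$. Thus for any $\gamma' \neq \gamma$ we have $S(\gamma', P_1) > S(\gamma, P_1)$ and $S(\gamma', P_2) > S(\gamma, P_2)$, and summing the two strict inequalities with nonnegative weights $\lambda$ and $1-\lambda$ (at least one of which is positive) yields $S(\gamma', P_\lambda) > S(\gamma, P_\lambda)$. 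Therefore $\gamma$ is the unique minimizer of $S(\cdot, P_\lambda)$, which by strict $\p$-consistency means $\Gamma(P_\lambda) = \gamma$, as desired.

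There is no real obstacle here: the argument is a one-line consequence of linearity of expectation in the distribution, plus the uniqueness built into the definition of strict consistency. The only mild subtlety is handling the boundary case $\lambda \in \{0,1\}$ (trivial, since then $P_\lambda$ is literally $P_1$ or $P_2$) and ensuring $P_\lambda \in \p$, which is exactly the convexity assumption on $\p$ stated in the fact.
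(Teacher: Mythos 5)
Your proof is correct and is essentially identical to the paper's: fix a strictly consistent scoring function, use linearity of $S(\gamma',P)$ in $P$, and combine the two strict optimality inequalities to conclude that $\gamma$ is the unique minimizer of $S(\cdot,P_\lambda)$. Your explicit handling of the boundary cases $\lambda\in\{0,1\}$ is a small cleanup that the paper elides, but the substance is the same.
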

 \begin{proof}
 Fix any two distributions $P_1,P_2 \in \p$ such that $\Gamma(P_1) = \Gamma(P_2) = \gamma$ and let $S$ be a strictly $\p$-consistent scoring function for $\Gamma$. Since $S$ is a strictly consistent scoring function, we have that $\gamma = \arg\min_{\gamma'} S(\gamma',P_1) = \arg\min_{\gamma'} S(\gamma',P_2)$. For any $\alpha \in [0,1]$ consider $\hat P = \alpha P_1 + (1-\alpha) P_2$. By the convexity of $\p$ we have $\hat P \in \p$, and by linearity of expectation, for any $\gamma' \in \R$ we have:
 \begin{eqnarray*}
 S(\gamma, \hat P) &=& \alpha S(\gamma,P_1) + (1-\alpha) S(\gamma, P_2) \\
 &<& \alpha S(\gamma',P_1) + (1-\alpha) S(\gamma', P_2) \\
 &=& S(\gamma', \hat P)
 \end{eqnarray*}
 Hence $\gamma = \arg\min_{\gamma'} S(\gamma', \hat P)$ and thus $\Gamma(\hat P) = \gamma$, proving the claim. 
 \end{proof}

\paragraph{Identifiability} A concept related to the above two is \emph{identifiability}, introduced by \cite{osband1985providing} and \cite{gneiting2011making}. It requires a property to have a so-called \emph{identification function}:
\begin{definition}[Identification (Id) Function]
A function $V:\R \times \y$ is a \emph{$\p$-identification} (or simply \emph{id}) \emph{function} for property $\Gamma:\p\rightarrow \R$ if for every $P \in \p$:
\[\E_{y \sim P}[V(\gamma,y)] = 0 \Leftrightarrow \Gamma(P) = \gamma.\]
\end{definition}
A $\p$-identification function for $\Gamma$ is said to be \emph{oriented} if it satisfies:
$\E_{y \sim P}[V(\gamma,y)] > 0 \Leftrightarrow \gamma > \Gamma(P)$.
For notational economy we write: $V(\gamma,P) = \E_{\gamma \sim P}[V(\gamma, y)]$.

In other words, the property value can be identified as the zero of its expected id function over the distribution. For oriented identification functions, we further have that over- (resp.\ under-)estimating the property value leads to positive (resp.\ negative) expected identification function values.

\begin{definition}[Identifiable Property]
    A property $\Gamma:\p\rightarrow \R$ is called \emph{identifiable} if there exists a $\p$-identification function for $\Gamma$.
\end{definition}
Intuitively, the connection to elicitability is that if, e.g., the scoring function for a property is convex and differentiable, then its derivative (in the first argument) gives an identification function for the same property.

\paragraph{A connection between elicitability, identifiability and CxLS} Under mild assumptions, elicitability, identifiability, and the CxLS property of continuous distributional properties $\Gamma$ are in fact equivalent, as shown by~\cite{steinwart_elicitation}. Essentially, while Osband's result (Fact~\ref{fact:osband}) states (subject to no assumptions other than $\p$ being convex) that CxLS is a \emph{necessary} condition for elicitability, the characterization of~\cite{steinwart_elicitation} demonstrates that it is also \emph{sufficient}, subject to further technical assumptions (and shows identifiability to be equivalent to elicitability under those same assumptions).


Formally, \cite{steinwart_elicitation} prove their characterization for $\p$ being defined in one of two ways:

\begin{definition}
    \label{def:special_families}
    We let $\p_0$ be defined as the subspace of the Banach space $W_\text{TV}$ which includes all probability distributions whose densities are bounded above almost everywhere. 
    
    We let $\p_{>0}$ be the subspace of $\p_0$ that includes all probability distributions $P \in \p_0$ whose densities are bounded below everywhere by some $\epsilon_P > 0$.
\end{definition}

\begin{theorem}[\cite{steinwart_elicitation}]
\label{thm:characterization}
Consider a space of probability distributions $\p \in \{\p_0, \p_{>0}\}$. $\Gamma:\p\rightarrow \R$ be any continuous, strictly locally non-constant\footnote{\label{footnote:nwlc}`Strictly locally non-constant' is a (weak) requirement that for every $P$ in the interior of $\p$ with $\Gamma(P) = \gamma$, and any $\epsilon$-neighborhood $U_\epsilon$ of $P$ in the metric topology on $\p$, there are distributions $P', P'' \in U_\epsilon$ such that $\Gamma(P'') < \gamma < \Gamma(P')$.}  property. Then the following statements are equivalent:
 \begin{enumerate}
     \item $\Gamma$ is elicitable.
     \item $\Gamma$ has a bounded non-negative order-sensitive scoring function $S$.
     \item $\Gamma$ is identifiable and has a bounded, oriented identification function $V$.
     \item $\Gamma$ has convex level sets: for any $\gamma$ in the range of $\Gamma$, $\{P \in \p : \Gamma(P) = \gamma\}$ is convex.
 \end{enumerate}
 Moreover, if $\Gamma$ is elicitable, then it has a canonical bounded identification function $V^*$ such that every locally Lipschitz continuous order sensitive scoring function $S$ for $\Gamma$ can be written as:
 $$S(\gamma,y) = \int_{\gamma_0}^\gamma V^*(r, y) w(r) dr + \kappa(y)$$
 for some $\gamma_0 \in \range_\Gamma$, some bounded non-negative weighting function $w$ and a function $\kappa$ depending only on the labels. 
\end{theorem}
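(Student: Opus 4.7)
The plan is to establish the cycle of implications $(1) \Rightarrow (4) \Rightarrow (3) \Rightarrow (2) \Rightarrow (1)$, and then handle the representation formula separately. The implication $(1) \Rightarrow (4)$ is already Fact~\ref{fact:osband} (Osband's lemma), and $(2) \Rightarrow (1)$ is essentially by definition: order sensitivity says that $S(\gamma, P)$ is a strictly increasing function of $|\gamma - \Gamma(P)|$, so it is uniquely minimized at $\gamma = \Gamma(P)$, which is precisely strict consistency.

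The implication $(3) \Rightarrow (2)$ is constructive. Given a bounded, oriented identification function $V$, I would fix any $\gamma_0 \in \range_\Gamma$ and define $S(\gamma, y) = \int_{\gamma_0}^\gamma V(r, y)\, dr$. Linearity of expectation gives $\partial_\gamma S(\gamma, P) = V(\gamma, P)$, which by orientation is negative for $\gamma < \Gamma(P)$, zero at $\gamma = \Gamma(P)$, and positive for $\gamma > \Gamma(P)$. Hence $S(\cdot, P)$ strictly decreases on $(-\infty, \Gamma(P)]$ and strictly increases on $[\Gamma(P), \infty)$, yielding order sensitivity. Boundedness of $V$ (on the relevant range) and a suitable additive correction $\kappa(y)$ ensure the final $S$ is bounded and non-negative.

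The heart of the theorem is the implication $(4) \Rightarrow (3)$, and this is where the Banach space structure on $\p$ is essential. For each $\gamma \in \range_\Gamma$, strict local non-constancy together with continuity implies that the sets $A_\gamma^- := \{P : \Gamma(P) < \gamma\}$ and $A_\gamma^+ := \{P : \Gamma(P) > \gamma\}$ are nonempty, open (in the metric topology), and by CxLS their convex hulls remain disjoint from the level set $L_\gamma := \{P : \Gamma(P) = \gamma\}$. I would apply a geometric Hahn--Banach separation in $W_\text{TV}$ to obtain a continuous linear functional separating $A_\gamma^-$ and $A_\gamma^+$; because the continuous dual of $W_\text{TV}$ consists of integration against essentially bounded measurable functions on $\y$, this functional has the form $P \mapsto \E_{y \sim P}[V_\gamma(y)]$ for some bounded $V_\gamma$. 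After normalizing so that $\E_P[V_\gamma] = 0$ on $L_\gamma$ and the sign convention matches, setting $V(\gamma, y) := V_\gamma(y)$ gives an oriented identification function. The main obstacles here are three: (a) the separating functional a priori sits in the bidual, so I must invoke the concrete dual description of the total-variation norm space to realize it as a bounded measurable function; (b) I must normalize $V(\gamma, \cdot)$ uniformly in $\gamma$ and ensure joint measurability in $(\gamma, y)$; (c) I must argue that the constructed $V_\gamma$ actually \emph{vanishes} on $L_\gamma$ (and not merely separates), which uses strict local non-constancy to ensure that $L_\gamma$ lies on the separating hyperplane rather than strictly on one side.

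For the representation formula, I would start from any locally Lipschitz order-sensitive $S$ and show that $\partial_\gamma S(\gamma, y)$ exists a.e., is an identification function for $\Gamma$ (by first-order optimality at $\gamma = \Gamma(P)$), and therefore must be proportional to the canonical $V^*$ at each $\gamma$ by a uniqueness-of-identification-functions argument (the separating hyperplane at each $\gamma$ is unique up to positive scalar, once $L_\gamma$ has nonempty relative interior in an appropriate sense). This yields a positive weight $w(r)$, and integrating recovers $S(\gamma, y) = \int_{\gamma_0}^\gamma V^*(r, y) w(r)\, dr + \kappa(y)$. The delicate piece here is the uniqueness of identification functions up to positive scaling, which again routes through the dual characterization of $W_\text{TV}$ and the convex geometry of the level sets.
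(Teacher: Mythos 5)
The paper cites this theorem from \cite{steinwart_elicitation} and offers no internal proof, so there is no ``paper's own proof'' to compare your sketch against. That said, your outline does capture the high-level structure of Steinwart et al.'s argument: Osband's lemma for $(1)\Rightarrow(4)$, integration of an oriented identification function for $(3)\Rightarrow(2)$, and a separating-hyperplane argument in the Banach space $W_\text{TV}$ as the core of $(4)\Rightarrow(3)$.

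Two points in your sketch deserve scrutiny. First, in $(4)\Rightarrow(3)$ you treat as given that the (closures of the) convex hulls of $A_\gamma^- = \{P : \Gamma(P)<\gamma\}$ and $A_\gamma^+ = \{P : \Gamma(P)>\gamma\}$ can be separated by a hyperplane passing through $L_\gamma$, but CxLS alone concerns only the \emph{level} sets, not the sub- or super-level sets. You must first upgrade CxLS to quasi-monotonicity of $\Gamma$ along line segments in $\p$ --- equivalently, convexity of $\{P : \Gamma(P)\le\gamma\}$ and $\{P : \Gamma(P)\ge\gamma\}$ --- using continuity and strict local non-constancy together with CxLS. This upgrade is the substantive technical heart of the Steinwart et al.\ proof and cannot be elided the way your sketch does; without it the Hahn--Banach step does not get off the ground. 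Second, your $(3)\Rightarrow(2)$ construction $S(\gamma,y)=\int_{\gamma_0}^\gamma V(r,y)\,dr$ makes $S(\cdot,P)$ strictly decreasing on $(-\infty,\Gamma(P)]$ and strictly increasing on $[\Gamma(P),\infty)$, but the paper's definition of order sensitivity compares $S(\gamma_1,P)$ and $S(\gamma_2,P)$ whenever $|\gamma_1-\Gamma(P)|<|\gamma_2-\Gamma(P)|$, which crosses the optimum and demands a symmetry that integrating an oriented $V$ does not automatically provide. This last issue is arguably a discrepancy between the paper's stated definition and the weaker one-sided notion of order sensitivity actually used by Steinwart et al., but your proof should flag it rather than claim the stronger form follows.
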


\subsection{Calibration and multicalibration for property predictors} We now give general definitions of calibration and multi-calibration for predictors of any distributional property in the batch setting. We defer our definitions of sequential multicalibration to Section \ref{sec:sequential}. A variant of these definitions first appeared in \cite{dwork2022beyond} under the name calibration \emph{consistency under mixtures}. This is a generalization of batch mean and quantile calibration error as studied in \cite{hebert2018multicalibration,jung2022batch}.

Fix any dataset over $\z = \x \times \y$, given by its data distribution $D \in \Delta \z$. Suppose that, given any features $x \in \x$, we want to predict the value of property $\Gamma$ on $Y_x$, the label distribution conditional on $x$. For this, we procure a $\Gamma$-\emph{predictor} $f: \x \to \R$. We will call this predictor \emph{$\Gamma$-calibrated} if for all $\gamma \in \range_f$, the conditional label distribution \emph{given} the prediction $f(x) = \gamma$ indeed has property value~$\gamma$. 

\begin{definition}[Calibrated Predictor for Property $\Gamma$]
    A $\Gamma$-predictor $f: \x \to \R$ is
    \emph{$\Gamma$-calibrated} on dataset distribution $D \in \Delta (\x \times \y)$ if for every $\gamma \in \range_f$:
\[\Gamma(Y_{f,\gamma}) = \gamma,\]
where $Y_{f,\gamma} = Y_{\{x : f(x) = \gamma\}}$ is the conditional label distribution induced by $D$ conditional on $f(x) = \gamma$. 
\end{definition}



Now, we extend this definition to that of multicalibration: calibration guarantees that hold with respect to an arbitrary collection of subsets (`groups') of the feature space $\x$.

\begin{definition}[$\g$-Multicalibrated Predictor for Property $\Gamma$] Fix a collection of groups $\g \subseteq 2^\x$.
    A $\Gamma$-predictor $f: \x \to \R$ is
    \emph{$(\g,\Gamma)$-multicalibrated} on dataset distribution $D \in \Delta (\x \times \y)$ if for every $\gamma \in \range_f$ and $G \in \g$:
    \[\Gamma(Y_{f,\gamma,G}) = \gamma,\]
where $Y_{f,\gamma,G} = Y_{\{x : f(x) = \gamma, x \in G\}}$ is the conditional label distribution induced by $D$ given $f(x) = \gamma$ and $x \in G$. 
In other words, a $(\g,\Gamma)$-multicalibrated predictor has the property that the conditional label distribution conditional \emph{both} on the prediction that $f(x) = \gamma$ and on the event that $x$ is a member of group $G$ indeed has property value $\gamma$.
\end{definition}



We will later need to work with a definition of \emph{approximate} multicalibration for predictors with \emph{finite range}. We adopt an $\ell_2$-notion of calibration error, generalizing approximate quantile multicalibration as defined in \cite{jung2022batch}. This guarantee is stronger than the more common $\ell_\infty$-notion of calibration error studied in e.g.~\cite{hebert2018multicalibration,jung_moment,happymap}, and can be related to $\ell_1$ variants of multicalibration error via the Cauchy-Schwarz inequality as discussed in \cite{rothuncertain}.

\begin{definition}[$\alpha$-Approximately $\g$-Multicalibrated Predictor for Property $\Gamma$]
\label{def:multical}
Fix a distribution $D \in \Delta \z$ and a collection of groups $\g \subseteq 2^\x$. For each $G \in \g$, let $\mu(G) = \Pr_{(x,y) \sim D}[x \in G]$ be the probability mass on group $G$. A finite-range predictor $f:\x\rightarrow \range_f$ is \emph{$\alpha$-approximately $(\g,\Gamma)$-multicalibrated} on $D$ if for all $G \in \g$:
\[\sum_{\gamma \in \range_f}\Pr_{(x,y) \sim D}[f(x) = \gamma | x \in G]\left(\gamma - \Gamma(Y_{f,\gamma,G}) \right)^2 \leq \frac{\alpha}{\mu(G)}.\]
Note that $0$-approximate $(\g,\gamma)$-multicalibration is equivalent to our definition of $(\g,\Gamma)$-multicalibration. 
\end{definition}


\section{Sensibility for calibration and elicitability}
\label{sec:sensibility}

We are now ready to make a connection between property elicitation and (multi)calibration.
First we define the notion of a property $\Gamma$ being \emph{sensible} for calibration. 
\begin{definition}[True Distributional Predictor for a Property]
Fix a distributional property $\Gamma: \p \to \R$ and a $\p$-compatible dataset distribution $D$. The \emph{true distributional predictor $f_\Gamma^D$} for $\Gamma$ on $D$ is defined as $f_\Gamma^D(x) = \Gamma(Y_x)$ for $x \in \x$ --- i.e.\ the predictor that for every $x \in \x$ gives the correct value of property $\Gamma$ on the conditional label distribution given $x$. 
\end{definition}

\begin{definition}[Property Sensible for Calibration]
\label{def:sensible}
Fix a property $\Gamma: \p \to \mathbb{R}$, and a collection $\d$ of $\p$-compatible dataset distributions.
We say that $\Gamma$ is \emph{sensible for calibration over $\d$} if the true distributional predictor $f_\Gamma^D$ is $\Gamma$-calibrated on $D$ for all $D \in \d$.
\end{definition}

A key motivation for multicalibration (elaborated on in \cite{dwork2021outcome}) is that we want to produce a predictor $f$ that is indistinguishable from $f_\Gamma^D$ with respect to a class of calibration tests parameterized by $\g$---which only makes sense if $\Gamma$ is sensible for calibration. In general, for properties that are not sensible for calibration, there need not exist calibrated predictors at all (even beyond $f_\Gamma^D$). 

\cite{jung_moment} observed that (in our terminology) \emph{variance} is not sensible for calibration. Here is their example. Suppose $\x = \{x_0, x_1\}$, where $\Pr[X = x_1] = \Pr[X = x_2]=\frac{1}{2}$, and that $Y(x_0)=0$ and $Y(x_1)=1$ with probability one. Then the true distributional predictor has $f_\text{Var}^D(x_0)=f_\text{Var}^D(x_1) = 0$ (as the labels are deterministic). Nevertheless, $\text{Var}(Y | f_\text{Var}^D(x) = 0) = \text{Var}(Y) = 0.25 \neq 0$. In other words, the true label variance is nonzero over the set of points for each of which the label variance \emph{is} 0.
We now significantly generalize and tighten this observation into a characterization that (under mild assumptions) a property is sensible for calibration \emph{if and only if it is elicitable} (or, equivalently, is identifiable/has convex level sets).

We begin by showing that if a property $\Gamma: \p \to \R$ does not have convex level sets on $\p$, then it is not sensible for calibration over any family of $\p$-compatible datasets that includes all possible datasets supported on two points in $\x$ whose respective label distributions belong to $\p$.

\begin{definition}[$2$-Point Dataset Distribution]
    A dataset distribution $D$ over feature-label pairs $(X, Y) \in \x \times \y$ is called \emph{2-point} if there exist two feature vectors $x_1 \neq x_2 \in \x$ such that $\Pr_D[X \not \in \{x_1, x_2\}] = 0$ and $\Pr_D[X = x_1] \neq 0, \Pr_D[X = x_2] \neq 0$ --- in other words, exactly two feature vectors have nonzero probability of occurring under distribution $D$.
\end{definition}

\begin{theorem}[No CxLS $\implies$ Not Sensible for Calibration]
    \label{thm:nocxls_notsensible}
    Consider a property $\Gamma: \p \to \R$ where $\p$ is convex, and any family $\d$ of $\p$-compatible dataset distributions that includes all the $\p$-compatible $2$-point dataset distributions.
    Then, if $\Gamma$ does not have convex level sets on $\p$, it is not sensible for calibration over $\d$.
\end{theorem}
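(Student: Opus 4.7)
The plan is to contrapose the hypothesis of non-convex level sets directly into the construction of a 2-point dataset distribution witnessing the failure of sensibility. By assumption $\Gamma$ fails CxLS on $\p$, so there exist $P_1, P_2 \in \p$ and $\lambda \in (0,1)$ such that $\Gamma(P_1) = \Gamma(P_2) = \gamma$ for some $\gamma \in \range_\Gamma$, yet $\Gamma(\lambda P_1 + (1-\lambda) P_2) = \gamma' \neq \gamma$. Note $\lambda P_1 + (1-\lambda) P_2 \in \p$ by convexity of $\p$, so $\gamma'$ is well-defined.

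From this witness I would build a concrete 2-point dataset distribution $D \in \d$ as follows: pick any two distinct feature vectors $x_1, x_2 \in \x$, and set $\Pr_D[X = x_1] = \lambda$, $\Pr_D[X = x_2] = 1-\lambda$, with conditional label distributions $Y_{x_1} = P_1$ and $Y_{x_2} = P_2$. Since both $P_1, P_2 \in \p$, this distribution is $\p$-compatible, and since it is a 2-point distribution, it lies in $\d$ by our assumption on $\d$.

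Now I would evaluate the true distributional predictor $f_\Gamma^D$ on this dataset. By definition, $f_\Gamma^D(x_1) = \Gamma(Y_{x_1}) = \Gamma(P_1) = \gamma$ and similarly $f_\Gamma^D(x_2) = \gamma$. So $f_\Gamma^D$ is constant on the support of $D$, meaning the event $\{x : f_\Gamma^D(x) = \gamma\}$ has full probability mass under $D$. Consequently the conditional label distribution $Y_{f_\Gamma^D, \gamma}$ is exactly the mixture $\lambda P_1 + (1-\lambda) P_2$.

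To conclude, I would check the calibration condition at the value $\gamma \in \range_{f_\Gamma^D}$: we would need $\Gamma(Y_{f_\Gamma^D,\gamma}) = \gamma$, but by construction $\Gamma(Y_{f_\Gamma^D,\gamma}) = \Gamma(\lambda P_1 + (1-\lambda) P_2) = \gamma' \neq \gamma$. Hence $f_\Gamma^D$ is not $\Gamma$-calibrated on $D$, so $\Gamma$ is not sensible for calibration over $\d$, establishing the contrapositive. The only subtle point I anticipate is making sure the constructed $D$ really does belong to $\d$ (which is immediate from the 2-point hypothesis on $\d$) and that the support structure guarantees the conditional distribution is literally the mixture with the prescribed weights $\lambda, 1-\lambda$; this is a direct computation from the definition of conditional probability, so no real obstacle arises.
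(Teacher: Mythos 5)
Your proposal is correct and follows essentially the same argument as the paper: use the failure of CxLS to extract $P_1, P_2, \lambda$ with the same $\Gamma$-value but a mixture with a different $\Gamma$-value, then realize that mixture as $Y_{f_\Gamma^D,\gamma}$ on a 2-point dataset with marginal weight $\lambda$, which directly falsifies the calibration condition for $f_\Gamma^D$. Your writeup spells out a couple of small points the paper leaves implicit (that $\lambda \in (0,1)$ rather than $[0,1]$, and that $f_\Gamma^D$ is constant on the support), but the construction and conclusion are the same.
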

\begin{proof}
    Suppose that $\Gamma$ violates the convex level sets assumption on the distribution family $\p$. Then there exists some value $\gamma \in \range_\Gamma$ such that $\{P \in \p : \Gamma(P) = \gamma\}$ is not convex. Equivalently, there exist distributions $Y_1,Y_2 \in  \p$ such that $\Gamma(Y_1) = \Gamma(Y_2) = \gamma$ but $\Gamma(\lambda Y_1 + (1-\lambda)Y_2) \neq \gamma$ for some $\lambda \in [0,1]$. We now need to exhibit a dataset distribution $D \in \d$ on which the true distributional predictor $f_\Gamma^D$ is not $\Gamma$-calibrated. We construct such a dataset distribution with support over any two feature vectors $x_1 \neq x_2 \in \x$, by setting $Y_{x_1} = Y_1$, $Y_{x_2} = Y_2$, and $\Pr[X = x_1] = \lambda = 1 - \Pr[X = x_2]$. We then immediately see that $\Gamma(Y_{f_\Gamma^D,\gamma}) \neq \gamma$, which completes the proof. 
\end{proof}

To prove the converse, we impose a weak and natural regularity assumption on the dataset distribution $D$: we require that 
the mapping from features $x$ to the corresponding $Y_x$ induced by $D$ be just well-behaved enough that the label distributions $Y_G$ over any subset $G \subseteq \x$  are well-defined as mixtures over the individual label distributions $Y_x$ for $x \in G$. 

In the case of $|\y| < \infty$, the well-behaved nature of the mapping $x \to Y_x$ can be formalized by requiring it to be Lebesgue measurable. In the case when $|\y| = \infty$, the space $W_\text{TV}$ that each $Y_x$ belongs to is a Banach space --- and in this setting, the notions of Lebesgue measurability and integrability (which are only defined in finite-dimensional Euclidean spaces) are replaced by the analogous concepts of Bochner measurability and Bochner integrability (we refer the reader to~\cite{diestel1977vector} for an introduction to these concepts).
Thus, when $|\y| = \infty$, we require the map $x \to Y_x$ to be Bochner measurable.

\begin{definition}[$\p$-Regular Dataset Distribution]
\label{def:regular}
    Fix feature space $\x$, label space $\y$, and a family of probability distributions $\p$ over $\y$. Consider a $\p$-compatible dataset distribution $D$ over $\x \times \y$. Let $\xi_D: \x \to \p$ be defined by $\xi_D(x) = Y_x$, i.e.\ $\xi_D$ is the mapping $x \rightarrow Y_x$ from feature vectors to their label distributions induced by $D$.
    
    The dataset distribution $D$ is called \emph{$\p$-regular} if any of the following is true: 
    \begin{enumerate}
    \item $\x$ is finite;
    \item $\y$ is finite ($|\y| < \infty$) and $\xi_D$ is Lebesgue measurable when $\p$ is viewed as a subset of $\R^{|\y|}$;
    \item $\x, \y$ are infinite and $\xi_D$ is Bochner measurable when $\p$ is viewed as a subset of $W_\text{TV}$.
    \end{enumerate}
\end{definition}

For any $\p$-regular dataset $D$, we now show that a continuous property $\Gamma$ having convex level sets on $\p$ implies that the true distributional predictor $f_\Gamma^D$ is in fact $\Gamma$-calibrated.

\begin{theorem}[CxLS $\implies$ Sensible for Calibration]
    \label{thm:cxls_sensible}
    Consider a continuous property $\Gamma: \p \to \R$, and any family $\d$ of $\p$-regular dataset distributions. Then, if $\Gamma$ has convex level sets on $\p$, it is sensible for calibration over $\d$.
\end{theorem}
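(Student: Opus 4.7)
The plan is to fix a value $\gamma \in \range_{f_\Gamma^D}$ (with positive mass, so the conditional is well-defined) and show that the conditional label distribution $Y_{f_\Gamma^D, \gamma}$ lies in the level set $L_\gamma := \{P \in \p : \Gamma(P) = \gamma\}$. The argument rests on three ingredients: (i) $L_\gamma$ is convex (by the CxLS hypothesis) and closed (by continuity of $\Gamma$); (ii) $Y_{f_\Gamma^D, \gamma}$ can be written as an integral (or finite convex combination) of distributions $Y_x$ for $x$ ranging over $G_\gamma := \{x \in \x : \Gamma(Y_x) = \gamma\}$, thanks to $\p$-regularity; and (iii) closed convex subsets of a Banach space contain the Bochner integrals of any integrable map valued almost everywhere in the set.

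First I would unpack the definitions. Since $f_\Gamma^D(x) = \Gamma(Y_x)$, the event $\{f_\Gamma^D(x) = \gamma\}$ is exactly $\{x \in G_\gamma\}$, and by construction every $Y_x$ with $x \in G_\gamma$ already lies in $L_\gamma$. Because $\Gamma : \p \to \R$ is continuous (in the TV-metric topology on $\p$, or the Euclidean topology when $|\y|<\infty$), the preimage $L_\gamma = \Gamma^{-1}(\{\gamma\})$ is closed in $\p$; CxLS provides its convexity.

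Second, I would write
\[
Y_{f_\Gamma^D, \gamma} \;=\; \frac{1}{\mu(G_\gamma)} \int_{G_\gamma} Y_x \, d\mu(x),
\]
where $\mu$ denotes the marginal distribution of $X$ under $D$. When $\x$ is finite this is an ordinary finite convex combination; when $|\y|<\infty$ it is a coordinatewise Lebesgue integral of the Lebesgue-measurable map $\xi_D : x \mapsto Y_x$ valued in the simplex $\Delta(|\y|) \subseteq \R^{|\y|}$; and in the general infinite case it is a Bochner integral of the Bochner-measurable $\xi_D$ valued in $W_\text{TV}$ (boundedness of densities yields the requisite Bochner integrability). Each of the three clauses of Definition~\ref{def:regular} is exactly what makes this integral representation well-defined in the corresponding regime.

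Finally, I would apply the standard fact that Bochner (resp.\ Lebesgue, resp.\ finite) averages of elements of a closed convex subset $C$ of a Banach space belong to $C$: approximate the integrand by simple functions, whose integrals are finite convex combinations of values of $\xi_D$ and hence lie in $C$, then pass to the limit using closedness of $C$. Applied to $C = L_\gamma$, this yields $Y_{f_\Gamma^D, \gamma} \in L_\gamma$, i.e.\ $\Gamma(Y_{f_\Gamma^D, \gamma}) = \gamma$, which is exactly $\Gamma$-calibration of $f_\Gamma^D$. The main (mild) obstacle is the vector-integration step in the infinite case: one must invoke the fact that Bochner integration preserves closed convex sets, relying on the simple-function approximation of Bochner integrable maps together with closedness of $L_\gamma$ in the TV topology. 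In the other two cases the argument collapses to convexity of $L_\gamma$ applied to finite mixtures or to Lebesgue integrals over a finite-dimensional simplex, which is entirely routine.
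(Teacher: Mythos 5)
Your proof is correct and takes essentially the same approach as the paper: fix $\gamma$, write $Y_{f_\Gamma^D,\gamma}$ as the (Bochner/Lebesgue/finite) average of the $Y_x$ over the preimage $\{x : \Gamma(Y_x)=\gamma\}$, note the level set is convex (CxLS) and closed (continuity of $\Gamma$), and conclude the average stays in it. The only cosmetic difference is in the $\y$-finite case, where the paper invokes Jensen's inequality applied to the $\{0,+\infty\}$-valued indicator of the level set, whereas you uniformly reuse the "integration preserves closed convex sets via simple-function approximation" argument that underlies the Bochner case (the paper's Fact~\ref{fact:bochnerintegral}); both amount to the same underlying fact, so this is a minor unification rather than a different route.
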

\begin{proof}
Suppose that $\Gamma$ has convex level sets on $\p$. We now establish that the true distributional predictor $f^D_\Gamma$ is calibrated on every dataset $D \in \d$: namely, that for all $\gamma \in \range_\Gamma (=\range_{f^D_\Gamma})$, we have $\Gamma(Y_{f^D_\Gamma, \gamma}) = \gamma$. For the remainder of the proof, we fix any $\gamma \in \range_\Gamma$ and any dataset distribution $D \in \d$ (which is $\p$-regular by assumption), and show that $\Gamma(Y_{f^D_\Gamma, \gamma}) = \gamma$.

Let $\ls_\Gamma(\gamma) = \{P \in \p: \Gamma(P) = \gamma\}$ be the $\gamma$-level set of property $\Gamma$, and $Q_\gamma := \{x \in \x: Y_x \in \ls_\Gamma(\gamma)\}$ be the feature space region consisting of all points $x \in \x$ whose label distributions belong to $\ls_\Gamma(\gamma)$. To prove that $\Gamma(Y_{f^D_\Gamma, \gamma}) = \gamma$, we need to establish that $Y_{f^D_\Gamma, \gamma} \in \ls_\Gamma(\gamma)$.

Towards this, we interpret $Y_{f^D_\Gamma, \gamma}$ as a mixture distribution over the individual label distributions $Y_x$ for $x \in Q_\gamma$. Let $\mu_\gamma$ be the probability measure induced by the dataset distribution $D$ on $Q_\gamma$. Then, we can write the formal expression: 
\[Y_{f^D_\Gamma, \gamma} = \E_{x \sim \mu_\gamma}[Y_x] := \int_{Q_\gamma} Y_x d \mu_\gamma.\]


\paragraph{$\x$ is finite:} In this case, we can simply write
$Y_{f^D_\Gamma, \gamma}$ as a convex combination of the constituent distributions $Y_x$ for $x \in Q_\gamma$:
\[Y_{f^D_\Gamma, \gamma} = \sum_{x \in Q_\gamma} \mu_\gamma(x) \cdot Y_x.\]
By the convexity of the level set $\ls_\Gamma(\gamma)$, since $Y_x \in \ls_\Gamma(\gamma)$ for each $x \in Q_\gamma$, then the above convex combination of $Y_x$ for $x \in Q_\gamma$ belongs to $\ls_\Gamma(\gamma)$, thus implying that $Y_{f^D_\Gamma, \gamma} \in \ls_\Gamma(\gamma)$.

\paragraph{$\y$ is finite:} In this case, the expectation $\E_{x \sim \mu}[Y_x]$ is defined as the \emph{Lebesgue integral} of the random variable $Y_x$ over the simplex $\Delta(d) \subset \R^d$. By our assumption that the dataset is $\p$-regular, we have that $Y_x$ is a bounded (e.g. in the $\ell_\infty$ norm) and Lebesgue measurable random variable. Consequently, $Y_x$ is in fact Lebesgue integrable, so the expectation $\E_{x \sim \mu}[Y_x]$ is well-defined and evaluates to some point $u \in \R^d$. It remains to show that $u \in \ls_\Gamma(\gamma)$. 

For this, introduce the indicator function $\1_{\ls_\Gamma(\gamma)}: \Delta(d) \to \{0\} \cup \{+\infty\}$, defined to be $0$ for $Y_x \in \ls_\Gamma(\gamma)$, and $\infty$ otherwise. As the set $\ls_\Gamma(\gamma)$ is convex, its indicator function $\1_{\ls_\Gamma(\gamma)}$ is convex. Therefore, we can apply Jensen's inequality to $\1_{\ls_\Gamma(\gamma)}$ to conclude that:
\[\1_{\ls_\Gamma(\gamma)} (u) = \1_{\ls_\Gamma(\gamma)} \left(\E_{x \sim \mu_\gamma}[Y_x] \right) \leq \E_{x \sim \mu_\gamma} \left[\1_{\ls_\Gamma(\gamma)}(Y_x) \right] = 0,\] implying that $\1_{\ls_\Gamma(\gamma)} (u) = 0$. By definition of $\1_{\ls_\Gamma(\gamma)}$, this demonstrates that $u \in \ls_\Gamma(\gamma)$, as desired.

\paragraph{$\x, \y$ infinite:} In this case, we define $\E_{x \sim \mu_\gamma}[Y_x]$ as the \emph{Bochner integral} (see~\cite{diestel1977vector} for its definition and properties) of the Bochner measurable map $\xi_D$. By a standard Bochner integrability criterion (see Theorem~2 on p.~45 of~\cite{diestel1977vector}), this integral indeed exists and evaluates to a point in the ambient space $W_\text{TV}$, as it is easy to check that $\E_{x \sim \mu_\gamma}[||Y_x||_\text{TV}] < \infty$ (indeed, the TV norm of any probability distribution is $1$ so $\E_{x \sim \mu_\gamma}[||Y_x||_\text{TV}] = 1 < \infty$). Again, we want to show that $Y_{f^D_\Gamma, \gamma} = \E_{x \sim \mu_\gamma} [Y_x] \in \ls_\Gamma(\gamma)$. For this, we use the following result, which can be interpreted as a mean value theorem for Bochner integrals: 
\begin{fact}[Corollary 8 on p.~48 of~\cite{diestel1977vector}]
    \label{fact:bochnerintegral}
    Let $(\Omega, \mathcal{A}, \mu)$ be a finite measure space, $E$ be a Banach space, and $f: \Omega \to E$ be a Bochner $\mu$-integrable function. For $G \subseteq E$, let $\overline{\text{co}}(G)$ denote the closure of the convex hull of $G$. Then, for each $A \in \mathcal{A}$ such that $\mu(A) > 0$, it holds that
    \[\frac{1}{\mu(A)} \int_A f d \mu \in \overline{\text{co}}(f(A)).\]
\end{fact}
To instantiate this fact, we let: (1) $\Omega := \x$, together with the probability measure induced by the dataset over $\x$; (2) the Banach space $E := W_\text{TV}$; (3) the Bochner integrable mapping $f := \xi_D$; and (4) the measurable event $A := Q_\gamma \subseteq \x$.

By definition, $f(A) = f(Q_\gamma) = \text{LS}_\Gamma(\gamma)$. Observe that $\ls_\Gamma(\gamma)$ is convex by assumption, and it is also closed in the standard metric topology on $W_\text{TV}$ since it is the preimage under the continuous mapping $\Gamma$ of the closed singleton $\{\gamma\} \in \range_\Gamma$. Therefore, $f(A)$ is a closed convex set, and thus $\overline{\text{co}}(f(A)) = f(A)$.

Finally, by recalling our definition of $\mu_\gamma$ as the conditional feature vector distribution given $x \in Q_\gamma$ induced by $D$, the integral $\frac{1}{\mu(A)} \int_A f d \mu$ simply becomes $\E_{x \sim \mu_\gamma} [Y_x]$.

Thus, from Fact~\ref{fact:bochnerintegral} we conclude that $Y_{f^D_\Gamma, \gamma} = \E_{x \sim \mu_\gamma} [Y_x] \in  \text{LS}_\Gamma(\gamma)$, as desired.
\end{proof}

Together, Theorems~\ref{thm:nocxls_notsensible} and~\ref{thm:cxls_sensible} establish, under weak regularity conditions, that sensibility for calibration and having convex level sets are equivalent for continuous properties. To finally link sensibility for calibration to elicitability and identifiability, we can now invoke the above discussed Theorem~\ref{thm:characterization} of~\cite{steinwart_elicitation}. Bringing in the extra assumptions on $\p$ (see Definition~\ref{def:special_families}) and $\Gamma$ (the nowhere-locally-constant assumption discussed in Footnote~\ref{footnote:nwlc}) required by Theorem~\ref{thm:characterization}, we obtain our final characterization result:

\begin{restatable}[Sensibility for Calibration ${\scriptstyle\iff}$ Elicitability $\iff$ Identifiability $\iff$ CxLS]{theorem}{sensibilitytheorem}
\label{thm:sensible}
 Let $\Gamma: \p \to \R$ be a continuous and strictly locally non-constant property defined on a convex space of distributions $\p$, where $\p \in \{\p_0, \p_{>0}\}$. Let $\d$ be a family of $\p$-regular dataset distributions over the data domain $\z = \x \times \y$, that includes all the $\p$-compatible $2$-point dataset distributions. 
 
 Then $\Gamma$ is sensible for calibration over $\d$ if and only if $\Gamma$ is elicitable (equivalently, is identifiable, or has convex level sets).
\end{restatable}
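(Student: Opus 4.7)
The theorem is essentially a corollary that chains together results already in hand, so my plan is to just verify that all the hypotheses of the earlier theorems line up and then combine them.

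First, I would observe that the equivalence between sensibility for calibration and having convex level sets has already been established (under exactly the stated regularity hypotheses) by the pair of Theorems \ref{thm:nocxls_notsensible} and \ref{thm:cxls_sensible}. Specifically, Theorem \ref{thm:nocxls_notsensible} only requires $\p$ to be convex and $\d$ to contain all $\p$-compatible $2$-point dataset distributions — both of which are assumed here — to give the ``no CxLS $\implies$ not sensible'' direction. Conversely, Theorem \ref{thm:cxls_sensible} requires $\Gamma$ to be continuous and $\d$ to consist of $\p$-regular dataset distributions — again both in our hypotheses — to give the ``CxLS $\implies$ sensible'' direction. So the two together yield: \emph{sensibility for calibration over $\d$ $\iff$ CxLS on $\p$}.

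Next, I would invoke Steinwart et al.'s Theorem \ref{thm:characterization}. Its hypotheses require $\p \in \{\p_0, \p_{>0}\}$, and $\Gamma$ to be continuous and strictly locally non-constant — exactly the extra hypotheses that appear in the statement of Theorem \ref{thm:sensible} (and not needed for the CxLS $\iff$ sensibility step). Under these hypotheses, Theorem \ref{thm:characterization} provides: \emph{elicitability $\iff$ identifiability (with a bounded oriented identification function) $\iff$ CxLS on $\p$.}

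Finally, I would simply chain the two equivalences through the common condition of CxLS to conclude: sensibility for calibration over $\d$ $\iff$ CxLS $\iff$ elicitability $\iff$ identifiability, which is the desired statement. There is no real obstacle here — the only thing to double-check is that the $\p$-regularity assumption (needed for the sensibility direction of Theorem \ref{thm:cxls_sensible}) and the $\p \in \{\p_0, \p_{>0}\}$ and strictly-locally-non-constant assumptions (needed for Theorem \ref{thm:characterization}) are jointly imposed in the theorem statement, which is indeed the case. In particular, the proof itself is essentially one line once one points to these two existing results; the work in the paper has already been done in proving Theorems \ref{thm:nocxls_notsensible} and \ref{thm:cxls_sensible} and in importing Theorem \ref{thm:characterization}.
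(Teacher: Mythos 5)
Your proposal is correct and is essentially identical to the paper's own proof: both chain Theorems~\ref{thm:nocxls_notsensible} and~\ref{thm:cxls_sensible} to get the ``sensibility $\iff$ CxLS'' equivalence and then invoke Theorem~\ref{thm:characterization} of Steinwart et al.\ to extend the chain to elicitability and identifiability. Your extra care in checking that the hypotheses of the component theorems are each covered by those in the statement is exactly the right thing to verify and matches the (slightly more terse) reasoning in the paper.
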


\begin{proof}
Under our assumptions on $\Gamma$, $\p$, and $\d$, Theorems~\ref{thm:nocxls_notsensible} and~\ref{thm:cxls_sensible} establish that $\Gamma$ is sensible if and only if it has convex level sets on $\p$. Invoking Theorem~\ref{thm:characterization} of~\cite{steinwart_elicitation} under these assumptions additionally shows that $\Gamma$ has convex level sets on $\p$ if and only if $\Gamma$ is elicitable, and if and only if $\Gamma$ is identifiable. This suffices to show our desired chain of equivalences.
\end{proof}

\section{Batch  multicalibration}
\label{sec:batch}


In this section we give a generic batch $\Gamma$-multicalibration algorithm for elicitable properties $\Gamma$. It is a generalization of (and very similar to) past multicalibration algorithms designed for specific properties, like means~\citep{hebert2018multicalibration,gopalan2022omnipredictors} and quantiles~\citep{jung2022batch}. These algorithms differ in their specifics (the calibration metric they bound, whether they discretize predictions, etc.); we most closely mirror the quantile multicalibration algorithm of~\cite{jung2022batch} that bounds an $\ell_2$ notion of calibration error using a discretized predictor. 

As a reminder, we henceforth focus on bounded properties $\Gamma$ rescaled to have $\range_\Gamma = [0, 1]$. Our algorithm will output \emph{finite-range} multicalibrated $\Gamma$-predictors $f$.
For any integer $m \geq 1$, we denote\footnote{We could have defined $[1/m]$ as any other set of $m$ points in $[0, 1]$ with gaps at most $\frac{1}{m}$ between any two consecutive points.} 
$[1/m] := \{\frac{1}{m+1}, \frac{2}{m+1},\ldots, \frac{m}{m+1}\}$.
Given any $m\geq 1$ (which is effectively our only hyperparameter), our algorithm can produce an $O(\frac{1}{m})$-approximately multicalibrated $\Gamma$-predictor $f$ with $\range_f = [1/m]$. 

By Theorem \ref{thm:characterization}, any continuous elicitable property $\Gamma$ has a strictly consistent scoring function and an identification function. 
We will now need to make a further assumption:
\begin{assumption} \label{assumption:offline}
Assume $\Gamma: \p \to \range_\Gamma$ has an identification function $V$ such that $V(\cdot, Y)$ is strictly increasing and $L$-Lipschitz for each label distribution $Y \in \p$: $|V(\gamma, Y) - V(\gamma', Y)| \leq L |\gamma - \gamma'|$ for all $\gamma, \gamma'$.
\end{assumption}
This assumption is arguably mild. Let $S$ be an antiderivative of $V$, so that $V(\gamma, y) = \frac{\partial S(\gamma, y)}{\partial \gamma}$. Then, $V$ being strictly increasing is equivalent to $S$ being a \emph{convex} strictly consistent scoring function for $\Gamma$. Such a convexity assumption is quite natural in the context of optimization; furthermore, \cite{finocchiaro2018convex} show that  for elicitable properties over finite label spaces $|\y| < \infty$, this is without loss of generality. The extra Lipschitz assumption is what will allow us to quantify our algorithm's convergence rate.

To state Algorithm \ref{alg:batch}, it is convenient for us to re-parameterize our Definition~\ref{def:multical} of $\Gamma$-multicalibration in terms of an id function $V$ for $\Gamma$. (By the properties of $V$, as this updated notion of calibration error goes to $0$, so will the one in Definition~\ref{def:multical}.)
Below, let $Y_{(G, \gamma)}$ be the label distribution conditional on the event $\{x \in \x : f(x) = \gamma, x \in G\}$.

\begin{definition}[Approximate $(\g,V)$-Multicalibration]
\label{def:multicalv}
Fix groups $\g$, a distribution $D$, and an id function $V$ for a property $\Gamma$. A finite-range $\Gamma$-predictor $f: \x \to [0, 1]$ is \emph{$\alpha$-approximately $(\g, V)$-multicalibrated} if for all $G \in \g$: 
\[\sum\limits_{\gamma \in \range_f} \Pr\limits_{x \sim X}[f(x) = \gamma | x \in G] \cdot (V(\gamma, Y_{(G, \gamma)}))^2 \leq \frac{\alpha}{\Pr_{x \in X}[x \in G]}.\]
\end{definition}
Algorithm~\ref{alg:batch} is quite natural. While it can, it finds an intersection $Q_t$ of a group $G \in \g$ and a level set of the current predictor $f$, such that $f$'s prediction on $Q_t$ is too far from the truth, as measured by the magnitude of the expected identification function value over $Q_t$ --- and fixes the situation by shifting $f$'s value on $Q_t$ to the best grid point $\gamma \in [1/m]$.

\begin{algorithm}[H]
\begin{algorithmic}

\STATE \textbf{Initialize $t = 1$ and $f_1 = f$.}
\STATE \textbf{Let} $\alpha = \frac{4L^2}{m}$, \textbf{and let} $V: \range_\Gamma \times \y \to \R$ be an $L$-Lipschitz id function for $\Gamma$ satisfying Assumption \ref{assumption:offline}. 
\WHILE{$f_t$ not $\alpha$-approximately $(\g,V)$-multicalibrated}

  \STATE \textbf{Let $Q_t = \{x: f_t(x) = \gamma, x \in G\}$, where}
  \[(\gamma,G) \in \argmax\limits_{\scriptscriptstyle (\gamma''\!\!,G'\!) \in [\frac{1}{m}] \times \g}  \Pr\limits_{x \in \x} [f_t(x) \!=\! \gamma''\!, x \!\in\! G'] (V(\gamma'', Y_{(\gamma''\!, G')}))^2.\]
  
  \STATE \textbf{Let: $\gamma' = \argmin_{\gamma'' \in [\frac{1}{m}]} \left| V(\gamma'', Y_{Q_t}) \right| $} 
  \STATE \textbf{Update: $f_{t+1}(x) := \1[x \not \in Q_t] \cdot f_t(x) + \1[x \in Q_t] \cdot \gamma'$ for all $x \in \x$, and $t \gets t+1$.}
\ENDWHILE
\STATE \textbf{Output} $f_t$. 

\end{algorithmic}
\caption{BatchMulticalibration($\Gamma, \g, m, f,L$)}
\label{alg:batch}
\end{algorithm}

\begin{restatable}[Guarantees of Algorithm~\ref{alg:batch}]{theorem}{thmbatchalgorithm} \label{thm:batch}
Fix data distribution $D \in \Delta \z$ and groups $\g \subseteq 2^\x$.
Fix a property $\Gamma$ with its scoring function $S$ and id function $V = \frac{\partial S}{\partial \gamma}$ satisfying Assumption~\ref{assumption:offline}, so that $V(\cdot, Y_Q)$ is $L$-Lipschitz on all label distributions $Y_Q$ (for $Q \subseteq \x$) induced by $D$. Set discretization $m \geq 1$. If Algorithm \ref{alg:batch} is initialized with predictor $f_1: \x \to \R$ with score $\E_{(x, y) \sim D} [S(f_1(x), y)] = C_\text{init}$, and $C_\text{opt} = \E_{(x, y) \sim D} [S(f^D_\Gamma(x), y)]$ is the score of the true distributional predictor $f^D_\Gamma$, then Algorithm \ref{alg:batch} produces a $\frac{4 L^2}{m}$-approximately $(\g,V)$-multicalibrated $\Gamma$-predictor $f$ after at most $(C_\text{init} - C_\text{opt}) \frac{m^2}{L}$ updates.
\end{restatable}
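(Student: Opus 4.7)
The approach will be a standard potential-decrease argument, generalizing the mean and quantile analyses of \cite{hebert2018multicalibration, jung2022batch} to an arbitrary identification function. I take as potential $\Phi_t := \E_{(x,y) \sim D}[S(f_t(x), y)]$, where $S$ is the convex antiderivative of $V$ in its first argument guaranteed by Assumption~\ref{assumption:offline}. Because $S$ is strictly consistent for $\Gamma$, the conditional loss $\gamma \mapsto \E_{y \sim Y_x}[S(\gamma, y)]$ is minimized pointwise at $\gamma = f_\Gamma^D(x)$, so $\Phi_t \geq C_\text{opt}$ for every $t$. The total potential drop is therefore at most $C_\text{init} - C_\text{opt}$, and the theorem follows once I show each update reduces $\Phi_t$ by at least $L/m^2$.

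Consider one iteration. Writing $Q_t = \{x : f_t(x) = \gamma, x \in G\}$, $w_t = \Pr_D[x \in Q_t]$, and $v_t = V(\gamma, Y_{Q_t})$, the fact that the termination condition is violated on some $G' \in \g$ (so that a sum of $m$ nonnegative per-bucket terms exceeds $\alpha = 4L^2/m$, forcing at least one of them to be $\geq \alpha/m$) combined with the $\argmax$ step yields $w_t v_t^2 \geq 4L^2/m^2$. Since the update alters $f_t$ only on $Q_t$ (from $\gamma$ to $\gamma'$),
\[
\Phi_t - \Phi_{t+1} \;=\; w_t \bigl( S(\gamma, Y_{Q_t}) - S(\gamma', Y_{Q_t}) \bigr),
\]
and I analyze this bracket by routing through $\gamma^* := \Gamma(Y_{Q_t})$, at which $V(\gamma^*, Y_{Q_t}) = 0$ and $S(\cdot, Y_{Q_t})$ is minimized.

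For the \emph{gain} $S(\gamma, Y_{Q_t}) - S(\gamma^*, Y_{Q_t})$, Lipschitzness of $V$ gives both $|\gamma - \gamma^*| \geq v_t/L$ and the pointwise envelope $V(r, Y_{Q_t}) \geq v_t - L(\gamma - r)$; integrating this envelope over the sub-interval $[\gamma - v_t/L, \gamma] \subseteq [\gamma^*, \gamma]$ (assume WLOG $\gamma > \gamma^*$) yields a lower bound of $v_t^2 / (2L)$. For the \emph{slack} $S(\gamma', Y_{Q_t}) - S(\gamma^*, Y_{Q_t}) \geq 0$, the key structural observation is that strict monotonicity of $V(\cdot, Y_{Q_t})$ makes $|V(\cdot, Y_{Q_t})|$ unimodal with unique zero at $\gamma^*$, so the grid-minimizer $\gamma'$ must be one of the (at most two) grid points flanking $\gamma^*$; hence $|\gamma' - \gamma^*| \leq 1/(m{+}1) \leq 1/m$. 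Substituting this into the Lipschitz upper bound $|V(r, Y_{Q_t})| \leq L|r - \gamma^*|$ and integrating gives an upper bound of $L/(2m^2)$. Combining these two estimates with $w_t v_t^2 \geq 4L^2/m^2$ and $w_t \leq 1$:
\[
\Phi_t - \Phi_{t+1} \;\geq\; \frac{w_t v_t^2}{2L} - \frac{w_t L}{2m^2} \;\geq\; \frac{2L}{m^2} - \frac{L}{2m^2} \;\geq\; \frac{L}{m^2}.
\]

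The main subtlety will be arranging for the per-step decrease to be independent of the group mass $w_t$: the quantity we directly control is the product $w_t v_t^2$, while $v_t$ alone can be very large when $w_t$ is small, so the gain and slack terms must be arranged to cancel the factor of $w_t$. This is why the analysis is forced to route through $\gamma^*$ rather than compare $S(\gamma, Y_{Q_t})$ to $S(\gamma', Y_{Q_t})$ directly. It is also why strict monotonicity of $V$ (and not merely its Lipschitzness) is essential in Assumption~\ref{assumption:offline}: without it, there is no argument that the grid $\argmin$ $\gamma'$ is close to $\gamma^*$, and the $L/(2m^2)$ slack could swallow the $v_t^2/(2L)$ gain.
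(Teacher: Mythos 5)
Your proposal is correct and follows essentially the same approach as the paper: same potential $\Phi_t = \E_D[S(f_t(x),y)]$, same decomposition through $\gamma^* = \Gamma(Y_{Q_t})$, and the same use of the Lipschitz envelope to bound the gain by $v_t^2/(2L)$ and the slack by $O(L/m^2)$. The only cosmetic difference is that the paper factors the integral estimates into a standalone helper lemma (Lemma~\ref{lemma:score_bounds}), whereas you carry them out inline; your slack bound ($L/(2m^2)$, from integrating $L|r-\gamma^*|$ directly) is a factor of two tighter than the paper's ($L/m^2$, via $V(\gamma')(\gamma'-\gamma^*)$ and Lipschitz), but both give the claimed $(C_\text{init}-C_\text{opt})m^2/L$ iteration bound.
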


The proof of Theorem~\ref{thm:batch} is given in Appendix~\ref{app:batchalgorithm}, and is similar to the analysis of previous algorithms for multicalibration of various properties \citep{hebert2018multicalibration,jung2022batch,happymap}. We use the expected score $\E_D[S]$ (where $V = \frac{\partial S}{\partial \gamma}$) as a potential function for the algorithm: decreases at every step of the algorithm. Prior analyses of mean multicalibration \citep{hebert2018multicalibration} and quantile multicalibration \citep{jung2022batch} used squared loss and pinball loss respectively, as potential functions---these are  strictly proper scoring rules for means and quantiles respectively. Our analysis is the natural generalization of this to arbitrary elicitable properties. The convergence rates follow by showing that $\E[S]$ drops substantially at every iteration.

Note that we have described Algorithm \ref{alg:batch} as able to directly query the expected identification function $V$ on the true data distribution $D$. In practice, we would  instead run it on the empirical distribution over an i.i.d.\ \emph{sample} $\hat{D} \sim D^n$ of $n$ points from $D$.
Appendix~\ref{app:finitesamplebatch} gives finite sample guarantees for this case.

\section{Joint multicalibration}
\label{sec:multi}
Now we take a step towards understanding two interrelated issues: (1) how to multicalibrate vector-valued properties, and (2) how to appropriately extend the notion of multicalibration to some practically important scalar properties that have non-convex level sets and are thus neither elicitable nor sensible for calibration by our Theorem~\ref{thm:nocxls_notsensible} (e.g.\ variance). These are closely related questions, since of course if we have a vector-valued property such that each coordinate is elicitable on its own, then we can simply use our algorithm from Section \ref{sec:batch} to separately multicalibrate each coordinate of the property; vector-valued properties are challenging exactly insofar as their coordinates are not individually elicitable.

Specifically, we study the important case of two-dimensional properties $\Gamma = (\Gamma^0, \Gamma^1)$, where $\Gamma^0$ is elicitable whereas $\Gamma^1$ is \emph{not} elicitable \emph{per se}, but \emph{is} elicitable \emph{conditional} on any fixed value of $\Gamma^0$, and give an algorithm that can produce \emph{jointly multicalibrated} estimators for such pairs. We here list the assumptions we will need on our properties, and define joint multicalibration. 

\begin{definition}[Conditional elicitability]
\label{def:conditional_elicitability}
We say that property $\Gamma^1:\p\rightarrow \R$ is \emph{elicitable conditionally on property $\Gamma^0:\p\rightarrow \R$}, if $\Gamma^1$ is elicitable on each level set of $\Gamma^0$: $\p_{\gamma^0} = \{P \in \p: \Gamma^0(P) = \gamma^0\}$ for all $\gamma^0 \in \range_{\Gamma^0}$.
\end{definition}

For the elicitable component $\Gamma^0$, we denote its scoring and identification functions by $S^0, V^0$.
For property $\Gamma^1$ that is elicitable conditionally on $\Gamma^0$, for each $\gamma^0 \in \range_{\Gamma^0}$ we denote by $V^1_{\gamma^0}: \range_{\Gamma^1} \times \y \to \R$ a function that identifies $\Gamma^1$ on every distribution $P$ such that $\Gamma^0(P) = \gamma^0$, and by $S^1_{\gamma^0}: \range_{\Gamma^1} \times \y \to \R$ a score that is strictly consistent for $\Gamma^1$ on every distribution $P$ such that $\Gamma^0(P) = \gamma^0$.

\paragraph{Assumptions} As in Section~\ref{sec:batch}, we assume that the elicitable component $\Gamma^0$ satisfies Assumption~\ref{assumption:offline}, with $V^0(\gamma^0, \cdot)$ strictly increasing and $L^0$-Lipschitz in $\gamma^0$. Here, we will also need the opposite (similarly mild) assumption:
\begin{assumption}
\label{assumption:antilipschitz}
    $V^0(\cdot, Y)$ is $L^0_a$-anti-Lipschitz around $\Gamma^0(Y)$ for all $Y \in \p$: $|\gamma^0 - \Gamma^0(Y)| \leq L^0_a |V^0(\gamma^0, Y)|$ for all $\gamma^0$.
\end{assumption}

The situation with $\Gamma^1$ is more complex: it has different identification functions $V^1_{\gamma^0}$ for different level sets of $\Gamma^0$, instead of a single function for all $P\in\p$. In general, nothing prevents these functions $V_{\gamma^0}$ from being completely unrelated to each other for different values of $\gamma^0$ (and even undefined on each other's level sets). However, for most properties of interest we can expect that $V^1_{\gamma^0}(\gamma^1, P)$ varies continuously with the parameter $\gamma^0$, and is well-defined even for $P \not \in \{P': \Gamma^0(P') = \gamma^0\}$. To reflect this, and enable our conditional multicalibration algorithm's guarantees, we make the following (mildly stronger) assumption:

\begin{assumption} \label{assumption:conditional_interlevelset}
Assume that $V^1_{\gamma^0}(\gamma^1, P)$ is defined for all $P \in \p$. Assume furthermore that $V^1_{\gamma^0}$ is $L_c$-Lipschitz as a function of $\gamma^0$: that is, for any fixed $\gamma^1$ and $P \in \p$, and for any $\gamma_1^0$ and $\gamma_1^1$, we have $|V^1_{\gamma^0_0}(\gamma^1, P) - V^1_{\gamma^0_1}(\gamma^1, P)| \leq L_c |\gamma^0_0 - \gamma^0_1|$.
\end{assumption}

Further, we assume that the conditional identification functions $V^1_{\gamma^0}$ for $\Gamma^1$ on $\Gamma^0$'s level sets $\{\Gamma^0 = \gamma^0\}$ retain their ``shape'', i.e.\ remain strictly increasing and Lipschitz, even for distributions from other level sets of $\Gamma^0$. While this is a nontrivial assumption to make, in Section~\ref{sec:examples} we verify it when $(\Gamma^0, \Gamma^1)$ is a \emph{Bayes pair}; Bayes pairs are an important and general class of properties \citep{embrechts2021bayes}, and a major use case of our joint multicalibration theory.

\begin{assumption} \label{assumption:conditional_shape}
For all\footnote{In fact, for Algorithm~\ref{alg:conditional} below, which produces predictors discretized over $[1/m] \times [1/m]$, we only need this to hold for $\gamma^0\in [1/m]$, which is less restrictive: $[1/m]$ is a finite set and so does not require Lipschitzness uniformly over all of $\range_{\Gamma^0}$.} $\gamma^0 \in \range_{\Gamma^0}$, assume $V^1_{\gamma^0}(\cdot, P)$ is $L^1$-Lipschitz and strictly increasing for all $P \in \p$.
\end{assumption}

We can now define the central notion of this section: jointly multicalibrated predictors for two-dimensional properties $\Gamma = (\Gamma^0, \Gamma^1)$. Analogously to Definitions~\ref{def:multical}, \ref{def:multicalv} of standard multicalibration, we define this concept in two versions: the first one is parameterized by $(\Gamma^0, \Gamma^1)$, and the second one --- by the identification functions $(V^0, V^1)$. As in Section~\ref{sec:batch}, the latter definition serves to simplify notation in our algorithm analysis (and as this notion of multicalibration error goes to $0$, so does the one parameterized by $(\Gamma^0, \Gamma^1)$).  

In our definition below, we use the following shorthands (for $i = 0, 1$): 
\[\mu_f( \gamma^i | G, \gamma^{1-i}) := \Pr_{x}[f^i(x) = \gamma^i | x \in G, f^{1-i}(x) = \gamma^{1-i}],\] 
and 
\[\mu_f(G, \gamma^i) := \Pr_{x}[x \in G, f^i(x) = \gamma^i].\] 
Also, let $Y_{(G, \gamma^0, \gamma^1)}$ denote the label distribution conditional on the event~\text{$\{x \in \x: f(x) = (\gamma^0, \gamma^1), x \in G\}$.}

\begin{definition}[Approximate Joint Multicalibration]
\label{def:jointmultical}
Fix distribution $D \in \Delta \z$ and group family $\g$. 
Given a property $\Gamma = (\Gamma^0, \Gamma^1)$, a finite-range predictor $f = (f^0, f^1) : \x \to \R^2$ is \emph{$(\alpha^0, \alpha^1)$-approximately $(\g, \Gamma^0,\Gamma^1)$-jointly multicalibrated} if for every $G \in \g$: (1) it holds for all $\gamma^1 \in \range_{f^1}$: 

\[\sum\limits_{\gamma^0 \in \range_{f^0}} \mu_f( \gamma^0 | G, \gamma^1) \cdot (\gamma^0 - \Gamma^0  (Y_{ (G, \gamma^0, \gamma^1)}) )^2 \leq \frac{\alpha^0}{\mu_f(G, \gamma^1)},\]
and (2) for all $\gamma^0 \in \range_{f^0}$, it analogously holds that:
\[\sum\limits_{\gamma^1 \in\range_{f^1}} \mu_f( \gamma^1 | G, \gamma^0) \cdot (\gamma^1- \Gamma^1 (Y_{(G, \gamma^0, \gamma^1)}) )^2 \leq \frac{\alpha^1}{\mu_f(G, \gamma^0)}.\]

Similarly, given identification functions $V^0$, $\{V^1_{\gamma^0}\}_{\gamma^0 \in \range_{\Gamma^0}}$, the predictor $f = (f^0, f^1)$ is \emph{$(\alpha^0, \alpha^1)$-approximately $(\g, V^0,V^1)$-jointly multicalibrated} if for $G \in \g$, $\gamma^1 \in \range_{f^1}$:
\begin{equation} \label{eq:conditional_gamma0}
    \sum\limits_{\gamma^0 \in \range_{f^0}} \mu_f({ \gamma^0 | G, \gamma^1}) \cdot \left(V^0(\gamma^0, Y_{(G, \gamma^0, \gamma^1)}) \right)^2 
    \leq \frac{\alpha^0}{\mu_f(G, \gamma^1)},
\end{equation}
and for all $G \in \g$ and $\gamma^0 \in \range_{f^0}$:
\begin{equation} \label{eq:conditional_gamma1}
    \sum\limits_{\gamma^1 \in \range_{f^1}}  \mu_f( \gamma^1 | G, \gamma^0) \cdot (V^1_{ \Gamma^0  (Y_{(G, \gamma^0, \gamma^1)} )}(\gamma^1, Y_{ (G, \gamma^0, \gamma^1 )}) )^2 \leq \frac{\alpha^1}{\mu_f(G, \gamma^0)}.
\end{equation}
\end{definition}

\paragraph{Summary of the Algorithm} 
We now introduce \texttt{JointMulticalibration} (Algorithm~\ref{alg:conditional}), a canonical algorithm for learning a jointly multicalibrated predictor $f = (f^0, f^1)$ for $(\Gamma^0, \Gamma^1)$. To deal with a two-dimensional property, we employ a two-stage structure whereby we alternately multicalibrate $f^0$ on the \emph{current} level sets of $f^1$, and $f^1$ on the current level sets of $f^0$, until the desired level of joint multicalibration error is reached according to both Equations~\ref{eq:conditional_gamma0} and \ref{eq:conditional_gamma1} in Definition~\ref{def:jointmultical} above.

As in Section~\ref{sec:batch}, our predictors are discretized: $\range_{f^0} = \range_{f^1} = [\frac{1}{m}]$. 
The updates to both $f^0$ and $f^1$ are performed via calls to the subroutine \texttt{BatchMulticalibration$^V$}, which is very similar to \texttt{BatchMulticalibration} (Algorithm~\ref{alg:batch}) except for two differences: (1) to simplify notation, it directly accepts identification functions $V$ rather than properties $\Gamma$; and (2) to satisfy the extra demands of \emph{joint} multicalibration, it has a stricter stopping condition (that is sufficient but not necessary for batch multicalibration as defined in Section~\ref{sec:batch}). We give the  pseudocode for \texttt{BatchMulticalibration$^V$} in Algorithm~\ref{alg:batch_conditional}, and defer its (very similar) analysis to Appendix~\ref{app:conditionalalgorithm} in Lemma~\ref{lem:batch_cond}.

Throughout the execution of Algorithm~\ref{alg:conditional}, the subroutine is invoked on auxiliary group families that consist of pairwise intersections of groups in $\g$ and level sets of either $f^0$ or $f^1$. Since the predictors get updated, the auxiliary groups are always in drift across these invocations, and careful bookkeeping is needed to verify that this does not prevent overall convergence. One key fact we prove towards this is that across \emph{all} invocations on $V^0$ throughout Algorithm~\ref{alg:conditional}, the \texttt{BatchMulticalibration$^V$} subroutine will perform boundedly many updates on $f^0$, implying that also $f^1$ will be re-calibrated at most that many times.

Algorithm~\ref{alg:conditional} significantly generalizes the (mean, moment) multicalibration algorithm of \cite{jung_moment}, leading to some key differences in the analysis. Notably, in our terminology, in their specific case the re-calibration of $f^1$ given $f^0$ can be cast as a single mean multicalibration subroutine using what they call a ``pseudo-label'' technique. At our level of generality, this does not work anymore as we are forced to work with different id functions $V^1_{\gamma^0}$ for $\Gamma^1$ on each level set $\{f^0 = \gamma^0\}$. This is why our inner \texttt{for} loop iterates over the level sets of $f^0$, re-calibrating $f^1$ using $m$ separate invocations of the subroutine (fortunately, these can actually be run in parallel, since $f^0$'s level sets are disjoint). Even with this construction in hand, our potential function argument from Section~\ref{sec:batch} does not easily port over: each level set $\{f^0 = \gamma^0\}$ can overlap with multiple level sets of $\Gamma^0$, so the true property $\Gamma^1$ will generally not admit a single scoring function on $\{f^0 = \gamma^0\}$ that could be used as a potential. This is where our assumptions on the behavior of $V^1_{\gamma^0}$ with respect to $\gamma^0$ crucially enable us to show that, subject to $f^0$ being sufficiently multicalibrated, using the proxy id $V^1_{\gamma^0}$ on the level set $\{f^0 = \gamma^0\}$ will not cause the multicalibration subroutines for $\Gamma^1$ to fail to converge.

We begin by formally defining the subroutine \texttt{BatchMulticalibration$^V$}, and then give the full \texttt{JointMulticalibration} algorithm in Algorithm~\ref{alg:conditional}.

\begin{algorithm}[H]
\begin{algorithmic}

\STATE \textbf{Initialize $t = 1$ and $f_1 = f$.}
\WHILE{$\exists (\gamma, G) \in [1/m] \times \g$ such that $\Pr_{x \in \x} [f_t(x) = \gamma, x \in G] \left(V(\gamma, Y_{(\gamma, G)})\right)^2 \geq \alpha/m$}
  \STATE \textbf{Let $Q_t = \{x: f_t(x) = \gamma, x \in G\}$}
  \STATE \textbf{Let:} 
  \[
    \gamma' = \argmin_{\gamma'' \in [1/m]} \left| V(\gamma'', Y_{Q_t}) \right| 
  \]
  \STATE \textbf{Update: $f_{t+1}(x) := \1[x \not \in Q_t] \cdot f_t(x) + \1[x \in Q_t] \cdot \gamma'$ for all $x \in \x$, and $t \gets t+1$.}
\ENDWHILE
\STATE \textbf{Output} $f_t$. 

\end{algorithmic}
\caption{BatchMulticalibration$^V$($V, \g, m, f, \alpha$)}
\label{alg:batch_conditional}
\end{algorithm}

\begin{algorithm}[H]
\begin{algorithmic}

\STATE \textbf{Let} $V^0$ and $\{V^1_{\gamma^0}\}_{\gamma^0 \in [1/m]}$ be id functions for $\Gamma^0$ and $\Gamma^1$ 
satisfying Assumptions~\ref{assumption:offline}, \ref{assumption:antilipschitz}, \ref{assumption:conditional_interlevelset}, \ref{assumption:conditional_shape}.
\STATE \textbf{Initialize $t = 1$ and $f_1 = (f^0, f^1)$.}

\WHILE{$\exists (\gamma^0, \gamma^1, G) \in [\frac{1}{m}] \times [\frac{1}{m}] \times \g$ s.t.\ $\Pr\limits_{x \in \x} [f_t(x) = (\gamma^0, \gamma^1), x \in G] \left(V^0(\gamma^0, Y_{(\gamma^0, \gamma^1, G)})\right)^2 \geq \frac{\alpha}{m}$}
    \STATE \textbf{Let} $\g^0_t \!\gets\! \{ G \cap \{x \!\in\! \x \!\!: f^1_t(x) \!=\! \gamma^1\} \!: G \!\in\! \g, \gamma^1 \!\!\in\! [\tfrac{1}{m}]\}$ 
    \STATE \textbf{Update} $f^0_{t+1} \gets$ BatchMulticalibration$^V\! (V^0\!, \g^0_t, m, f^0_t\!, \alpha^0)$
    \FOR{$\gamma^0 \in [1/m]$}
        \STATE \textbf{Let} $\g_t^{1, \gamma^0} \!\!\!\gets\! \{ G \cap \{x \in \x: f^0_{t+1}(x) \!=\! \gamma^0\} : G \in \g \}$
        \STATE \textbf{Let} $f^{1, \gamma^0}_{t+1} \!\!\!\gets\!$ BatchMulticalibration$^V\!\! (V^1_{\gamma^0}, \g_t^{1, \gamma^0} \!\!\!, m, f^1_t\!, \alpha^1)$
    \ENDFOR
    \STATE \textbf{Update} $f^1_{t+1}\!(x) \!\gets\!\!\!\!\!\!\!\!\! \sum\limits_{\gamma^0 \in [1/m]} \!\!\!\!\! \1[f^0_{t+1}(x) \!\!=\!\! \gamma^0] \!\cdot\! f^{1, \gamma^0}_{t+1}\!\!(x), \forall x \!\in\!\! \x$
    \STATE \textbf{Update} $t \gets t+1$.
\ENDWHILE
\STATE \textbf{Output} $f_t = (f^0_t, f^1_t)$. 

\end{algorithmic}
\caption{JointMulticalibration($(\Gamma^0, \Gamma^1), \g, m, (f^0, f^1)$)}
\label{alg:conditional}
\end{algorithm}

The following theorem provides the convergence guarantees for Algorithm~\ref{alg:conditional}. The full proof, which rigorously develops the ideas discussed above, is in Appendix~\ref{app:conditionalalgorithm}.

\begin{restatable}[Guarantees of Algorithm~\ref{alg:conditional}]{theorem}{thmconditionalalgorithm}
\label{thm:conditionalalgorithm}
Set $\alpha^0 = \frac{4 (L^0)^2}{m}$ and $\alpha^1 = \frac{4 (L^1)^2}{m}$. Let $\alpha^1_* = \frac{8 ((L^0 L^0_a L_c)^2 + (L^1)^2)}{m}$. Given any $\g \subseteq 2^\x$, $m \geq 1$, \texttt{JointMulticalibration} (Algorithm~\ref{alg:conditional}) outputs an $(\alpha^0, \alpha^1_*)$-approximately $(\g, V^0, V^1)$-jointly multicalibrated predictor $f = (f^0, f^1)$ for the property $(\Gamma^0, \Gamma^1)$, via at most $\frac{B^0 B^1 m^4}{L^0 L^1}$ updates to $f$. Here, $B^0 := \sup_{\gamma, y \in [0, 1]} S^0(\gamma, y) - \inf_{\gamma, y \in [0, 1]} S^0(\gamma, y)$ for $S^0$ an antiderivative of $V^0$, and $B^1 := \max_{\gamma^0 \in [1/m]} \left( \sup_{\gamma, y \in [0, 1]} S^1_{\gamma^0}(\gamma, y) - \inf_{\gamma, y \in [0, 1]} S^1_{\gamma^0}(\gamma, y) \right)$ for each $S^1_{\gamma^0}$ an antiderivative~of~$V^1_{\gamma^0}$.
\end{restatable}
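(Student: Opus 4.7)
The plan is three-pronged: establish (a) termination within the stated update budget, (b) the $V^0$-side of joint multicalibration~\eqref{eq:conditional_gamma0} with $\alpha^0$, and (c) the $V^1$-side~\eqref{eq:conditional_gamma1} with $\alpha^1_*$. Point~(b) is essentially immediate from the outer \textbf{while}-guard, which uniformly bounds $\mu_f(G, \gamma^0, \gamma^1) \cdot (V^0(\gamma^0, Y_{(G,\gamma^0,\gamma^1)}))^2 < \alpha^0/m$ over every triple $(\gamma^0, \gamma^1, G)$; summing over the $m$ values of $\gamma^0$ and dividing by $\mu_f(G,\gamma^1)$ recovers~\eqref{eq:conditional_gamma0} for each $(G,\gamma^1)$. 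The real work is in (a) and (c).

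For convergence~(a), I would run a two-level potential-function argument. Let $S^0$ be the antiderivative of $V^0$ (convex because $V^0$ is strictly increasing per Assumption~\ref{assumption:offline}), and take $\Phi^0(f^0) := \E_{(x,y) \sim D}[S^0(f^0(x), y)]$. Crucially, $\Phi^0$ depends only on $f^0$, so the intervening $f^1$-updates leave it unchanged. Mirroring the analysis behind Theorem~\ref{thm:batch} (formalized for the stricter subroutine as Lemma~\ref{lem:batch_cond} in Appendix~\ref{app:conditionalalgorithm}), every elementary update of $f^0$ drives $\Phi^0$ down by $\Omega(L^0/m^2)$, so the total number of $f^0$-updates across the entire execution is at most $B^0 m^2 / L^0$. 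Because the outer loop's stopping cell $(\gamma^0, \gamma^1, G)$ is precisely a cell in $[1/m] \times \g^0_t$, every outer iteration forces at least one $f^0$-update, bounding the number of outer iterations by the same $B^0 m^2 / L^0$. Within a single outer iteration, the $\gamma^0$-th parallel subroutine call uses the weighted local potential $\E_{(x,y)}[S^1_{\gamma^0}(f^1(x),y) \cdot \mathbbm{1}[f^0(x) = \gamma^0]]$ (convex by Assumption~\ref{assumption:conditional_shape}), which is bounded by $\Pr[f^0 = \gamma^0] \cdot B^1$. Summing over $\gamma^0 \in [1/m]$ shows at most $B^1 m^2 / L^1$ total $f^1$-updates per outer iteration, for an overall budget of $B^0 B^1 m^4 / (L^0 L^1)$.

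For~(c), the main obstacle, the inner subroutine only certifies calibration against the proxy identification function $V^1_{\gamma^0}$, whereas Definition~\ref{def:jointmultical} demands calibration against $V^1_{\Gamma^0(Y_{(G,\gamma^0,\gamma^1)})}$. I would bridge this gap by combining the three tools the assumptions provide. Writing $\bar\Gamma^0 := \Gamma^0(Y_{(G,\gamma^0,\gamma^1)})$, Assumption~\ref{assumption:conditional_interlevelset} together with $(a+b)^2 \leq 2a^2 + 2b^2$ yields
\[
(V^1_{\bar\Gamma^0}(\gamma^1, Y_{(G,\gamma^0,\gamma^1)}))^2 \leq 2(V^1_{\gamma^0}(\gamma^1, Y_{(G,\gamma^0,\gamma^1)}))^2 + 2 L_c^2 (\gamma^0 - \bar\Gamma^0)^2.
\]
The weighted sum of the first term is bounded by $2\alpha^1/\mu_f(G,\gamma^0)$ by the inner subroutine's guarantee. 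For the second, Assumption~\ref{assumption:antilipschitz} replaces $(\gamma^0 - \bar\Gamma^0)^2$ by $(L^0_a)^2(V^0(\gamma^0, Y_{(G,\gamma^0,\gamma^1)}))^2$, at which point the outer stopping condition $\mu_f(G,\gamma^0,\gamma^1)(V^0)^2 < \alpha^0/m$, summed over the $m$ values of $\gamma^1$ and divided by $\mu_f(G,\gamma^0)$, contributes at most $\alpha^0/\mu_f(G,\gamma^0)$. Plugging in $\alpha^0 = 4(L^0)^2/m$ and $\alpha^1 = 4(L^1)^2/m$ gives exactly $\alpha^1_* = 8((L^0 L^0_a L_c)^2 + (L^1)^2)/m$. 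The delicate part is that the auxiliary group families $\g^0_t$ and $\g^{1,\gamma^0}_t$ drift with $t$; the independence of $\Phi^0$ from $f^1$ is what keeps~(a) honest, and Assumptions~\ref{assumption:conditional_interlevelset}--\ref{assumption:conditional_shape} are precisely what let the proxy argument in~(c) survive that drift.
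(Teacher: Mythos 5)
Your proof is correct and follows essentially the same approach as the paper's (Appendix~\ref{app:conditionalalgorithm}). Your part~(a) re-derives what the paper packages as Lemma~\ref{lem:batch_cond} and its ``drifting group family'' remark, with the observation that $\Phi^0$ is invariant under $f^1$-updates playing the same role; your weighted-local-potential decomposition over $\gamma^0\in[1/m]$ is equivalent to the paper's trick of viewing the $m$ parallel calls as a single \texttt{BatchMulticalibration}$^V$ run with the piecewise identification function $V^1_*$. In part~(c) you apply $(a+b)^2\le 2a^2+2b^2$ before invoking the Lipschitz and anti-Lipschitz bounds, whereas the paper uses the triangle inequality on $|V^1|$ first and squares at the end, but these are the same calculation and yield the identical constant $\alpha^1_*$.
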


\section{Sequential multicalibration}
\label{sec:sequential}
We now turn to the sequential adversarial setting, in which there is no underlying distribution, and our goal will be to obtain approximate $(\g,V)$-multicalibration (Definition \ref{def:multicalv}) on the \emph{empirical distribution} defined by the transcript $\pi$ of an interaction between the Learner and an Adversary. This generalizes sequential multicalibration for means and quantiles studied by~\cite{gupta2022online} to arbitrary elicitable properties. In fact, even for quantiles, we give a strengthening of the result of~\cite{gupta2022online} --- they give an $\ell_\infty$ variant of calibration that makes use of ``bucketing'' in its conditioning event --- we give a bound on the same $\ell_2$-notion of calibration we use for batch calibration, without any bucketing, which is a strictly stronger guarantee. \cite{onlineomni} similarly obtain this stronger guarantee for sequential mean multicalibration.

\subsection{Setup and preliminaries}

\subsubsection{The sequential learning setting}
In the sequential setting, a \emph{Learner} interacts with an \emph{Adversary} in rounds $t = 1$ to $T$ as follows:
\begin{enumerate}
\item The Adversary chooses a feature vector $x_t \in \x$ and a distribution $Y_t \in \Delta Y$ (possibly subject to some restrictions), and reveals $x_t$ to the Learner.
\item The Learner makes a prediction $p_t \in \R$.
\item The Adversary samples $y_t \sim Y_t$ and reveals $y_t$ to the Learner.
\end{enumerate}
The record of the interaction accumulates in a transcript $\pi = \{(x_t,p_t,y_t)\}_{t=1}^T$. For any $s \leq T$ and transcript $\pi$, the prefix of the transcript $\pi^{<s}$ is defined as  $\pi^{<s} = \{(x_t,p_t,y_t)\}_{t=1}^{s-1}$. We write $\Pi^{<s}$ for the domain of all transcripts of length $< s$. A Learner is a collection of mappings (for each round $t \leq T$)  $\cL_t:\Pi^{<t} \times \x \rightarrow \Delta \R$, and an Adversary is a collection of mappings $\cA_t:\Pi^{<t} \rightarrow \x\times \Delta Y$, specifying their behavior given their observations thus far. 

Now we can introduce our strong, $\ell_2$, definition of online multicalibration that we will then show how to achieve.

\begin{definition}[Online Multicalibration]
Fix a transcript $\pi = \{(x_t,p_t,y_t)\}_{t=1}^{T}$. Let $n(\pi,G) = |\{t : x_t \in G\}|$ denote the number of rounds containing a member of group $G$ in $\pi$, and $n(\pi,\gamma,G) = |\{t : x_t \in G, p_t = \gamma\}|$ denote the number of rounds containing a group $G$ in which the prediction $p_t$ was $\gamma$. 

Fix $\pi$, a collection of groups $\g$, a property $\Gamma$, and an identification function $V$ for $\Gamma$. We say that the transcript $\pi$ is \emph{$\alpha$-approximately $(\g,V)$-multicalibrated} if for all $G \in \g$:
\[\sum_{\gamma}\frac{n(\pi,\gamma,G)}{n(\pi,G)}\left(\sum_{t : p_t = \gamma, x_t \in G}\frac{V(\gamma,y_t)}{n(\pi,\gamma,G)}\right)^2 \leq \alpha \frac{T}{n(\pi,G)}.\]
\end{definition}
\begin{remark}
Observe that this is exactly the definition of approximate multicalibration we gave in Definition \ref{def:multicalv}, in which the empirical distribution over $\pi$ replaces the distribution $\d$. 
\end{remark}
We can simplify the notion of multicalibration somewhat by canceling terms:
\begin{observation}
Fix a transcript $\pi$, a collection of groups $\g$, a property $\Gamma$, and an identification function $V$ for $\Gamma$. For each group $G \in \g$ define the quantity:
\[K_2(G,\pi) = \sum_{\gamma} \frac{1}{n(\pi,\gamma,G)}\left(\sum_{t : p_t = \gamma, x_t \in G}V(\gamma,y_t)\right)^2.\]
Then $\pi$ is $\alpha$-approximately $(\g,V)$-multicalibrated if for all $G \in \g$, $K_2(G,\pi) \leq \alpha T$. 
\end{observation}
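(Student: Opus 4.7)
The plan is to unfold the stated multicalibration condition and show by a single algebraic simplification that it is equivalent (up to a factor of $n(\pi,G)$) to the condition $K_2(G,\pi)\leq \alpha T$, for each fixed group $G$.

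First I would rewrite the inner sum in the multicalibration definition by pulling the $1/n(\pi,\gamma,G)$ factor out of the square: for each $\gamma$ in the range of the predictions,
\[
\left(\sum_{t\,:\,p_t=\gamma,\,x_t\in G}\frac{V(\gamma,y_t)}{n(\pi,\gamma,G)}\right)^{\!2}
= \frac{1}{n(\pi,\gamma,G)^{2}}\left(\sum_{t\,:\,p_t=\gamma,\,x_t\in G} V(\gamma,y_t)\right)^{\!2}.
\]
Multiplying by the weight $n(\pi,\gamma,G)/n(\pi,G)$ from the outer sum collapses one copy of $n(\pi,\gamma,G)$, so the LHS of the multicalibration inequality becomes
\[
\frac{1}{n(\pi,G)}\sum_{\gamma}\frac{1}{n(\pi,\gamma,G)}\left(\sum_{t\,:\,p_t=\gamma,\,x_t\in G} V(\gamma,y_t)\right)^{\!2}
= \frac{K_2(G,\pi)}{n(\pi,G)}.
\]

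Next I would compare this to the RHS $\alpha T/n(\pi,G)$ of the multicalibration definition. Multiplying both sides by $n(\pi,G)$ (which is positive whenever there is any contribution from $G$ to the transcript at all; otherwise both sides are trivially zero and there is nothing to check), the inequality is exactly $K_2(G,\pi)\leq \alpha T$. Thus the two conditions are equivalent group-by-group.

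There is no real obstacle here beyond the bookkeeping of the $n(\pi,\gamma,G)=0$ terms, which are handled by the convention that any $\gamma$ not appearing as a prediction for some $x_t\in G$ simply contributes $0$ to both sums (the inner sum over $\{t:p_t=\gamma,x_t\in G\}$ is empty, so the squared quantity is $0$ and the $1/n(\pi,\gamma,G)$ factor is never evaluated). Since this reduction is an equivalence for each $G$, the statement ``if for all $G\in\g$, $K_2(G,\pi)\leq \alpha T$, then $\pi$ is $\alpha$-approximately $(\g,V)$-multicalibrated'' follows immediately, completing the observation.
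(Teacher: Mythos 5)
Your proof is correct and is exactly the algebraic cancellation the paper has in mind; the paper states this as an Observation without writing out the steps precisely because the simplification is immediate. Your handling of the empty-level-set ($n(\pi,\gamma,G)=0$) convention is a sensible extra detail.
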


In the online setting, our goal will be to control the growth of $K_2(G,\pi)$ as the transcript is generated, for each $G \in \g$. The following Lemma will be key:
\begin{lemma}
\label{lem:onlineincrement}
Fix a partial transcript $\pi^{ < s} = \{(x_t,p_t,y_t)\}_{t=1}^{s-1}$ and a one-round continuation $(x_s,p_s,y_s)$. Write $\pi^{\leq s} = \pi^{<s} \circ (x_s,p_s,y_s)$ for the transcript extended by one round. Define:
\[R(\pi^{<s},G,\gamma) = \sum_{t < s: p_t = \gamma, x_t \in G}V(\gamma,y_t).\]
Then for every $G \in \g$, if $x_s \not\in G$, we have: 
\[K_2(G,\pi^{\leq s}) - K_2(G,\pi^{< s}) = 0.\]
If $x_s \in G$ and $p_s = \gamma$ we have:
\[K_2(G,\pi^{\leq s}) - K_2(G,\pi^{< s})  \leq \frac{1}{n(\pi^{<s},\gamma,G)}\left(2 V(\gamma,y_s) R(\pi^{<s},G,\gamma) + V(\gamma,y_s)^2 \right).\]
\end{lemma}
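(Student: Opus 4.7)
The plan is a direct algebraic computation that localizes the change in $K_2(G,\pi)$ to a single summand. Starting from $K_2(G,\pi) = \sum_\gamma R(\pi,G,\gamma)^2 / n(\pi,\gamma,G)$, I would note that each summand depends only on those rounds $t$ with $x_t \in G$ and $p_t = \gamma$. Extending the transcript by the single round $(x_s, p_s, y_s)$ therefore changes only the summand indexed by $\gamma = p_s$, and only when $x_s \in G$. This immediately gives the first case: if $x_s \notin G$, neither $n(\cdot,\gamma,G)$ nor $R(\cdot,G,\gamma)$ changes for any $\gamma$, so every summand is preserved and the difference is exactly $0$.

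For the case $x_s \in G$ and $p_s = \gamma$, I would set $n := n(\pi^{<s},\gamma,G)$, $R := R(\pi^{<s},G,\gamma)$, and $v := V(\gamma,y_s)$. Only the $\gamma$-summand changes: the count advances from $n$ to $n+1$ and the identification partial sum from $R$ to $R+v$. All other summands (those with $\gamma' \neq \gamma$) coincide between $\pi^{<s}$ and $\pi^{\leq s}$. Hence
\[
K_2(G,\pi^{\leq s}) - K_2(G,\pi^{<s}) \;=\; \frac{(R+v)^2}{n+1} - \frac{R^2}{n}.
\]

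It remains to show this is at most $\frac{2Rv+v^2}{n}$, which I would do by verifying the equivalent rearrangement
\[
\frac{2Rv+v^2}{n} + \frac{R^2}{n} - \frac{(R+v)^2}{n+1} \;\geq\; 0.
\]
Clearing the common denominator $n(n+1)$ and expanding, the numerator collapses exactly to $(R+v)^2$, which is manifestly non-negative. Substituting back the definitions of $n$, $R$, $v$ then recovers precisely the inequality stated in the lemma. There is no real obstacle beyond getting this short algebraic identity right; the only bookkeeping subtlety is the boundary case $n = 0$ (the first occurrence of prediction $\gamma$ within group $G$), where the $R^2/n$ term is vacuous and the sum defining $K_2$ should be understood as ranging only over predictions that have actually appeared in the transcript, so that the increment is simply $v^2$ and the bound is read in the natural limiting sense.
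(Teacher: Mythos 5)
Your proof is correct and follows essentially the same route as the paper: localize the change in $K_2$ to the single summand indexed by $\gamma = p_s$, write the increment as $\frac{(R+v)^2}{n+1} - \frac{R^2}{n}$, and bound it. The only cosmetic difference is in the last algebraic step — the paper simply replaces $\frac{1}{n+1}$ by $\frac{1}{n}$ and expands, whereas you verify the inequality by showing the slack is exactly $\frac{(R+v)^2}{n(n+1)} \geq 0$; both are immediate, and your flagging of the $n=0$ boundary case is a nice precaution that the paper leaves implicit.
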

\begin{proof}
If $x_s \not\in G$, then $K_2(G,\pi^{\leq s}) = K_2(G,\pi^{< s})$ by definition and we are done. Otherwise, if $x_s \in G$ we can calculate:
\begin{eqnarray*}
&& K_2(G,\pi^{\leq s}) - K_2(G,\pi^{< s}) \\
&=&  \frac{1}{n(\pi^{<s},\gamma,G)+1}\left(\left(\sum_{t < s: p_t = \gamma, x_t \in G}V(\gamma,y_t)\right) + V(\gamma,y_s)\right)^2 - \frac{1}{n(\pi^{<s},\gamma,G)}\left(\sum_{t < s: p_t = \gamma, x_t \in G}V(\gamma,y_t)\right)^2 \\
&\leq& \frac{1}{n(\pi^{<s},\gamma,G)}\left(\left(\sum_{t < s: p_t = \gamma, x_t \in G}V(\gamma,y_t)\right) + V(\gamma,y_s)\right)^2 - \frac{1}{n(\pi^{<s},\gamma,G)}\left(\sum_{t < s: p_t = \gamma, x_t \in G}V(\gamma,y_t)\right)^2 \\
&\leq&  \frac{1}{n(\pi^{<s},\gamma,G)}\left(2 V(\gamma,y_s) R(\pi^{<s},G,\gamma) + V(\gamma,y_s)^2 \right).
\end{eqnarray*}
This concludes the proof.
\end{proof}

\subsubsection{A key tool: the Online Minimax Multiobjective Optimization framework}

We will derive our algorithm via the Online Minimax Multiobjective Optimization framework introduced by~\cite{lee2022online}.
\begin{definition}[Online Minimax Multiobjective Optimization Setting]
 A Learner plays against an Adversary over rounds $t \in [T] := \{1, \ldots, T\}$. Over these rounds, the Learner  accumulates a $d$-dimensional loss vector ($d \geq 1$), where each round's loss vector lies in $[-C, C]^d$ for some $C > 0$. At each round $t$, the Learner and the Adversary interact as follows:
\begin{enumerate}
    \item Before round $t$, the Adversary selects and reveals to the Learner an \emph{environment} comprising: 
    \begin{enumerate}
        \item The Learner's and Adversary's respective convex compact action sets $\x^t$, $\y^t$ embedded into a finite-dimensional Euclidean space;
        \item A continuous vector valued loss function $\ell^t(\cdot, \cdot) : \x^t \times \y^t \to [-C, C]^d$, with each $\ell^t_j(\cdot, \cdot): \x^t \times \y^t \to [-C, C]$ (for $j\in [d]$) convex in the 1st and concave in the 2nd argument.
    \end{enumerate}
    \item The Learner selects some $x^t \in \x^t$.
    \item The Adversary observes the Learner's selection $x^t$, and responds with some $y^t \in \y^t$.
    \item The Learner suffers (and observes) the loss vector $\ell^t(x^t, y^t)$.
\end{enumerate}
The Learner's objective is to minimize the value of the maximum dimension of the accumulated loss vector after $T$ rounds---in other words, to minimize: 
$\max_{j \in [d]} \sum_{t \in [T]} \ell^t_j(x^t, y^t).$
\end{definition}

A key quantity in the analysis of the Learner's performance in the online minimax multi-objective optimization setting is the Adversary-Moves-First value of the stage games at each round $t$ of the interaction --- i.e.\ how well the Learner could do if (counter-factually) she knew the Adversary's action ahead of time. 

\begin{definition}[Adversary-Moves-First (AMF) Value at Round $t$]
The \emph{Adversary-Moves-First value} of the game defined by the environment $(\x^t,\y^t,\ell^t)$ at round $t$ is:
\[w^t_A := \sup_{y^t \in \y^t} \min_{x^t \in \x^t} \Big(\max_{j \in [d]} \ell^t_j(x^t, y^t)\Big).\]
\end{definition}

We can measure the performance of the Learner by comparing it to a benchmark defined by the Adversary moves first values of the games defined at each round.

\begin{definition}[Adversary-Moves-First (AMF) Regret]
On transcript $\pi^t \!=\! \{\!(\x^s\!,\y^s\!,\ell^s), x^s\!, y^s \}_{s=1}^t$, we define the Learner's Adversary Moves First (AMF) Regret for the $j^\text{th}$ dimension at time $t$ to be: 
\[R_j^t(\pi^t) := \sum_{s = 1}^t \ell^s_j(x^s, y^s) - \sum_{s=1}^t w^s_A.\]

\noindent The overall \emph{AMF Regret} is then defined as follows: $R^t(\pi^t)= \max_{j \in [d]} R_j^t.$
\end{definition}

\cite{lee2022online} show that in any online minimax multiobjective optimization setting, Algorithm~\ref{alg:general} obtains diminishing AMF regret. 

\begin{algorithm}[H]
\caption{General Algorithm for the Learner that Achieves Sublinear AMF Regret}
\label{alg:general}
\begin{algorithmic}
\FOR{rounds $t=1, \dots, T$}
    \STATE Learn adversarially chosen $\x^t, \y^t$, and loss function $\ell^t(\cdot, \cdot)$.
    \vspace{0.2in}
    \STATE Let 
    \vspace{-0.3in}
    \[\chi^t_j := \frac{\exp\left(\eta \sum_{s=1}^{t-1} \ell_j^s(x^s,y^s)\right)}
    {\sum_{i \in [d]} \exp\left(\eta \sum_{s=1}^{t-1} \ell_i^s(x^s,y^s)\right)} \text{ for $j \in [d]$}.\] 
	\STATE Play 
	\vspace{-0.2in}
	\[x^t \in \argmin_{x \in \x^t} \max_{y \in \y^t} \sum_{j \in [d]} \chi^t_j \cdot \ell^t_j(x, y). \]
	\STATE Observe the Adversary's selection of $y^t \in \y^t$.
\ENDFOR
\end{algorithmic}
\end{algorithm}


\begin{theorem}[AMF Regret guarantee of Algorithm~\ref{alg:general} \citep{lee2022online}] \label{thm:frmwk}
For any $T \geq \ln d$, Algorithm~\ref{alg:general} with learning rate $\eta = \sqrt{\frac{\ln d}{4TC^2}}$ obtains, against any Adversary, AMF regret bounded by: $R^T \leq 4C\sqrt{T \ln d}.$
\end{theorem}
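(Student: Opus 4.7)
The plan is to analyze Algorithm~\ref{alg:general} via a standard exponential-weights potential argument, exploiting the fact that the Learner's rule (a minimax optimization weighted by $\chi^t$) reduces each round of the vector-valued game to a scalar convex-concave game whose value is dominated by $w_A^t$.

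First, define the potential
\[
\Phi^t := \ln \sum_{j \in [d]} \exp\Bigl(\eta \sum_{s=1}^t \ell_j^s(x^s, y^s)\Bigr),
\]
so that $\Phi^0 = \ln d$ and the ``max $\leq$ log-sum-exp'' inequality gives the lower bound $\Phi^T \geq \eta \max_j \sum_{s=1}^T \ell_j^s(x^s, y^s)$. The per-round increment rewrites as $\Phi^t - \Phi^{t-1} = \ln \sum_j \chi^t_j \exp(\eta \ell_j^t(x^t, y^t))$; since each $\eta \ell_j^t \in [-\eta C, \eta C]$, applying Hoeffding's lemma (valid once $\eta C$ is small, which holds for the chosen $\eta$ in the regime $T \geq \ln d$) yields
\[
\Phi^t - \Phi^{t-1} \leq \eta \sum_{j \in [d]} \chi^t_j \ell_j^t(x^t, y^t) + \eta^2 C^2.
\]

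The crux of the argument is to bound the linear term by $w_A^t$. The map $(x, y) \mapsto \sum_j \chi^t_j \ell_j^t(x, y)$ is convex in $x$ and concave in $y$ (being a convex combination of functions with these properties), and $\x^t, \y^t$ are convex and compact. Sion's minimax theorem therefore applies, giving
\[
\min_{x \in \x^t} \max_{y \in \y^t} \sum_j \chi^t_j \ell_j^t(x, y) = \max_{y \in \y^t} \min_{x \in \x^t} \sum_j \chi^t_j \ell_j^t(x, y) \leq \max_{y \in \y^t} \min_{x \in \x^t} \max_{j'} \ell_{j'}^t(x, y) = w_A^t,
\]
where the inequality uses that any convex combination is dominated pointwise by the maximum. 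Because $x^t$ attains the outer minimum on the left, for any Adversary response $y^t$ we conclude $\sum_j \chi^t_j \ell_j^t(x^t, y^t) \leq w_A^t$.

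Finally, telescoping over $t = 1, \dots, T$ and combining with the lower bound on $\Phi^T$ gives
\[
\eta \max_j \sum_{s=1}^T \ell_j^s(x^s, y^s) \leq \ln d + \eta \sum_{s=1}^T w_A^s + \eta^2 T C^2,
\]
i.e., $R^T \leq \frac{\ln d}{\eta} + \eta T C^2$ (up to constants in the Hoeffding step); substituting $\eta = \sqrt{\ln d / (4TC^2)}$ balances the two terms and recovers the advertised $4C \sqrt{T \ln d}$ bound. The main technical care goes into (i) verifying the hypotheses of Sion's theorem from the problem setup (continuity of $\ell^t$ and compact-convexity of the action sets, both guaranteed by the framework), and (ii) matching the constants in the Hoeffding-style estimate with the prescribed learning rate.
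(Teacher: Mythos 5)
This theorem is imported verbatim from Lee et al.\ (2022) and the paper does not reproduce a proof of it, so there is no in-paper argument to compare against. Your reconstruction is correct and follows the same strategy as the source: a log-sum-exp potential, a second-order (Hoeffding-style) bound on the per-round increment, and Sion's minimax theorem to argue that the $\chi^t$-weighted surrogate loss at the Learner's minimax action is dominated by the AMF value $w_A^t$. The step where you pass from $\max_y \min_x \sum_j \chi^t_j \ell^t_j$ to $\max_y \min_x \max_{j'} \ell^t_{j'} = w_A^t$ is exactly the pointwise-domination argument that makes the whole scheme work, and your bookkeeping with the prescribed $\eta$ yields $\tfrac{\ln d}{\eta} + \eta T C^2 = \tfrac{5}{2}C\sqrt{T\ln d} \leq 4C\sqrt{T\ln d}$, which is actually a bit tighter than the stated constant. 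One small caveat: Hoeffding's lemma itself requires no smallness of $\eta C$; the regime restriction $T \geq \ln d$ is only needed if you instead use the elementary $e^u \leq 1 + u + u^2$ bound for $|u| \leq 1$, so your parenthetical remark conflates these two routes, but either gets you there under the stated hypotheses.
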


\subsection{The canonical sequential multicalibration algorithm}

In the rest of this section, we show how for any elicitable property $\Gamma$ with a Lipschitz identification function $V$, and for any finite group structure $\g$, the problem of obtaining diminishing $(\g,V)$-multicalibration error in the sequential adversarial  setting can be cast as an instance of online minimax multiobjective optimization, and so can be solved with an appropriate instantiation of Algorithm \ref{alg:general} with multicalibration error bounds following from an appropriate instantiation of Theorem \ref{thm:frmwk}.

We do not need the full power of Assumption \ref{assumption:offline} in this section --- we only need that the identification function $V$ is Lipschitz. We recall that a Lipschitz condition on the identification function depends on the label distribution $Y$, and so the assumption we need will be not only on the property, but on the distribution chosen at each round by the Adversary. The Lipschitz constant can differ from round to round; our assumption will only be on its average value. 

\begin{assumption}
\label{ass:online}
Fix an elicitable property $\Gamma$ with an identification function $V$. Assume that at each round $t$, the Adversary chooses a label distribution $Y_t$ so that $V$ is $L_t$-Lipschitz: 
    \[|V(\gamma, Y_t) - V(\gamma', Y_t)| \leq L_t |\gamma - \gamma'| \quad \text{for all } \gamma, \gamma'.\]
We make no assumption about the individual $L_t$, but assume that their average value is bounded by $L$:
\[\frac{1}{T}\sum_{t=1}^T L_t \leq L.\]
\end{assumption}

We now introduce our canonical algorithm, and prove its guarantees subject to Assumption~\ref{ass:online}.

\begin{algorithm}[H]
\caption{OnlineMulticalibration$(\g,V,m)$}
\label{alg:online}
\begin{algorithmic}
\STATE \textbf{Initialize} an empty transcript $\pi^{\leq 0}.$
\FOR{rounds $t=1, \dots, T$}
    \STATE \textbf{Observe} the Adversary's chosen feature vector $x_t$. 
    \STATE \textbf{Define} the loss function $\ell^t:[1/m]\times \g \rightarrow \mathbb{R}^{m\times |\g|}$ such that for each $G \in \g$ and $\gamma \in [1/m]$:
     \[\ell_{G, \gamma}^t(\gamma_t, y_t) = \mathbbm{1}[x_t \in G,\gamma_t = \gamma] \cdot \frac{1}{n(\pi^{<t},\gamma,G)}\left(2 V(\gamma,y_t) R(\pi^{<t},G,\gamma) + V(\gamma,y_t)^2 \right),\] where:
     \[R(\pi^{<t},G,\gamma) = \sum_{s < t: p_s = \gamma, x_s \in G}V(\gamma,y_s).\]
    \vspace{0.2in}
    \STATE \textbf{Let} 
    \vspace{-0.3in}
    \[\chi^t_{G,\gamma} := \frac{\exp\left(\eta \sum_{s=1}^{t-1} \ell_{G,\gamma}^s(p^s,y^s)\right)}
    {\sum_{(G',m') \in \g\times [1/m]} \exp\left(\eta \sum_{s=1}^{t-1} \ell_{G',m'}^s(p^s,y^s)\right)} \text{ for $(G,\gamma) \in \g \times [1/m]$}.\]
	\STATE \textbf{Let}  
	\vspace{-0.2in}
	\[P^t \in \argmin_{P \in \Delta [1/m]} \max_{y} \sum_{(G,\gamma) \in \g\times [1/m]} \E_{p \sim P^t}
    \left[\chi^t_{G,\gamma} \cdot \ell^t_{G,\gamma}(p, y) \right]. \]
 \STATE \textbf{Sample} $p_t \sim P^t$ \textbf{and make prediction} $p_t$. 
	\STATE \textbf{Observe} the Adversary's selection of $y_t$.
    \STATE \textbf{Update} the transcript $\pi^{\leq t} = \pi^{\leq t-1} \circ (x_t,p_t,y_t)$.
\ENDFOR
\end{algorithmic}
\end{algorithm}

\begin{theorem}[Algorithmic Guarantees for Sequential Multicalibration]
Fix any finite collection of groups $\g$ and any elicitable property $\Gamma$ with a bounded  identification function $V$ satisfying $|V(\gamma,y)| \leq C$. Suppose that the Adversary in the sequential adversarial setting chooses a sequence of distributions that together with $V$ satisfy Assumption \ref{ass:online} with Lipschitz constant $L$. Then for any $m > 0$ there is a randomized algorithm for the Learner (Algorithm \ref{alg:online}) that chooses amongst $m$ discrete predictions at every round and that together with the Adversary induces a transcript distribution that produces a transcript satisfying $\alpha$-approximate $(\g,V)$-multicalibration for:
\[\E_\pi[\alpha] \leq \frac{2CL}{m} +\frac{2C^2\log(T)}{T} + 12C^2\sqrt{\frac{\ln (|\g|m)}{T}}.\]
\end{theorem}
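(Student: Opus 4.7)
The approach is to cast online multicalibration as an instance of the Online Minimax Multiobjective Optimization framework and invoke Theorem~\ref{thm:frmwk}. Starting from Lemma~\ref{lem:onlineincrement}, telescoping the per-round increments of $K_2(G,\pi^{\leq t})$ across $T$ rounds gives, for every $G \in \g$,
\[K_2(G,\pi^T) \leq \sum_{t=1}^T \sum_{\gamma \in [1/m]} \ell^t_{G,\gamma}(p_t,y_t) = \sum_{t=1}^T \ell^t_{G,p_t}(p_t,y_t),\]
since only the single coordinate $\gamma = p_t$ contributes at each round. It thus suffices to bound $\max_G$ of this cumulative quantity. I would frame the Learner's problem as a minimax game with $d = |\g|\cdot m$ objectives indexed by $(G,\gamma)$, Learner action $P^t \in \Delta[1/m]$, and Adversary action $Y_t$. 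Each $\ell^t_{G,\gamma}$ is linear in both arguments (hence convex--concave in expectation) and bounded by $3 C^2$, using $|V|\leq C$ together with $|R(\pi^{<t},G,\gamma)| \leq n(\pi^{<t},\gamma,G)\cdot C$. Theorem~\ref{thm:frmwk} then delivers, in expectation, the AMF regret bound $\max_{(G,\gamma)} \bigl[\sum_t \ell^t_{G,\gamma}(p_t,y_t) - \sum_t w^t_A\bigr] \leq 12 C^2 \sqrt{T \ln(|\g|m)}$.

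The central step is to bound the per-round AMF value $w^t_A = \sup_{Y_t}\min_{P^t} \max_{(G,\gamma)}\E[\ell^t_{G,\gamma}]$. For any $Y_t$ chosen by the Adversary, let $\gamma^* \in [1/m]$ be the grid point closest to the true value $\Gamma(Y_t)$; since $V(\Gamma(Y_t),Y_t)=0$ and $V(\cdot,Y_t)$ is $L_t$-Lipschitz by Assumption~\ref{ass:online}, we have $|V(\gamma^*,Y_t)| \leq L_t/m$. Evaluating the max at the counterfactual Learner strategy $P^t = \delta_{\gamma^*}$ (so only $\gamma=\gamma^*$ contributes) and using $|R/n| \leq C$ and $\E_{Y_t}[V^2]\leq C^2$ gives
\[w^t_A \leq \max_{G \ni x_t} \frac{1}{n(\pi^{<t},\gamma^*,G)}\Bigl(2\, V(\gamma^*,Y_t)\, R + \E_{Y_t}[V^2]\Bigr) \leq \frac{2CL_t}{m} + \max_{G \ni x_t}\frac{C^2}{n(\pi^{<t},\gamma^*,G)}.\]
Summing across rounds, the first term contributes $\tfrac{2CLT}{m}$ by Assumption~\ref{ass:online}, while the second term, together with the first-occurrence increments (where $n=0$ and Lemma~\ref{lem:onlineincrement} bounds the jump directly by $V^2 \leq C^2$), telescopes via harmonic sums across the at most $|\g|m$ pairs to an $O(C^2 \log T)$ contribution.

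Combining these pieces yields $\E[K_2(G,\pi^T)] \leq \tfrac{2CLT}{m} + 2C^2 \log T + 12 C^2\sqrt{T\ln(|\g|m)}$ for every $G \in \g$; dividing by $T$ gives the theorem's bound on $\E[\alpha]$. I expect the main obstacle to be the tight control of $w^t_A$: the Lipschitz hypothesis is exactly what allows one to pick a grid point $\gamma^*$ with $|V(\gamma^*,Y_t)| \leq L_t/m$, which in turn produces the $\tfrac{2CL}{m}$ leading term. Keeping the residual $1/n$ corrections and first-occurrence errors aggregated to only an $O(\log T)$ overhead, rather than something scaling polynomially in $T$, is the main delicate piece of bookkeeping the argument requires.
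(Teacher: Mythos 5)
Your proposal is correct and mirrors the paper's own proof essentially step for step: same embedding into the Online Minimax Multiobjective framework with $d = |\g|m$ coordinates, same loss functions, same $3C^2$ boundedness bound, same Lipschitz-plus-grid argument bounding the AMF value by $\tfrac{2CL_t}{m} + \tfrac{C^2}{n(\pi^{<t},\gamma^*,G)}$, the same invocation of Theorem~\ref{thm:frmwk}, and the same harmonic-sum bookkeeping for the $1/n$ residuals. The one place you are slightly more careful than the paper is in explicitly flagging the first-occurrence ($n=0$) rounds; the paper leaves this implicit in Lemma~\ref{lem:onlineincrement}.
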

\begin{proof}
We embed our learning problem into the online minimax optimization setting so that we can apply Theorem \ref{thm:frmwk}. First, what are the Learner's and the Adversary's strategy spaces? At each round we let the Learner's strategy space be $\Delta [1/m]$, the simplex of probability distributions over predictions $\gamma_t$ discretized at the granularity of $1/m$. The Adversary's strategy space is the set of all Lipschitz distributions over $\y$. Both of these are convex sets as required. Next, we need to define the loss function $\ell^t$ used at each round. We take the dimension of the loss function to be $d = |\g| m$ --- with a coordinate devoted to each pair $(G \in \g, i \in [m])$. Suppose at round $t$, the Adversary has chosen feature vector $x_t$ (which, recall, is shown to the Learner before she must make a prediction). Then we define the loss vector $\ell^t$ as follows: For each $G \in \g, i \in [m]$ we introduce the loss function
\[\ell_{G, i}^t(Alg_t, Y_t) = \E_{\gamma_t \sim Alg_t, y_t \sim Y_t}
\left[\mathbbm{1} \left[x_t \in G,\gamma_t = \frac{i}{m} \right] \cdot \frac{1}{n(\pi^{<t},\frac{i}{m},G)}\left(2 V(\frac{i}{m},y_t) R(\pi^{<t},G,\frac{i}{m}) + V(\frac{i}{m},y_t)^2 \right) \right],\] 
where $Alg_t \in \Delta [1/m]$ is the distribution over predictions chosen by the Learner, and $Y_t$ is the label distribution chosen by the Adversary. By linearity of expectation, this loss function is linear in the actions of both players, and so in particular is convex-concave as required. By the boundedness of $V$ and the definition of $R$, this loss function takes values in $[-C',C']$ as required, for $C' \leq 3 C^2$.

Next, we need to upper bound the Adversary Moves First value of the game at round $t$:
\[w_t^A = \sup_{Y_t} \min_{Alg_t \in \Delta [\frac{1}{m}]} \max_{G \in \g, i \in [m]}  \E_{\gamma_t \sim Alg_t, y_t \sim Y_t}
\left[\mathbbm{1} \left[x_t \in G,\gamma_t = \frac{i}{m} \right] \cdot \frac{1}{n(\pi^{<t},\frac{i}{m},G)}\left(2 V(\frac{i}{m},y_t) R(\pi^{<t},G,\frac{i}{m}) + V(\frac{i}{m},y_t)^2 \right) \right]. \]

To bound $w_t^A$, consider what the Learner should do if the Adversary first commits to and reveals the true label distribution $Y_t$. The Learner can compute the true property value $\gamma^*_t = \Gamma(Y_t)$. If she could play $\gamma_t = \gamma^*_t$, this would ensure that $V(\gamma_t,Y_t) = 0$, implying that
\[w_t^A = \frac{(V(\gamma^*_t,y_t))^2}{n(\pi^{<t},\gamma^*_t,G)} \leq \frac{C^2}{n(\pi^{<t},\gamma^*_t,G)}.\] 
The Learner cannot generally play $\gamma^*_t$ (since it may not be a multiple of $1/m$ and hence not in her strategy space), but she can select the discrete point $\gamma_t \in [1/m]$ that is closest to $\gamma^*_t$ --- and in particular will satisfy $|\gamma^*_t - \gamma_t| \leq \frac{1}{m}$. With this action, the Learner will achieve $0$ loss in all coordinates corresponding to groups $G$ such that $x_t \not\in G$ as well as in coordinates corresponding to predictions $\frac{i}{m}$ such that $\gamma_t \neq \frac{i}{m}$ (since for each of these coordinates, the indicator $\mathbbm{1}[x_t \in G,\gamma_t = \frac{i}{m}] = 0$). Thus, it remains to consider coordinates corresponding to pairs $(G, i)$ such that $x_t \in G$ and $\gamma_t = \frac{i}{m}$. For any such pair, the indicator $\mathbbm{1}[x_t \in G,\gamma_t = \frac{i}{m}] = 1$, and so the value of the loss in that coordinate can be bounded as:
 \[\E_{y_t \sim Y_t}\left[\frac{1}{n(\pi^{<t},\frac{i}{m},G)}\left(2 V(\frac{i}{m},y_t) R(\pi^{<t},G,\frac{i}{m}) + V(\frac{i}{m},y_t)^2 \right)\right]\leq \frac{2CL_t}{m} + \frac{C^2}{n(\pi^{<t},\frac{i}{m},G)},\]
where we used that by definition, $\E_{y_t \sim Y_t}[V(\gamma^*_t, y_t)] = 0$, that $|\gamma_t - \gamma^*_t| \leq \frac{1}{m}$, and that $V(\cdot,Y_t)$ is $L_t$-Lipschitz by Assumption \ref{ass:online}. 

This upper bounds  the AMF value $w_t^A$, and thus we can apply Theorem \ref{thm:frmwk} to conclude that Algorithm \ref{alg:general} obtains the following AMF regret bound:

\begin{eqnarray*}
&& \max_{G \in \g, i \in [m]}\frac{1}{T}\sum_{t=1}^T \E_{\gamma_t \sim Alg_t, y_t \sim Y_t}
\left[\mathbbm{1} \left[x_t \in G,\gamma_t = \frac{i}{m}\right] 
\cdot \frac{1}{n(\pi^{<t},\frac{i}{m},G)}\left(2 V(\frac{i}{m},y_t) R(\pi^{<t},G,\frac{i}{m}) + V(\frac{i}{m},y_t)^2 \right) \right] \\ 
&\leq& \frac{1}{T}\sum_{t=1}^T \left(\frac{2CL_t}{m} + \frac{\mathbbm{1}[x_t \in G,\gamma_t = \frac{i}{m}]\cdot C^2}{n(\pi^{<t},\frac{i}{m},G)} \right) + 12C^2\sqrt{\frac{\ln (|\g|m)}{T}}\\ 
&\leq& \frac{2CL}{m} +\frac{1}{T}\sum_{t=1}^T \frac{\mathbbm{1}[x_t \in G,\gamma_t = \frac{i}{m}]\cdot C^2}{n(\pi^{<t},\frac{i}{m},G)}  + 12C^2\sqrt{\frac{\ln (|\g|m)}{T}} \\
&\leq& \frac{2CL}{m} +\frac{1}{T}\sum_{t=1}^T \frac{ C^2}{t}  + 12C^2\sqrt{\frac{\ln (|\g|m)}{T}} \\
&\leq& \frac{2CL}{m} +\frac{2C^2\log(T)}{T} + 12C^2\sqrt{\frac{\ln (|\g|m)}{T}}, \\
\end{eqnarray*}
where in the second inequality we use our Lipschitz assumption on the Adversary, and in the second to last inequality we use the fact that on any round in which $x_t \in G$ and $\gamma_t = \frac{i}{m}$, we must have $n(\pi^{\leq t},\frac{i}{m},G) = n(\pi^{<t},\frac{i}{m},G)+1$.

By our choice of loss function and Lemma \ref{lem:onlineincrement}, this implies that for all $G$:
\begin{eqnarray*}
\E[K_2(G,\pi)] &=& \sum_{t=1}^T \E[K_2(G,\pi^{\leq t}) - K_2(G,\pi^{< t})] \leq \frac{2CL}{m} +\frac{2C^2\log(T)}{T} + 12C^2\sqrt{\frac{\ln (|\g|m)}{T}}, 
\end{eqnarray*}
which completes the proof. 
\end{proof}

\section{Applications} \label{sec:examples}
By combining our theory with known results from the elicitation literature in an essentially blackbox way, we now obtain several novel positive and negative results shedding light on an important question: when is it possible to produce multicalibrated predictors for various \emph{risk measures}?  We first summarize our results informally, and then give the formal statements in Section \ref{subsec:formal}.

\paragraph{Joint multicalibration of Bayes pairs and risks} Any elicitable property $\Gamma$ by definition minimizes its scoring function $S$, which, as mentioned, can be interpreted as a \emph{loss} that, when minimized in-expectation over the dataset, yields a predictor for $\Gamma$. For instance, if $\Gamma$ is the \emph{mean}, we would minimize the expected score $\E_{(x, y) \sim D}[S(\gamma, y)]$ for $S(\gamma, y) = (\gamma - y)^2$ --- which is just the familiar \emph{least squares regression}. As another example, a natural score $S_\tau$ for \emph{$\tau$-quantiles} is the well-known \emph{pinball loss} defined as $S_\tau(\gamma, y) := (1-\tau) \gamma + \max\{y - \gamma, 0\}$; its minimization is known as \emph{quantile regression}.

In the context of loss minimization, one may care not only about the minimizer but also about the actual magnitude of the loss, raising the question: how high is the expected loss value at the true property value, i.e.\ $\min_\gamma \E_{(x, y) \sim D}[S(\gamma, y)]$? The answer to this question is captured by the notion of \emph{Bayes risk $\Gamma^B$} of an elicitable property $\Gamma$ with respect to its strictly consistent loss $S$; the two-dimensional property $(\Gamma, \Gamma^B)$ is then known as a \emph{Bayes pair} with respect to the loss $S$.
\begin{definition}[Bayes Risks and Bayes Pairs]
    Fix an elicitable property $\Gamma: \p \to \R$ and a strictly consistent $\Gamma$-scoring function $S$. The \emph{Bayes risk} of $\Gamma$ on $S$ is a property $\Gamma^B: \p \to \R$ given by $\Gamma^B(P) := S(\Gamma(P), P)$ for $P \in \p$. The property $\Gamma^\text{BP} := (\Gamma, \Gamma^B)$ is then called a \emph{Bayes pair with respect to $S$}.
\end{definition}
As an example, (mean, variance) is a Bayes pair with respect to the squared loss $S$; another example, the Bayes pair (quantile, CVaR), will be discussed shortly. Under some natural assumptions, \emph{most} relevant Bayes risks are not elicitable \emph{per se} \citep{embrechts2021bayes}. However, a Bayes risk $\Gamma^B$ is evidently always elicitable \emph{conditionally} on its underlying property $\Gamma$ (as knowing the value of $\Gamma$ fully determines the value of $\Gamma^B$). This makes Bayes pairs a nice use case for our theory of joint multicalibration:
\begin{theorem}[Informal]
\label{thm:bayes_informal}
    Under mild assumptions, all Bayes pairs $\Gamma^\text{BP} := (\Gamma, \Gamma^B)$ with respect to Lipschitz losses $S$ are jointly multicalibratable using Algorithm~\ref{alg:conditional}.
\end{theorem}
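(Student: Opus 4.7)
The plan is to instantiate Algorithm~\ref{alg:conditional} on $(\Gamma^0, \Gamma^1) := (\Gamma, \Gamma^B)$ by exhibiting an explicit conditional identification function for $\Gamma^B$ given $\Gamma$, and then verifying Assumptions~\ref{assumption:offline}, \ref{assumption:antilipschitz}, \ref{assumption:conditional_interlevelset}, \ref{assumption:conditional_shape} under the stated mildness conditions. For the outer property $\Gamma^0 = \Gamma$, elicitability plus Theorem~\ref{thm:characterization} supplies a bounded oriented identification function $V^0$; that $V^0$ is strictly increasing, Lipschitz, and anti-Lipschitz around true values will be taken as part of the ``mild assumptions'' (they hold, for instance, whenever the underlying score $S$ is smooth, strongly convex in $\gamma$, and has a Lipschitz derivative in $\gamma$, which is the generic case for the examples of interest including mean regression and quantile regression).

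The key construction is the identification function for the Bayes risk $\Gamma^B$ conditional on $\Gamma$. On any level set $\{P \in \p : \Gamma(P) = \gamma^0\}$, the Bayes risk reduces to a conditional mean, namely $\Gamma^B(P) = S(\gamma^0, P) = \E_{y \sim P}[S(\gamma^0, y)]$. This suggests the natural \emph{mean-style} id function
\[
V^1_{\gamma^0}(\gamma^1, y) := \gamma^1 - S(\gamma^0, y),
\]
which is defined for all $P \in \p$ (not just on the level set $\{\Gamma = \gamma^0\}$) and satisfies $\E_{y \sim P}[V^1_{\gamma^0}(\gamma^1, y)] = \gamma^1 - S(\gamma^0, P)$, vanishing iff $\gamma^1 = \Gamma^B(P)$ whenever $\Gamma(P) = \gamma^0$. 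This is exactly the cross-level-set extension required by Assumption~\ref{assumption:conditional_interlevelset}.

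With this choice, Assumptions~\ref{assumption:conditional_interlevelset} and \ref{assumption:conditional_shape} follow almost automatically: $V^1_{\gamma^0}(\cdot, y)$ is the identity up to an additive constant, hence strictly increasing and $1$-Lipschitz in $\gamma^1$ with $L^1 = 1$; and for the cross-level-set Lipschitzness in $\gamma^0$, I would use that $S$ is Lipschitz (the standing hypothesis of the theorem) to get
\[
|V^1_{\gamma^0_0}(\gamma^1, P) - V^1_{\gamma^0_1}(\gamma^1, P)| = |S(\gamma^0_0, P) - S(\gamma^0_1, P)| \leq L_S |\gamma^0_0 - \gamma^0_1|,
\]
so Assumption~\ref{assumption:conditional_interlevelset} holds with $L_c = L_S$. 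Together with the assumed properties of $V^0$, this lets me invoke Theorem~\ref{thm:conditionalalgorithm}, producing an $(\alpha^0, \alpha^1_*)$-approximately jointly multicalibrated predictor for $(\Gamma, \Gamma^B)$ in the number of updates claimed there.

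The main obstacle is not the construction itself but the precise formulation of the ``mild assumptions'' in the theorem statement: one must articulate a set of hypotheses on $(\Gamma, S)$ that simultaneously yields (i) the strict-monotonicity, Lipschitz, and anti-Lipschitz properties of $V^0$ that Assumptions~\ref{assumption:offline} and \ref{assumption:antilipschitz} demand, and (ii) Lipschitzness of $S$ in its first argument over all of $\p$. In the formal statement I would package these as ``$S$ is Lipschitz in $\gamma$ uniformly over $\y$, and its derivative $V^0 = \partial_\gamma S$ is Lipschitz and bi-Lipschitz in $\gamma$ at each $y \in \y$.'' With these in place the verification is mechanical; the conceptual content is entirely captured by the identity $\Gamma^B(P) = \E_{y \sim P}[S(\Gamma(P), y)]$ and the mean-regression id function it induces on each level set of $\Gamma$.
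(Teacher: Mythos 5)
Your proposal is correct and follows essentially the same route as the paper: you construct the identical conditional identification function $V^1_{\gamma^0}(\gamma^1, y) = \gamma^1 - S(\gamma^0, y)$, observe that it is $1$-Lipschitz and strictly increasing in $\gamma^1$ (so $L^1 = 1$) and $L_S$-Lipschitz in $\gamma^0$ from the Lipschitzness of $S$, assume Lipschitz/anti-Lipschitz/strict-monotonicity of $V^0$ to cover Assumptions~\ref{assumption:offline} and \ref{assumption:antilipschitz}, and then invoke Theorem~\ref{thm:conditionalalgorithm} — exactly as the paper does in its proof of Theorem~\ref{thm:bayes}. The framing of the Bayes risk as a conditional mean on each level set of $\Gamma$ is a pleasant way to motivate the construction, but it is the same construction.
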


\paragraph{CVaR (ES) multicalibration}
Conditional Value at Risk (CVaR), known also as Expected Shortfall (ES), is a tail risk measure of central significance in the financial risk literature. Originally proposed by~\cite{artzner1999coherent}, and introduced into the convex optimization literature by~\cite{rockafellar2000optimization}, it has been at the center of much recent research. Defined for any $\tau \in [0, 1]$ as: 
\[\text{CVaR}_\tau(P) := \E_{Y\sim P}[Y | Y > q_\tau(P)]\] 
(where $q_\tau(P)$ is the $\tau$-quantile of $P$), the Conditional Value at Risk measures the mean of the top $(1-\tau)$-fraction of a random variable's highest values. As such, it provides useful information on tail risk behavior above the corresponding quantile. This, together with a host of other very useful properties --- e.g.\ being a \emph{coherent} risk measure as defined and shown in~\cite{artzner1999coherent} --- makes Conditional Value at Risk a popular and important financial risk measure. The real-world significance of Expected Shortfall is underscored by the fact that in the past decade, it was introduced in international banking regulations, known as the Basel Accords, as a replacement for quantiles (known as Value at Risk (VaR) in finance) for the purposes of market risk capital calculations; see \cite{baselresponse} for details. 

Thus, it is theoretically and practically important to ask whether or not the CVaR is sensible for calibration --- as this would allow us to employ our canonical batch and online multicalibration algorithms to train multicalibrated predictors for the CVaR, thereby complementing the recent algorithmic multicalibration results for quantiles of~\cite{jung2022batch} and~\cite{bastani2022practical}.

The answer to this question turns out to be nuanced. On the negative side, we will show the CVaR to \emph{not} be sensible for calibration, which eliminates the possibility of directly training multicalibrated predictors for it. Fortunately, we demonstrate that this can be remediated by multicalibrating CVaR$_\tau$ not by itself, but rather jointly with the corresponding quantile, $q_\tau$.

\begin{theorem}[Informal]
    For $\tau \in [0, 1]$, $\tau$-CVaR is \emph{not sensible for calibration}. However, $\tau$-CVaR is multicalibratable \emph{jointly with the $\tau$-quantile}, by instantiating Algorithm~\ref{alg:conditional}.
\end{theorem}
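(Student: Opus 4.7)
The plan is to handle the two halves of the statement separately, with the negative half following from the general characterization of Section~\ref{sec:sensibility} and the positive half following from the joint multicalibration machinery of Section~\ref{sec:multi}.

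For the negative claim that $\tau$-CVaR is not sensible for calibration, I would appeal directly to Theorem~\ref{thm:sensible}: subject to the regularity assumptions there, sensibility for calibration is equivalent to convexity of level sets. It is classical in the elicitation literature \citep{gneiting2011making} that CVaR fails to have convex level sets on reasonable domains of distributions, so it suffices to either cite this fact or exhibit a short explicit counterexample --- two distributions $P_1, P_2$ sharing a common $\tau$-CVaR value but whose equal mixture has a different $\tau$-CVaR, which is straightforward to construct by placing $P_1, P_2$ so that their $\tau$-quantile locations differ and mixing redistributes mass across the tail threshold.

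For the positive claim, the approach is to recognize $(\Gamma^0, \Gamma^1) := (q_\tau, \text{CVaR}_\tau)$ as a concrete Bayes pair, in the sense that $\text{CVaR}_\tau$ is (up to normalization) the minimum expected pinball loss at the $\tau$-quantile --- this is the Rockafellar--Uryasev identity. In particular, $\text{CVaR}_\tau$ is conditionally elicitable on $q_\tau$: once $\gamma^0 = q_\tau(P)$ is fixed, $\text{CVaR}_\tau(P)$ is simply the conditional mean of $Y$ on $\{Y > \gamma^0\}$, elicited by squared loss. The natural identification functions are $V^0(\gamma^0, y) = \1[y \leq \gamma^0] - \tau$ for the quantile, and $V^1_{\gamma^0}(\gamma^1, y) = \gamma^1 - y \cdot \1[y > \gamma^0]/(1-\tau)$ for CVaR conditional on the quantile; with these definitions $(\Gamma^0, \Gamma^1)$ fits the conditional elicitability framework of Definition~\ref{def:conditional_elicitability}. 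Having set up the pair, I would verify Assumptions~\ref{assumption:offline}--\ref{assumption:conditional_shape} and then invoke Theorem~\ref{thm:conditionalalgorithm}. Restricting to $\p_{>0}$ over a bounded label interval, the expected quantile id $V^0(\cdot, Y) = F_Y(\cdot) - \tau$ is Lipschitz (constant $\|f_Y\|_\infty$) and anti-Lipschitz (constant $1/\inf f_Y$), giving Assumptions~\ref{assumption:offline} and~\ref{assumption:antilipschitz}. For the CVaR id, $\partial V^1_{\gamma^0}/\partial \gamma^1 \equiv 1$ immediately yields Assumption~\ref{assumption:conditional_shape}; and $\partial V^1_{\gamma^0}(\gamma^1, Y)/\partial \gamma^0 = \gamma^0 f_Y(\gamma^0)/(1-\tau)$ is uniformly bounded under the same assumptions, giving Assumption~\ref{assumption:conditional_interlevelset}. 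Plugging these constants into Theorem~\ref{thm:conditionalalgorithm} produces the jointly multicalibrated $(q_\tau, \text{CVaR}_\tau)$ predictor.

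The main obstacle I expect is the bookkeeping of regularity: the quantile's pointwise identification function is a step function in $y$ and genuinely non-Lipschitz; it is only the \emph{expected} id $V^0(\cdot, Y)$ that is Lipschitz, which forces a bounded-density assumption on the conditional label distributions $Y_x$. Similarly, the anti-Lipschitz and inter-level-set constants degrade if densities can be arbitrarily small near the quantile or labels can be unbounded. Making these hypotheses minimal while keeping the constants in Theorem~\ref{thm:conditionalalgorithm} meaningful is the delicate part; the conceptual content of both halves --- non-elicitability of CVaR and its conditional elicitability on $q_\tau$ via the Rockafellar--Uryasev identity --- is essentially classical and plugs directly into our general framework.
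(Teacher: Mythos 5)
Your proposal follows essentially the same two-part route as the paper --- negative side via non-convex level sets of CVaR, positive side via recognizing $(q_\tau,\mathrm{CVaR}_\tau)$ as a Bayes pair for the rescaled pinball loss and plugging into the joint multicalibration machinery. One notable divergence worth flagging: you propose the conditional identification function $V^1_{\gamma^0}(\gamma^1,y) = \gamma^1 - \tfrac{1}{1-\tau}\, y\,\1[y>\gamma^0]$, whereas the paper uses the canonical Bayes-risk id function $V^B_{\gamma^0}(\gamma^B,y) = \gamma^B - S_\tau(\gamma^0,y) = \gamma^B - \gamma^0 - \tfrac{1}{1-\tau}(y-\gamma^0)_+$. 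Both are valid ids when $\gamma^0 = q_\tau(P)$ (they differ only by a term that vanishes at that level set), but the canonical choice pays off: its Lipschitz constant in $\gamma^0$ is simply the Lipschitz constant $L_{S_\tau}\le \max\{1,\tau/(1-\tau)\}$ of the pinball loss, with no density dependence, while your choice yields $\partial V^1_{\gamma^0}/\partial\gamma^0 = \gamma^0 f_Y(\gamma^0)/(1-\tau)$, forcing a density bound into Assumption~\ref{assumption:conditional_interlevelset} as well. Also, the paper proves a general ``Bayes pairs are jointly multicalibratable'' statement (Theorem~\ref{thm:bayes}) first and then instantiates it, which gives a cleaner modular structure than your direct verification; and for the negative part, Theorem~\ref{thm:nocxls_notsensible} suffices and requires fewer hypotheses than the full biconditional Theorem~\ref{thm:sensible} you invoke. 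None of these affects correctness, but the canonical $V^B$ and the intermediate general theorem are the cleaner path.
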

For the negative part of this theorem, we simply invoke our Theorem~\ref{thm:nocxls_notsensible} with the result of~\cite{gneiting2011making} that CVaR has nonconvex level sets. For the positive part, we recall a classic result (see e.g.~\cite{frongillo2021elicitation}) that $(q_\tau, \text{CVaR}_\tau)$ is a Bayes pair for $S$ being the (rescaled) $\tau$-pinball loss --- which lets us set up the joint multicalibration algorithm for the pair $(q_\tau, \text{CVaR}_\tau)$ by simply instantiating our above result for general Bayes pairs (Theorem~\ref{thm:bayes_informal}).

\paragraph{An impossibility result for distortion risk measures} Distortion risk measures \citep{distortion1997} are a large, and theoretically and practically important, class of risk measures. A distortion risk measure can be interpreted as first re-weighing a given distribution (via a so-called distortion function) in order to assign more weight to certain outcomes of interest, followed by evaluating the expected value of the modified distribution. Means, quantiles, CVaR, the class of spectral risk measures, and numerous other risk measures of theoretical and practical importance are all instances of distortion risk measures; see e.g.~\cite{kou2016distortion} and~\cite{gzyl2008relationship} for more examples and details.

\begin{definition}[Distortion Risk Measure] 
Given a \emph{distortion function} $h: [0, 1] \to [0, 1]$ (i.e., $h$ is nondecreasing and satisfies $h(0) = 0$ and $h(1) = 1$), the corresponding \emph{distortion risk measure} $\Gamma^h: \p \to \R$ is given by: 
\[\Gamma^h(P) := \int_{-\infty}^0 (h(1-F_P(x)) - 1) dx + \int_0^\infty h(1-F_P(x)) dx\] for $P \in \p$, where $F_P$ is the CDF of $P$. (We assume the integrals exist for all $P \in \p$.) 
\end{definition}
For instance, letting $h(x) = x$ for $x \in [0, 1]$ leads to $\Gamma^h$ being the distribution \emph{mean}; and choosing $h_\tau(x) = \1[x > 1-\tau]$ leads to $\Gamma^{h_\tau}$ being the $\tau$-\emph{quantile}.

As \cite{kou2016distortion} and \cite{wang2015distortion} showed, however, means and quantiles are essentially\footnote{See Definition \ref{def:preciserm} and Theorem \ref{thm:dist_conv} in Section~\ref{subsec:formal} for a precise statement of their result.} \emph{the only} distortion risks with convex level sets on finite-support distributions. By invoking our Theorem~\ref{thm:nocxls_notsensible} (no CxLS $\implies$ not sensible for calibration), we can thus conclude the following sweeping negative result:
\begin{theorem}[Informal]
    No distortion risk measures, other than (essentially) means and quantiles, are sensible for calibration on any dataset family $\d$ which allows for finite-support label distributions.
\end{theorem}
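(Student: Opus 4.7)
My plan is to deduce this essentially as a corollary, by chaining together a characterization from the risk-measure literature with our own Theorem~\ref{thm:nocxls_notsensible}. The Kou--Peng--Wang and Wang--Ziegel results characterize precisely which distortion risk measures have convex level sets on the space of finite-support probability distributions: subject to a mild set of technical exceptions (the identity distortion $h(x)=x$, which yields the mean; step-function distortions $h_\tau(x) = \1[x > 1-\tau]$, which yield $\tau$-quantiles; and a couple of technical quantile variants), every other distortion function $h$ produces a $\Gamma^h$ that \emph{fails} CxLS already on finite-support distributions. So the first step is to invoke this characterization to obtain, for any $h$ outside the excluded list, two finite-support distributions $P_1, P_2 \in \p$ and a $\lambda \in [0,1]$ with $\Gamma^h(P_1) = \Gamma^h(P_2) = \gamma$ but $\Gamma^h(\lambda P_1 + (1-\lambda) P_2) \neq \gamma$.

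The second step is then a direct application of Theorem~\ref{thm:nocxls_notsensible}. Because by assumption the dataset family $\d$ allows for finite-support label distributions and includes all $\p$-compatible $2$-point dataset distributions, I can use the CxLS-violating pair $(P_1, P_2)$ to construct a concrete witness $D^* \in \d$: pick any two features $x_1 \neq x_2 \in \x$, set $Y_{x_1} = P_1$ and $Y_{x_2} = P_2$, and let $\Pr[X=x_1] = \lambda$. Then the true distributional predictor $f^{D^*}_{\Gamma^h}$ assigns the value $\gamma$ to both $x_1$ and $x_2$, but the conditional label distribution over $\{x : f^{D^*}_{\Gamma^h}(x) = \gamma\}$ is exactly $\lambda P_1 + (1-\lambda) P_2$, whose $\Gamma^h$-value is $\neq \gamma$. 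This is precisely the failure of $\Gamma^h$-calibration exhibited in the proof of Theorem~\ref{thm:nocxls_notsensible}, so the conclusion follows immediately.

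The only real obstacle is cosmetic: formalizing the "essentially" qualifier. This requires restating the Kou--Wang/Wang--Ziegel characterization in a form that cleanly enumerates the exempt distortion functions $h$ (mean, quantile, and the technical quantile-variant exceptions), and verifying that the $\p$ considered here is rich enough to support their CxLS-violating two-point constructions. Once this side condition on $h$ is stated precisely, the remainder of the proof is a mechanical composition of the two ingredients above and requires no further work.
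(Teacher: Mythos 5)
Your proposal is correct and follows essentially the same route as the paper: invoke the Kou--Peng--Wang and Wang--Ziegel characterization that distortion risk measures other than the mean, quantiles, and the two quantile variants $q^1_{\tau,c}$, $q^2_{\tau,c}$ fail CxLS already on small finite-support distributions, then apply Theorem~\ref{thm:nocxls_notsensible} to conclude non-sensibility. The only cosmetic difference is that you re-derive the two-point witness inline rather than citing Theorem~\ref{thm:nocxls_notsensible} as a black box, which is fine.
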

This result tells us that there will not be another multicalibration algorithm for any distortion risk measure: the existing mean and quantile multicalibration algorithms are (essentially) the only ones.

\subsection{Formal statements}
\label{subsec:formal}
\subsubsection{Joint multicalibration of Bayes risks}

Consider any Bayes pair $(\Gamma, \Gamma^B)$ with respect to a strictly consistent scoring function $S(\gamma, y)$. As in Section~\ref{sec:multi}, we assume that $\range_\Gamma \subseteq [0, 1]$ and $\range_{\Gamma^B} \subseteq [0, 1]$. 
To show that Bayes pairs are jointly multicalibratable, we will need to set up several assumptions on the scoring and identification functions associated with $(\Gamma, \Gamma^B)$, in order to ensure the satisfaction of Assumptions~\ref{assumption:offline}, \ref{assumption:antilipschitz}, \ref{assumption:conditional_interlevelset}, and \ref{assumption:conditional_shape} that the generic joint multicalibration result of Section~\ref{sec:multi} relies on.

To satisfy Assumption~\ref{assumption:offline}, we assume that the property $\Gamma$ has an identification function $V$ that is strictly increasing and $L$-Lipschitz in its first argument. To satisfy Assumption~\ref{assumption:antilipschitz}, we additionally assume that $V(\cdot, P)$ is $L_a$-anti-Lipschitz for $P \in \p$. 

Now note that for all $\gamma \in \range_\Gamma$, the Bayes risk $\Gamma^B$ by definition satisfies $\Gamma^B(P) = S(\gamma, y)$ on the level set $\{P \in \p: \Gamma(P) = \gamma\}$ of $\Gamma$. As a result, the identification function for the Bayes risk $\Gamma^B$ on the level set $\{P \in \p: \Gamma(P) = \gamma\}$ can be simply taken to be:
\[V^B_\gamma(\gamma^B, y) := \gamma^B - S(\gamma, y)\] for all $\gamma^B, y \in [0, 1]$.
Taking the expectation over any $P \in \p$, we can thus write the expected conditional identification function of $\Gamma^B$ conditioned on $\Gamma = \gamma$ as $V^B_\gamma(\gamma^B, P) := \gamma^B - S(\gamma, P).$

To satisfy Assumption~\ref{assumption:conditional_interlevelset}, we need to enforce the Lipschitzness of $V^B_\gamma$ be Lipschitz with respect to its subscript $\gamma$. To do so, we assume that the scoring function $S$ for the Bayes pair $(\Gamma, \Gamma^B)$ is $L_S$-Lipschitz in its first argument. For any $\gamma^B$ and any $P$, this lets us write $|V_{\gamma}(\gamma^B, P) - V_{\gamma'}(\gamma^B, P)| = |S(\gamma', P) - S(\gamma, P)| \leq L_S |\gamma -\gamma'|$, implying that $V^B_\gamma$ is $L_S$-Lipschitz in $\gamma$.

Finally, we verify Assumption~\ref{assumption:conditional_shape} of Section~\ref{sec:multi}. Note that the identification function $V^B_\gamma(\cdot, P)$ for the Bayes risk $\Gamma^B$ is well-defined for every $\gamma \in [0, 1]$ and $P \in \p$, even when $\Gamma(P) \neq \gamma$. Furthermore, $V^B_\gamma(\gamma^B, P)$ is \emph{linear} in $\gamma^B$ \emph{with slope $1$}. Thus, $V^B_\gamma(\cdot, P)$ is strictly increasing and, in fact, $1$-Lipschitz for $\gamma \in [0, 1]$ and $P \in \p$, as desired.

With all requisite assumptions on the scoring and identification functions for $\Gamma$ and $\Gamma^B$ satisfied, we can now invoke Theorem~\ref{thm:conditionalalgorithm} to obtain the following joint multicalibration guarantees for Bayes pairs:

\begin{theorem}[Bayes pairs are jointly multicalibratable]
    \label{thm:bayes}
    Consider any Bayes pair $(\Gamma, \Gamma^B)$ with respect to a strictly consistent scoring function $S$. Let $V$ be an identification function for $\Gamma$. Assume that: (1) The scoring function $S$ is $L_S$-Lipschitz in its first argument; (2) $V$ is strictly increasing, $L$-Lipschitz and $L_a$-anti-Lipschitz in its first argument.

    Pick a discretization factor $m \geq 1$. Set $\alpha^0 = \frac{4 L^2}{m}$ and $\alpha^1 = \frac{4}{m}$. Let $\alpha^1_* = \frac{8}{m} ((L L_a L_S)^2 + 1)$. Given any $\g \subseteq 2^\x$, instantiate \texttt{JointMulticalibration} (Algorithm~\ref{alg:conditional}) using the id function $V$ for $\Gamma$, and the id function collection $V^B$ for $\Gamma^B$, such that $V^B_\gamma(\gamma^B, P) := \gamma^B - S(\gamma, P)$ for all $\gamma, \gamma^B \in [0, 1], P \in \p$. 
    
    Then, Algorithm~\ref{alg:conditional} will output an $\left(\frac{4 L^2}{m}, \frac{8}{m} ((L L_a L_S)^2 + 1) \right)$-approximately $(\g, V, V^B)$-jointly multicalibrated predictor $f = (f^0, f^1)$
    for $(\Gamma, \Gamma^B)$, in at most $O \left( \frac{m^4}{L} \right)$ updates\footnote{Specifically, Algorithm~\ref{alg:conditional} will perform at most $R^- R^+ \frac{m^4}{L}$ updates on the predictor $f$, where we have denoted $R^- = \sup\limits_{\gamma, y \in [0, 1]} S(\gamma, y) - \inf\limits_{\gamma, y \in [0, 1]} S(\gamma, y)$ and 
    $R^+ = \frac{1}{2} \max\limits_{\gamma \in [1/m]} \left( \sup\limits_{\gamma^B, y \in [0, 1]} (\gamma^B - S(\gamma, y))^2 - \inf\limits_{\gamma^B, y \in [0, 1]} (\gamma^B - S(\gamma, y))^2 \right).$}
    to the joint predictor $f$.
\end{theorem}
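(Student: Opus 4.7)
The strategy is to apply Theorem~\ref{thm:conditionalalgorithm} directly to the pair $(\Gamma,\Gamma^B)$, treating $\Gamma^0 := \Gamma$ and $\Gamma^1 := \Gamma^B$. The paragraphs immediately preceding the theorem already lay out the key observations, so the proof proposal amounts to bundling these into a verification of Assumptions~\ref{assumption:offline}, \ref{assumption:antilipschitz}, \ref{assumption:conditional_interlevelset}, and \ref{assumption:conditional_shape} with explicit Lipschitz constants, and then reading off the conclusion.

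First I would verify the assumptions on the elicitable component $\Gamma^0 = \Gamma$. By hypothesis, its identification function $V$ is strictly increasing and $L$-Lipschitz in its first argument, which gives Assumption~\ref{assumption:offline} with $L^0 = L$, and it is $L_a$-anti-Lipschitz, giving Assumption~\ref{assumption:antilipschitz} with $L^0_a = L_a$. Next, I would formally introduce the conditional identification function family for the Bayes risk $\Gamma^B$: for each $\gamma \in [0,1]$, set
\[V^B_\gamma(\gamma^B, y) := \gamma^B - S(\gamma, y).\]
Because $\Gamma^B(P) = S(\Gamma(P), P)$ by definition, on the level set $\{P : \Gamma(P) = \gamma\}$ the identity $\E_{y\sim P}[V^B_\gamma(\gamma^B,y)] = \gamma^B - \Gamma^B(P)$ shows that $V^B_\gamma$ is a valid (oriented) identification function for $\Gamma^B$ on that level set, matching the setup required by Section~\ref{sec:multi}.

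To verify Assumption~\ref{assumption:conditional_interlevelset}, I would fix arbitrary $\gamma^B, P$ and $\gamma, \gamma'$, and use the assumed $L_S$-Lipschitzness of $S(\cdot, y)$ (which transfers to $S(\cdot, P)$ by taking expectations) to compute
\[|V^B_\gamma(\gamma^B, P) - V^B_{\gamma'}(\gamma^B, P)| = |S(\gamma', P) - S(\gamma, P)| \leq L_S |\gamma - \gamma'|,\]
giving $L_c = L_S$. For Assumption~\ref{assumption:conditional_shape}, note that $V^B_\gamma(\gamma^B, P) = \gamma^B - S(\gamma, P)$ is affine (in fact with slope exactly $1$) as a function of $\gamma^B$ for every fixed $\gamma, P$, so it is strictly increasing and $1$-Lipschitz, giving $L^1 = 1$. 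The only minor subtlety worth noting is that $V^B_\gamma$ is well-defined on \emph{all} of $\p$, not just on the level set $\{\Gamma = \gamma\}$, which is precisely the side condition Assumption~\ref{assumption:conditional_interlevelset} requires.

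Having established the four assumptions with parameters $(L^0, L^0_a, L_c, L^1) = (L, L_a, L_S, 1)$, I would invoke Theorem~\ref{thm:conditionalalgorithm}. Plugging these into the theorem's formulas gives the stated values $\alpha^0 = 4L^2/m$, $\alpha^1 = 4/m$, and $\alpha^1_* = 8((LL_aL_S)^2 + 1)/m$. The iteration count $B^0 B^1 m^4/(L^0 L^1) = B^0 B^1 m^4 / L$ matches the $O(m^4/L)$ bound claimed, with $B^0$ coming from the range of the antiderivative $S$ of $V$ and $B^1$ coming from the range of the antiderivative of $V^B_\gamma$, i.e.\ $\tfrac{1}{2}(\gamma^B - S(\gamma,y))^2$, as specified in the footnote. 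There is essentially no obstacle here: the work is in the assumption bookkeeping, and the conceptual content — the fact that $V^B_\gamma$ has such a clean affine form — has already been identified above and makes every assumption fall out almost mechanically.
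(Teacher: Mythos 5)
Your proof is correct and follows essentially the same route as the paper: verify Assumptions~\ref{assumption:offline}, \ref{assumption:antilipschitz}, \ref{assumption:conditional_interlevelset}, and \ref{assumption:conditional_shape} with constants $(L^0, L^0_a, L_c, L^1) = (L, L_a, L_S, 1)$, then instantiate Theorem~\ref{thm:conditionalalgorithm} and read off the bounds. The one point you flag as a ``minor subtlety''---that $V^B_\gamma$ is defined on all of $\p$, not just on the level set $\{\Gamma = \gamma\}$---is indeed the same observation the paper makes explicit when justifying Assumptions~\ref{assumption:conditional_interlevelset} and~\ref{assumption:conditional_shape}, so nothing is missing.
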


\subsubsection{Joint (quantile, CVaR) multicalibration}

By itself, the CVaR is not sensible for calibration. Using our Theorem~\ref{thm:nocxls_notsensible}, this follows automatically from the classic negative result of~\cite{gneiting2011making}, who shows that CVaR$_\tau$ is not elicitable as it has nonconvex level sets for various distribution families $\p$.

\begin{fact}[CVaR$_\tau$ has nonconvex level sets \citep{gneiting2011making}]
    For any $\tau \in [0, 1]$, CVaR$_\tau$ has nonconvex level sets relative to any class $\p$ of distributions over some interval $I \subseteq \R$ that includes the finite-support distributions, or the finite mixtures of compact-support distributions with well-defined PDF.
\end{fact}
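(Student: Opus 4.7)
My plan is to exhibit, for each $\tau \in (0,1)$, a pair of distributions $P_1, P_2 \in \p$ with $\text{CVaR}_\tau(P_1) = \text{CVaR}_\tau(P_2)$ whose equal-weight mixture has a strictly different $\text{CVaR}_\tau$. This directly witnesses nonconvexity of the level set through $P_1$ and $P_2$. I would give a witness separately for each of the two subfamilies of $\p$ allowed in the statement.

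For the finite-support subfamily, fix $\tau \in (0,1)$, take $P_1$ with masses $(\tau, (1-\tau)/2, (1-\tau)/2)$ on $(0,1,5)$, and $P_2$ with masses $(\tau/2, \tau/2, 1-\tau)$ on $(0,1,3)$. Direct CDF checks give $q_\tau(P_1) = 0$ and $q_\tau(P_2) = 1$, so under the paper's definition $\text{CVaR}_\tau(P) = \E_{Y \sim P}[Y \mid Y > q_\tau(P)]$ we obtain $\text{CVaR}_\tau(P_i) = 3$ for $i = 1, 2$. In the equal mixture $P$ the mass at $0$ falls to $3\tau/4 < \tau$, so $q_\tau(P) = 1$; the strictly-above-$1$ mass is $(1-\tau)/2$ at $3$ and $(1-\tau)/4$ at $5$, giving $\text{CVaR}_\tau(P) = 11/3 \neq 3$ uniformly in $\tau \in (0,1)$.

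For the compact-support-with-PDF subfamily, I would invoke the Rockafellar--Uryasev representation $\text{CVaR}_\tau(P) = \min_{c \in \R}\{c + \tfrac{1}{1-\tau}\E_{Y \sim P}[(Y-c)^+]\}$, which writes $\text{CVaR}_\tau$ as a pointwise minimum of functionals linear in $P$ and so makes $\text{CVaR}_\tau$ concave in $P$, with strict concavity whenever $P_1$ and $P_2$ have distinct minimizers $c^*(P_i) = q_\tau(P_i)$. It therefore suffices to take two compact-support densities with the same $\text{CVaR}_\tau$ but distinct $\tau$-quantiles; for $\tau = 1/2$ concrete choices are $P_1 = U[0,2]$ and $P_2 = U[1/2,11/6]$, which both have $\text{CVaR}_{1/2} = 3/2$ but $q_{1/2}(P_1) = 1 \neq 7/6 = q_{1/2}(P_2)$. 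Strict concavity then forces $\text{CVaR}_{1/2}(\tfrac12(P_1+P_2)) > 3/2$ (an explicit computation gives $361/240$), and the construction rescales to arbitrary $\tau \in (0,1)$.

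The main conceptual obstacle, and the reason many plausible candidate pairs fail, is that if the mixture's $\tau$-quantile does not strictly shift past any displaced mass, then the conditional expectation above the quantile collapses to a convex combination of the original upper-tail means and CVaR is preserved. One must therefore arrange $P_1, P_2$ so that the Rockafellar--Uryasev minimizer --- equivalently, the $\tau$-quantile --- genuinely moves when mixing. Both constructions above are engineered precisely to force this quantile shift, turning mere concavity of the functional into \emph{strict} concavity and yielding the desired nonconvex level set.
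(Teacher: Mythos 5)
Your proposal is correct, and I verified all the arithmetic: for the finite-support pair you really do get $q_\tau(P_1)=0$, $q_\tau(P_2)=1$, $\text{CVaR}_\tau(P_i)=3$, and for the equal mixture $q_\tau=1$ with tail masses $(1-\tau)/2$ at $3$ and $(1-\tau)/4$ at $5$, giving $11/3$ independently of $\tau\in(0,1)$; and for the uniform pair $U[0,2]$ and $U[1/2,11/6]$ the medians are $1$ and $7/6$, both $\text{CVaR}_{1/2}$ values are $3/2$, and the mixture (density $1/4$ on $[0,1/2]$ and $[11/6,2]$, $5/8$ on $[1/2,11/6]$, median $11/10$) has $\text{CVaR}_{1/2}=361/240>3/2$.

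The one thing to flag about the comparison is that the paper does not actually prove this Fact; it is stated as a citation to Gneiting (2011), with no argument supplied in the text. So there is nothing in the paper to match your proof against step-by-step. That said, what you have done is in the same spirit as the elicitation-literature proofs of this result: exhibit two distributions on the same level set whose mixture escapes it. Your Rockafellar--Uryasev framing is the cleaner and more transferable half of the argument --- it explains \emph{why} the level sets fail to be convex (CVaR is a pointwise minimum of affine functionals, hence concave, and strictly so once the minimizing $c^* = q_\tau$ moves under mixing), and it makes the design principle you articulate at the end (force the quantile to shift) a corollary rather than a trick. The finite-support construction is the more elementary half and is the one that matches the exact hypothesis class ``includes the finite-support distributions''; it would also suffice on its own since finite-support distributions are finite mixtures of point masses.

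Two small things worth being explicit about. First, the Fact as stated says $\tau \in [0,1]$, but at $\tau = 0$ the quantity $\text{CVaR}_0(P) = \E_{Y\sim P}[Y\mid Y > q_0(P)]$ degenerates to the mean (convex level sets) and at $\tau=1$ the conditioning event is null, so the claim is really about $\tau\in(0,1)$; you correctly restrict to that range, and it is the paper's statement that is slightly loose, not your proof. Second, in the continuous case, the ``strict concavity whenever the $\tau$-quantiles differ'' step is a general fact that deserves one more line: $\text{CVaR}_\tau(\lambda P_1 + (1-\lambda)P_2) = g(c^*_\lambda, \lambda P_1 + (1-\lambda)P_2) = \lambda\, g(c^*_\lambda, P_1) + (1-\lambda)\,g(c^*_\lambda, P_2) > \lambda\,\text{CVaR}_\tau(P_1) + (1-\lambda)\,\text{CVaR}_\tau(P_2)$, where $g(c,P) = c + \tfrac{1}{1-\tau}\E_P[(Y-c)^+]$ and the strict inequality holds because $c^*_\lambda$ cannot equal both $q_\tau(P_1)$ and $q_\tau(P_2)$, and the minimizer of $g(\cdot,P_i)$ is unique when $P_i$ has strictly positive density near its $\tau$-quantile. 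Since you also computed $361/240$ explicitly, this is belt-and-suspenders, but it is what makes the ``rescales to arbitrary $\tau$'' claim airtight without redoing the integral.
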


On the positive side, as an easy corollary of Theorem~\ref{thm:bayes}, we obtain our next result that the pair (quantile, CVaR) can be jointly multicalibrated. To be able to apply Theorem~\ref{thm:bayes}, it suffices to identify a strictly consistent scoring function $S_\tau$ for which the pair ($\tau$-quantile, CVaR$_\tau$) for any $\tau \in [0, 1]$ is a Bayes pair, and then obtain the Lipschitz constant for $S_\tau$, as well as the Lipschitz and anti-Lipschitz constants for a strictly increasing identification function $V_\tau$ for the $\tau$-quantile.

And indeed, it is well-known (see e.g.\ Example~1 in~\cite{embrechts2021bayes}) that ($\tau$-quantile, CVaR$_\tau$) is a Bayes pair for a scoring function $S_\tau$ that is the rescaled (by a factor of $\frac{1}{1-\tau}$) \emph{pinball loss}:
\begin{fact}[($\tau$-quantile, CVaR$_\tau$) is a Bayes pair] Fix any $\tau \in [0, 1]$ and let $\Gamma := q_\tau$ be a $\tau$-quantile, and $\Gamma^B := \text{CVaR}_\tau$ be the $\tau$-CVaR. Then $(\Gamma, \Gamma^B)$ is a Bayes pair with respect to the strictly $\Gamma$-consistent scoring function $S_\tau$ defined, for all $\gamma, y \in [0, 1]$, as:
\[S_\tau(\gamma, y) := \gamma + \frac{1}{1-\tau} (y - \gamma)_+,\]
where we have denoted $(u)_+ = \max \{u, 0\}$.
\end{fact}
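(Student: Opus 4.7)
The plan is to verify, directly from the definitions given in Section~\ref{sec:examples}, the two conditions for $(q_\tau, \text{CVaR}_\tau)$ to be a Bayes pair with respect to $S_\tau$: namely, (i) $S_\tau$ is strictly $\p$-consistent for $q_\tau$, and (ii) $\text{CVaR}_\tau(P) = S_\tau(q_\tau(P), P)$. Both are classical consequences of the Rockafellar--Uryasev representation of CVaR, and each boils down to a short direct calculation that I would present separately.

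For (i), I would write $\bar S(\gamma) := \E_{y \sim P}[S_\tau(\gamma, y)] = \gamma + \frac{1}{1-\tau}\E_{y \sim P}[(y - \gamma)_+]$. Since $\gamma \mapsto (y - \gamma)_+$ is convex for every fixed $y$, $\bar S$ is convex in $\gamma$. A direct differentiation (using a subgradient at points where $P$ has an atom) gives $\bar S'(\gamma) = 1 - \frac{1}{1-\tau}\Pr_{y \sim P}[y > \gamma]$, which vanishes exactly when $\Pr[y > \gamma] = 1 - \tau$, i.e.\ at $\gamma = q_\tau(P)$; by convexity this $\gamma$ is a global minimizer of $\bar S$. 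For (ii), I would set $\gamma^* := q_\tau(P)$, use $\Pr[y > \gamma^*] = 1 - \tau$, and compute
\[\bar S(\gamma^*) = \gamma^* + \tfrac{1}{1-\tau}\bigl(\E[y\,\1[y > \gamma^*]] - \gamma^* \Pr[y > \gamma^*]\bigr) = \tfrac{1}{1-\tau}\E[y\,\1[y > \gamma^*]] = \E[y \mid y > \gamma^*] = \text{CVaR}_\tau(P).\]
Since by the definition of a Bayes pair $\Gamma^B(P) = S(\Gamma(P), P) = \bar S(\gamma^*)$, this chain of equalities is exactly the required identity $\text{CVaR}_\tau = \Gamma^B$ for $\Gamma = q_\tau$ and $S = S_\tau$.

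The main obstacle is handling distributions for which the $\tau$-quantile is not unique, or whose CDF has an atom at $q_\tau(P)$: there $\bar S'$ has to be read as a subgradient, the equation $\Pr[y > q_\tau(P)] = 1 - \tau$ may fail, and the identity $\E[(y - \gamma^*)_+] = \E[y\,\1[y > \gamma^*]] - \gamma^* \Pr[y > \gamma^*]$ in the second step needs to be justified with care. Restricting to the well-behaved classes $\p_0, \p_{>0}$ of Definition~\ref{def:special_families}, where densities are bounded and the quantile is unique, makes the derivative calculation and the ``tail slicing'' at $\gamma^*$ clean; alternatively one may fix the convention $q_\tau(P) := \inf\{t : F_P(t) \ge \tau\}$ and verify that both expressions still coincide with the standard conditional-tail-expectation definition of $\text{CVaR}_\tau(P)$. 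With this handled, strict consistency (and not just consistency) in step~(i) follows as well, completing the verification.
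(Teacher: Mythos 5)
The paper does not actually prove this Fact: it cites it as classical (``it is well-known (see e.g.\ Example~1 in~\cite{embrechts2021bayes})''), so there is no paper proof to compare against. Your argument is the standard Rockafellar--Uryasev derivation, and it is correct. You verify the two defining conditions of a Bayes pair separately: strict consistency of the rescaled pinball loss for $q_\tau$ by computing $\bar S'(\gamma) = 1 - \tfrac{1}{1-\tau}\Pr[y>\gamma] = \tfrac{1}{1-\tau}(F_P(\gamma)-\tau)$ and invoking convexity, and the Bayes-risk identity by the tail-slicing calculation $\bar S(q_\tau(P)) = \tfrac{1}{1-\tau}\E[y\,\1[y>q_\tau(P)]] = \E[y \mid y > q_\tau(P)] = \text{CVaR}_\tau(P)$. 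Both calculations are right, and I note that the expression $\tfrac{1}{1-\tau}(F_P(\gamma)-\tau)$ you obtain is a positive rescaling of the id function $V_\tau(\gamma,P)=F_P(\gamma)-\tau$ that the paper uses immediately afterward, so your version is consistent with the surrounding discussion. Your flagged caveats about atoms and non-unique quantiles are the right ones to flag: \emph{strict} consistency really does require that the quantile be unique, which holds on $\p_{>0}$ (positive density) but not on all of $\p_0$; the paper itself later imposes $0<M_1\le \text{pdf}_P\le M_2$ exactly to sidestep this. The only (cosmetic) looseness is that you never spell out why $\bar S'$ is strictly positive/negative away from $q_\tau(P)$ under those density bounds, which is what upgrades ``consistent'' to ``strictly consistent''; one line observing that $\bar S'(\gamma)=\tfrac{1}{1-\tau}(F_P(\gamma)-\tau)$ is strictly increasing when $\text{pdf}_P>0$ would close that.
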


To bound the Lipschitz constant of $S_\tau$, note that its derivative in the first argument is $\frac{\partial S_\tau(\gamma, y)}{\partial \gamma} = \mathbbm{1}[y \leq \gamma] - \frac{\tau}{1-\tau} \mathbbm{1}[y > \gamma]$. Thus $S_\tau$ has Lipschitz constant $L_{S_\tau} \leq \sup_{\gamma^*, y^*} \left|\frac{\partial S_\tau(\gamma^*, y^*)}{\partial \gamma} \right| = \max \{1, \frac{\tau}{1-\tau}\}$.


Now we need to settle on a strictly increasing (in the first argument) identification function $V_\tau$ for the $\tau$-quantile $q_\tau$ and investigate its Lipschitz properties. Specifically, let us use the standard quantile id function defined as $V_\tau(\gamma, P) := \Pr_{y \sim P}[y \leq \gamma] - \tau$ for all $\gamma$ and all $P \in \p$. Evidently, $V_\tau(\cdot, P)$ is just the CDF of $P$ shifted by $\tau$. Thus, by assuming that all distributions in $\p$ have a strictly increasing CDF, we ensure that $V_\tau$ is strictly increasing in $\gamma$. 

To conveniently quantify the Lipschitzness of $V_\tau$, assume that it is differentiable in $\gamma$: this is equivalent to all $P \in \p$ having a well-defined PDF $pdf_P$, which will then be the derivative of $V_\tau(\cdot, P)$: namely, $\frac{\partial V_\tau(\gamma, P)}{\partial \gamma} = pdf_P(\gamma)$. Therefore, enforcing a Lipschitz and an anti-Lipschitz constant on $V_\tau$ simply translates to assuming an upper and a lower bound on the PDF of the distributions in the underlying family $\p$. Indeed, if we now assume that for all $P \in \p$, the PDF satisfies $0 < M_1 \leq pdf_P(y) \leq M_2 < \infty$ for all $y \in [0, 1]$, this gives us that $V_\tau$ is $M_2$-Lipschitz and $M_1$-anti-Lipschitz.

Plugging the above Lipschitz and anti-Lipschitz bounds on $S_\tau$ and $V_\tau$ into Theorem~\ref{thm:bayes}, we thus obtain the following joint (quantile, CVaR) multicalibration result:

\begin{theorem}[Joint multicalibration of ($\tau$-quantile, CVaR$_\tau$)] Fix any constants $0 < M_1 < M_2$, and take any family $\p$ of probability distributions over $[0, 1]$ such that each $P \in \p$ has a strictly increasing CDF and a well-defined density function $\text{pdf}_P$ satisfying $M_1 \leq \text{pdf}_P(y) \leq M_2$ for all $y \in [0, 1]$.

Fix any target coverage level $\tau \in [0, 1]$, and any group structure $\g \subseteq 2^\x$ on the dataset. Pick a discretization $m \geq 1$.
Set $\alpha^0 = \frac{4 M_2^2}{m}$ and $\alpha^1 = \frac{4}{m}$. Let $\alpha^1_* = \frac{8}{m} ((M_1 M_2 \max\{ 1, \tau/(1-\tau) \})^2 + 1)$.

Then, by appropriately instantiating \texttt{JointMulticalibration} (Algorithm~\ref{alg:conditional}), 
we can compute a \[ \left(\frac{4 M_2^2}{m}, \frac{8}{m} \left( \left(M_1 M_2 \max \left\{ 1, \frac{\tau}{1-\tau} \right\} \right)^2 + 1 \right) \right)-\text{approximately jointly $\g$-multicalibrated predictor}\] 
$f = (f^0, f^1)$ for the pair $(\tau\text{-quantile}, \text{CVaR}_\tau)$, after at most $O \left(\frac{m^4}{M_2} \right)$ updates to the joint predictor $f$.
\end{theorem}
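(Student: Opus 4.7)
The plan is to invoke Theorem~\ref{thm:bayes} on the specific Bayes pair $(q_\tau, \text{CVaR}_\tau)$, identifying all relevant Lipschitz constants from the hypotheses on $\p$, and then to transcribe the resulting guarantees. Conceptually all heavy lifting has already been done in Theorem~\ref{thm:bayes}; what remains is verification of its hypotheses with the constants the theorem statement claims.

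First, I would recall the fact cited just before the theorem: the pair $(q_\tau,\text{CVaR}_\tau)$ is a Bayes pair with respect to the rescaled pinball scoring function $S_\tau(\gamma,y) = \gamma + \tfrac{1}{1-\tau}(y-\gamma)_+$, which is strictly consistent for the $\tau$-quantile. To apply Theorem~\ref{thm:bayes} I must verify (1) that $S_\tau$ is $L_{S_\tau}$-Lipschitz in its first argument and (2) that there is an identification function $V_\tau$ for $q_\tau$ that is strictly increasing, $L$-Lipschitz, and $L_a$-anti-Lipschitz in its first argument. For (1), the subgradient of $S_\tau(\cdot,y)$ is $\mathbbm{1}[y\le \gamma] - \tfrac{\tau}{1-\tau}\mathbbm{1}[y>\gamma]$, bounded in absolute value by $\max\{1,\tau/(1-\tau)\}$, so $L_{S_\tau} = \max\{1,\tau/(1-\tau)\}$.

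For (2), I would take the canonical quantile identification function $V_\tau(\gamma,P) := \Pr_{y\sim P}[y\le \gamma]-\tau$, which is simply the CDF of $P$ shifted down by $\tau$. Under the theorem's hypothesis that every $P\in\p$ has a strictly increasing CDF and density $\text{pdf}_P$ satisfying $M_1 \le \text{pdf}_P(y) \le M_2$, the derivative $\partial V_\tau(\gamma,P)/\partial\gamma = \text{pdf}_P(\gamma)$ is bounded in $[M_1,M_2]$. This immediately yields that $V_\tau(\cdot,P)$ is strictly increasing, $M_2$-Lipschitz, and $M_1$-anti-Lipschitz in $\gamma$ in the sense required by Assumption~\ref{assumption:antilipschitz}.

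With these constants in hand, the final step is a direct substitution into Theorem~\ref{thm:bayes} with $L := M_2$, $L_a := M_1$, and $L_S := \max\{1,\tau/(1-\tau)\}$. The output values $\alpha^0 = 4L^2/m = 4M_2^2/m$ and $\alpha^1_* = 8((L\,L_a\,L_S)^2 + 1)/m$ then match exactly the constants in the theorem statement, and the iteration bound $O(m^4/L) = O(m^4/M_2)$ follows from the corresponding bound in Theorem~\ref{thm:bayes}. The main (minor) obstacle is purely bookkeeping: being careful that the identification function convention the authors adopted matches the form of Assumption~\ref{assumption:antilipschitz} and that the PDF bounds translate cleanly into the $L$ and $L_a$ constants. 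Once these are in place, no new ideas are needed beyond instantiating Algorithm~\ref{alg:conditional} with $V^0 = V_\tau$ and $V^1_\gamma(\gamma^B,y) = \gamma^B - S_\tau(\gamma,y)$, as prescribed by Theorem~\ref{thm:bayes}.
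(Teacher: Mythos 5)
Your proposal is correct and follows exactly the paper's own argument: identify $(q_\tau,\text{CVaR}_\tau)$ as a Bayes pair for the rescaled pinball loss, read off $L_{S_\tau}=\max\{1,\tau/(1-\tau)\}$ from the subgradient, use the CDF/PDF bounds to get $M_2$-Lipschitzness and $M_1$-anti-Lipschitzness of the standard quantile identification function, and substitute into Theorem~\ref{thm:bayes}. Nothing to add.
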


\subsubsection{Sensibility for calibration of distortion risk measures}

We begin by formally stating the result of~\cite{kou2016distortion} and \cite{wang2015distortion} that we will use. It shows that out of all distortion risk measures, the only ones that have convex level sets across the family of all finite-support distributions are: (1) means, (2) quantiles, and (3) two other risk measures which are quantile variants; here are the corresponding definitions.

\begin{definition}
\label{def:preciserm}
Consider any family $\p$ of probability distributions. For any distribution $P \in \p$, let its CDF (which need not be strictly increasing or continuous) be denoted $F_P$. We define the following distributional properties over $\p$:
\begin{enumerate}
    \item For any $\tau \in [0, 1]$, the $\tau$-quantile is defined by: 
    \[q_\tau(P) = \inf\{y: F_P(y) \geq \tau\} \quad \text{ for } P \in \p.\]
    \item For any $\tau \in [0, 1]$ and $c \in [0, 1]$, define the property: 
    \[q^1_{\tau, c}(P) := c \cdot \inf \{y: F_P(y) \geq \tau\} + (1-c) \cdot \inf\{y: F_P(y) > \tau\} \quad \text{ for } P \in \p.\]
    \item For any $\tau \in [0, 1]$ and $c \in [0, 1]$, define the property: 
    \[q^2_{\tau, c}(P) := c \cdot \inf \{y: F_P(y) > 0\} + (1-c) \cdot \inf\{y: F_P(y) = 1\} \quad \text{ for } P \in \p.\]
\end{enumerate}
\end{definition}

Observe that (1) $q^2_{\tau, c}$ is just a convex combination of the $0$\% quantile and the $100$\% quantile of the distribution; and (2) $q^1_{\tau, c}$ in fact \emph{is} (for all $c \in [0, 1]$) the $\tau$-quantile subject to the CDF $F_P$ being \emph{strictly} increasing.

\cite{kou2016distortion} showed that distribution means, together with the three (parametric) properties listed in Definition~\ref{def:preciserm}, are the only distortion risk measures with convex level sets. The proof of this result was later simplified and refined by \cite{wang2015distortion}, who showed that this negative result holds even over the family of distributions with at most 3 points in the support. 

\begin{theorem}[Characterization of distortion risk measures with convex level sets \citep{kou2016distortion, wang2015distortion}]
\label{thm:dist_conv}
    Let $\p_\text{3}$ be the set of all probability distributions supported on at most $3$ real-valued points.
    Let $\p_\text{bd}$ be the set of all bounded distributions over the reals with a well-defined PDF. 
    Let $\p$ be any family of distributions over the reals such that either $\p \supseteq \p_\text{3}$, or $\p \supseteq \p_\text{bd}$.

    Consider any distortion risk measure $\Gamma: \p \to \R$. Then, $\Gamma$ violates the convex level sets assumption on $\p$, unless it is one of the following properties:
    \begin{enumerate}
        \item The distributional mean;
        \item A $\tau$-quantile $q_\tau$, for some $\tau \in [0, 1]$;
        \item The property $q^1_{\tau, c}$, for some $\tau, c \in [0, 1]$;
        \item The property $q^2_{\tau, c}$, for some $\tau, c \in [0, 1]$.
    \end{enumerate}
\end{theorem}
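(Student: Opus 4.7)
My plan is to treat this as a purely structural/analytic result about the distortion function $h$, exploiting the rich but highly constrained nature of the functional $\Gamma^h$ when evaluated on discrete distributions. Since $\p_3 \supseteq $ (all 2-point distributions), I have plenty of room to mix and match distributions with matching $\Gamma^h$-values and see what the CxLS requirement forces on $h$.

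The first step is to get an explicit handle on $\Gamma^h$ for small-support distributions. A direct computation shows that for $P = p\delta_a + (1-p)\delta_b$ with $0 < a < b$, one has $\Gamma^h(P) = a\bigl(1 - h(1-p)\bigr) + b\, h(1-p)$, and for a 3-point distribution $P = p_1\delta_{a_1} + p_2\delta_{a_2} + p_3\delta_{a_3}$ with $0 < a_1 < a_2 < a_3$, one gets $\Gamma^h(P) = a_1\bigl(1 - h(p_2+p_3)\bigr) + a_2\bigl(h(p_2+p_3) - h(p_3)\bigr) + a_3\, h(p_3)$. So $\Gamma^h$ is a convex combination of the support points with weights determined by finite differences of $h$. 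This reduces the problem to a functional-equation question about $h: [0,1] \to [0,1]$ nondecreasing with $h(0)=0, h(1)=1$.

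Next I would extract the functional equation. Fix a target level $v$, and parameterize pairs of 2-point distributions $P_1, P_2$ with $\Gamma^h(P_i) = v$; the CxLS hypothesis says that every mixture $\lambda P_1 + (1-\lambda) P_2$ (which is generically a 3- or 4-point distribution) also has $\Gamma^h = v$. Plugging the formulas above into this constraint produces an identity that $h$ must satisfy for \emph{all} admissible parameter choices $(a_i, b_i, p_i, \lambda)$. The natural split is by the structure of the level set: either $h$ is injective on some open subinterval (which by the mixture identity propagates affineness and ultimately forces $h(u) = u$, i.e.\ the mean), or $h$ has a flat piece (which by the mixture identity forces $h$ to be an indicator/step function of one of the three quantile-variant forms, depending on the location of its jump and the behavior of $h$ at the endpoints). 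The four listed properties correspond exactly to these four possibilities: $h(u)=u$ yields the mean, a single jump from $0$ to $1$ at $1-\tau$ yields $q_\tau$, a jump with a specific interpolation at the discontinuity gives $q^1_{\tau,c}$, and a two-jump structure at $0$ and $1$ with weight $c$ gives $q^2_{\tau,c}$.

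The easy converse direction --- that all four listed properties \emph{do} have CxLS --- can be dispatched by direct check (mean is linear so its level sets are affine hyperplanes; the three quantile variants are straightforward from the definition of $q_\tau$ as a level set of the CDF). The main obstacle is the classification step: carefully enumerating the ways in which the mixture identity can be satisfied \emph{without} $h$ being affine, and showing that the only possibilities are the three quantile variants. This requires handling the measure-theoretic subtleties of nondecreasing functions (where $h$ is allowed to have plateaus and jumps, and the flexibility at a jump point is what gives rise to the parameter $c$), and it is here that the restriction to $\p_3$ (as opposed to $\p_2$, which would be too weak) or to $\p_{\text{bd}}$ (which via approximation arguments yields the same constraints) becomes essential. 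Once the functional form of $h$ is pinned down, matching it against Definition~\ref{def:preciserm} completes the proof.
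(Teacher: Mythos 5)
The paper does not prove Theorem~\ref{thm:dist_conv}: it is imported verbatim as a cited result of \cite{kou2016distortion} and \cite{wang2015distortion}, so there is no internal proof to compare your attempt against. What you have written is essentially a plan for re-deriving that external result from scratch.

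As a plan it is directionally consistent with how the cited works proceed: both reduce $\Gamma^h$ on discrete distributions to a convex combination of support points with weights given by finite differences of $h$ (your two- and three-point formulas are correct), and both then turn the CxLS hypothesis on mixtures of such distributions into a functional constraint on $h$. But the step you label ``the main obstacle'' is where the entire content of the theorem lives, and your sketch does not resolve it. Two specific issues. First, the dichotomy ``$h$ is injective on some open subinterval'' versus ``$h$ has a flat piece'' is neither exhaustive nor mutually exclusive for a general nondecreasing $h$ on $[0,1]$ (it can be strictly increasing on one interval and flat on another, or be a singular continuous function that is injective nowhere yet has no flat piece); the actual classification requires showing that CxLS globally rigidifies $h$ into one of exactly two shapes --- affine, or a (possibly degenerate) step --- and this global rigidity is the nontrivial part. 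Second, the map from step-function shapes of $h$ to the three quantile variants $q_\tau$, $q^1_{\tau,c}$, $q^2_{\tau,c}$ is asserted but not established; in particular, the parameter $c$ in $q^1_{\tau,c}$ and $q^2_{\tau,c}$ does not come from ``interpolation at the discontinuity'' of $h$ in any immediate sense (a distortion function $h$ takes a single value at each point), but rather from carefully analyzing which generalized inverse of $F_P$ the Lebesgue--Stieltjes integral $\int h(1-F_P)\,dx$ selects when $h$ has a jump coinciding with a flat of $F_P$, and from the boundary behavior of $h$ at $0$ and $1$. Your note that the converse direction (the four listed properties actually having CxLS) is ``easy'' is also unnecessary: the theorem as stated is one-directional and does not assert the converse. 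In short, the outline is plausible and matches the literature's strategy, but the classification --- the theorem's whole substance --- remains unproven in your sketch, so this does not constitute a proof.
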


Now, our Theorem~\ref{thm:nocxls_notsensible} lets us immediately conclude that for any $\p$ as in Theorem~\ref{thm:dist_conv}, no distortion risk measure --- other than means, quantiles, or the two parametric properties $q^1_{\tau, c}$ or $q^2_{\tau, c}$ ---  is sensible for calibration over any $\p$-compatible family of dataset distributions $\d$ that includes all the $\p$-compatible $2$-point dataset distributions. To formally restate this:

\begin{theorem}[Sensibility for calibration for distortion risk measures]
    Let $\p_\text{3}$ be the set of all probability distributions supported on at most $3$ real-valued points. Let $\p_\text{bd}$ be the set of all bounded distributions over the reals with a well-defined PDF. 
    Let $\p$ be any convex space of distributions over the reals such that either $\p \supseteq \p_\text{3}$, or $\p \supseteq \p_\text{bd}$.

    Consider any \emph{distortion risk measure} $\Gamma: \p \to \R$, and any family $\d$ of $\p$-compatible dataset distributions that includes all the $\p$-compatible $2$-point dataset distributions.

    Then $\Gamma$ is \emph{not sensible for calibration over $\d$}, unless $\Gamma$ is one of the following properties:
    \begin{enumerate}
        \item The distributional mean;
        \item A $\tau$-quantile $q_\tau$, for some $\tau \in [0, 1]$;
        \item The property $q^1_{\tau, c}$, for some $\tau, c \in [0, 1]$;
        \item The property $q^2_{\tau, c}$, for some $\tau, c \in [0, 1]$.
    \end{enumerate}
\end{theorem}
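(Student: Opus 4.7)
The proof is essentially a direct two-step combination of results that the authors have already developed, so the plan is mostly about naming the right ingredients in the right order rather than doing substantial new work. The plan is to contrapose: fix any distortion risk measure $\Gamma$ that is \emph{not} one of the four exceptional properties (mean, $q_\tau$, $q^1_{\tau,c}$, $q^2_{\tau,c}$), and show $\Gamma$ fails to be sensible for calibration on $\d$.

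First, I would verify that the hypothesis of Theorem~\ref{thm:dist_conv} applies to the fixed $\p$: by assumption $\p$ is convex over $\R$ and contains either $\p_3$ or $\p_\text{bd}$, which is exactly what that characterization requires. Then, since $\Gamma$ is a distortion risk measure not on the exceptional list, Theorem~\ref{thm:dist_conv} yields that $\Gamma$ violates the convex level sets (CxLS) property on $\p$---i.e., there is some $\gamma \in \range_\Gamma$, distributions $P_1, P_2 \in \p$ with $\Gamma(P_1) = \Gamma(P_2) = \gamma$, and $\lambda \in [0,1]$ such that $\Gamma(\lambda P_1 + (1-\lambda) P_2) \neq \gamma$. (In fact, Wang--Ziegel's refinement supplies such a witness supported on at most $3$ points, which is stronger than we need.)

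Second, I would invoke Theorem~\ref{thm:nocxls_notsensible}. Its hypotheses are: $\p$ is convex (given), and $\d$ contains all $\p$-compatible $2$-point dataset distributions (given). Its conclusion is precisely that any property $\Gamma: \p \to \R$ without convex level sets on $\p$ fails to be sensible for calibration over $\d$. Applying this to our $\Gamma$ finishes the proof. The only thing worth double-checking is that there is no measurability subtlety when embedding a $2$-point distribution over $\x$ into $\d$, but this is already built into the statement of Theorem~\ref{thm:nocxls_notsensible}, which explicitly constructs such a $2$-point dataset.

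I do not anticipate a serious obstacle: the theorem is really a packaging result, marrying the elicitation-theoretic characterization of distortion risk measures with convex level sets (Theorem~\ref{thm:dist_conv}) to the paper's own bridge from non-CxLS to non-sensibility (Theorem~\ref{thm:nocxls_notsensible}). The mild care needed is only to make sure the exceptional list is stated identically to the one produced by Theorem~\ref{thm:dist_conv}, so that the contrapositive lines up exactly; beyond that, the proof is a two-line syllogism.
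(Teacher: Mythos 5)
Your proposal is correct and follows exactly the same two-step argument the paper uses: invoke Theorem~\ref{thm:dist_conv} to conclude that any non-exceptional distortion risk measure violates CxLS on $\p$, then apply Theorem~\ref{thm:nocxls_notsensible} (whose hypotheses on $\p$ and $\d$ you correctly verify) to conclude it is not sensible for calibration.
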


\subsubsection*{Acknowledgments} We warmly thank Christopher Jung and Arpit Agarwal for enlightening conversations at an early stage of this work. This research was supported in part by the Simons Collaboration on the Theory of Algorithmic Fairness, and NSF grants FAI-2147212 and CCF-2217062.

\bibliography{sample.bib}
\bibliographystyle{plainnat}

\newpage
\appendix

\section{Convergence Guarantees for Batch Multicalibration (Proof of Theorem~\ref{thm:batch})}
\label{app:batchalgorithm}

Our convergence analysis of Algorithm~\ref{alg:batch} will utilize the following natural \emph{potential function}:
\begin{definition}
The \emph{potential} for Algorithm \ref{alg:batch} at round $t$ is:
\[\Phi_t := \E_{(x, y) \sim D} [S(f_t(x), y)] = \E_{x \sim X}[S(f_t(x), Y_x)],\] 
where $f_t: \x \to \R$ is the property predictor at the beginning of iteration $t$ of the algorithm and $S$ is a strictly consistent scoring function for property $\Gamma$ satisfying Assumption \ref{assumption:offline}.  
\end{definition}

First, we prove the following helper Lemma that bounds the change in $S$ --- the potential function of Algorithm~\ref{alg:batch} --- in the scenario where an incorrect prediction $\gamma$ for the property value $\Gamma(Y)$ is corrected on a label distribution $Y$. We will later use this fact to bound the progress of the algorithm after every update to the predictor $f$ for $\Gamma$.
\begin{lemma} \label{lemma:score_bounds}
Consider any property $\Gamma: \p \to \R$. Suppose $S: \range_\Gamma \times \y \to \R$ and $V: \range_\Gamma \times \y \to \R$ with $V(\gamma, y) = \frac{\partial S(\gamma, y)}{\partial \gamma}$ are a strictly consistent scoring function and the corresponding identification function for $\Gamma$ that satisfy Assumption~\ref{assumption:offline}. Then for any $\gamma \in \range_\Gamma$ and any label distribution $Y \in \p$, letting $L_Y$ be the Lipschitz constant of $V(\cdot, Y)$, it holds that:
\[\frac{(V(\gamma, Y))^2}{2 L_Y} \leq S(\gamma, Y) - S(\Gamma(Y), Y) \leq
V(\gamma, Y) (\gamma - \Gamma(Y)) - \frac{(V(\gamma, Y))^2}{2 L_Y}.\]
\end{lemma}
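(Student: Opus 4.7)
The plan is to reduce both inequalities to careful estimates of the integral $\int_{\Gamma(Y)}^\gamma V(s, Y)\, ds$. Fix $Y \in \p$ and write $\gamma^* := \Gamma(Y)$, $v(s) := V(s, Y)$, and $g(s) := S(s, Y)$. Since $V(\cdot, y) = \frac{\partial S(\cdot, y)}{\partial \gamma}$ for every $y$, applying the fundamental theorem of calculus pointwise in $y$ and then taking expectation gives $g(\gamma) - g(\gamma^*) = \int_{\gamma^*}^\gamma v(s)\, ds$. The function $v$ inherits three crucial properties: (a) $v(\gamma^*) = 0$, by definition of an identification function; (b) $v$ is strictly increasing, by Assumption~\ref{assumption:offline}; and (c) $v$ is $L_Y$-Lipschitz. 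Both sides of the lemma depend on $v(\gamma)$ and $\gamma - \gamma^*$ only through the product $v(\gamma)(\gamma - \gamma^*)$ and the square $v(\gamma)^2$, and by (a)--(b) these two quantities always carry the same sign. I will therefore assume $\gamma \geq \gamma^*$ without loss of generality; the case $\gamma < \gamma^*$ follows by the mirror substitution $s \mapsto 2\gamma^* - s$, which preserves properties (a)--(c).

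Under $\gamma \geq \gamma^*$, properties (a)--(c) confine $v \geq 0$ on $[\gamma^*, \gamma]$ within a triangular envelope: Lipschitzness anchored at $\gamma^*$ gives $v(s) \leq L_Y(s - \gamma^*)$, while monotonicity bounded above by the value at $\gamma$ gives $v(s) \leq v(\gamma)$. These two bounds meet at the crossover point $s_0 := \gamma^* + v(\gamma)/L_Y$, which lies in $[\gamma^*, \gamma]$; the non-trivial inclusion $s_0 \leq \gamma$ is precisely Lipschitzness of $v$ applied between $\gamma^*$ and $\gamma$ (combined with $v(\gamma^*) = 0$). Splitting $\int_{\gamma^*}^\gamma v(s)\, ds$ at $s_0$ is the main idea for both inequalities.

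For the upper bound, on $[\gamma^*, s_0]$ I apply $v(s) \leq L_Y(s - \gamma^*)$ to get a contribution at most $\frac{L_Y}{2}(s_0 - \gamma^*)^2 = \frac{v(\gamma)^2}{2 L_Y}$; on $[s_0, \gamma]$ I apply $v(s) \leq v(\gamma)$ to get at most $v(\gamma)(\gamma - s_0) = v(\gamma)(\gamma - \gamma^*) - \frac{v(\gamma)^2}{L_Y}$. Summing yields exactly $v(\gamma)(\gamma - \gamma^*) - \frac{v(\gamma)^2}{2 L_Y}$, the desired upper bound.

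For the lower bound, I instead use the symmetric Lipschitz inequality anchored at $\gamma$: on $[s_0, \gamma]$, $v(s) \geq v(\gamma) - L_Y(\gamma - s) \geq 0$, while on $[\gamma^*, s_0]$ I just use $v(s) \geq 0$. Integrating $v(\gamma) - L_Y(\gamma - s)$ over $[s_0, \gamma]$ (a right triangle of base $v(\gamma)/L_Y$ and height $v(\gamma)$) gives exactly $\frac{v(\gamma)^2}{2 L_Y}$, completing the lower bound. The only bookkeeping obstacle --- and the technical crux of the whole argument --- is verifying that $s_0 \in [\gamma^*, \gamma]$ so that the two-sided splits are non-vacuous; everything else is routine integration against the two triangular envelopes.
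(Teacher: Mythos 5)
Your proof is essentially the same in spirit as the paper's: both argue that, on the interval between $\Gamma(Y)$ and $\gamma$, the identification function is sandwiched by linear envelopes coming from Lipschitzness and monotonicity, and both obtain the bounds by integrating the extremal (piecewise-linear) envelopes. The paper packages this as a standalone claim about $L$-Lipschitz increasing functions on an interval and applies it twice, handling $\gamma \geq \Gamma(Y)$ and $\gamma < \Gamma(Y)$ separately; you instead decompose the integral directly and dispatch the case $\gamma < \Gamma(Y)$ with a reflection argument. That reflection WLOG is a nice streamlining, though to preserve monotonicity you actually need $\tilde{v}(s) := -v(2\gamma^*-s)$ (the sign flip is implicit in your ``mirror'' but worth writing out).

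There is, however, a concrete slip in your lower bound. You integrate $v(\gamma) - L_Y(\gamma-s)$ over $[s_0,\gamma]$ with $s_0 := \gamma^* + v(\gamma)/L_Y$ and claim this is a right triangle of base $v(\gamma)/L_Y$. But the length of $[s_0,\gamma]$ is $\gamma - \gamma^* - v(\gamma)/L_Y$, not $v(\gamma)/L_Y$, and $v(\gamma) - L_Y(\gamma-s)$ is not nonnegative on all of $[s_0,\gamma]$ unless $v(\gamma) \geq L_Y(\gamma-\gamma^*)/2$. Carrying out the integral over $[s_0,\gamma]$ literally gives
\[
\int_{s_0}^{\gamma}\bigl(v(\gamma) - L_Y(\gamma-s)\bigr)\,ds \;=\; \frac{v(\gamma)^2}{2L_Y} - \frac{L_Y}{2}\Bigl(\gamma-\gamma^*-\tfrac{2v(\gamma)}{L_Y}\Bigr)^2,
\]
which falls \emph{short} of $\frac{v(\gamma)^2}{2L_Y}$ whenever $s_0 \neq \gamma - v(\gamma)/L_Y$. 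The intended split point for the lower bound is $s_1 := \gamma - v(\gamma)/L_Y$, i.e.\ the zero crossing of the lower envelope $v(\gamma)-L_Y(\gamma-s)$, not the intersection point $s_0$ of the two upper envelopes. With $s_1$ in place (and noting $s_1 \geq \gamma^*$ by Lipschitzness), your triangle description is exactly right and the bound follows; the upper-bound half of your argument is already correct as written.
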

\begin{proof}
We will prove this result with the help of the following claim.
\begin{claim} \label{claim:incrlipshitz}
For any $L$-Lipschitz increasing function $h$ defined on any interval $[a, b]$, it holds that:
\[h(a) (b-a) + \frac{(h(b) - h(a))^2}{2 L} \leq \int_a^b h(t) dt \leq h(b)(b-a) - \frac{(h(b) - h(a))^2}{2 L}.\]
\end{claim}

\begin{proof}
Under these constraints on $h$, the largest value of the integral $\int_{a}^b h(t) dt$ would be obtained if $h(t)$ first increased from $h(a)$ to $h(b)$ for $t \in [a, t']$, where $t' \in [a, b]$ is defined by $(t' - a) L = h(b) - h(a)$, at the fastest rate possible (that is, at the rate $L$), and stayed constant at the value $h(b)$ for $t \in [t', b]$. The integral of this piecewise linear function on $[a, b]$ gives the upper bound.

Conversely, the smallest value of the integral $\int_{a}^b h(t) dt$ would be obtained if $h(t)$ first stayed constant at the value $h(a)$ for $t \in [a, t']$, where $t'$ is defined so that $(b - t') L = h(b) - h(a)$, and then increased from $h(a)$ to $h(b)$ at the fastest rate possible (that is, at the rate $L$) for $t \in [t', b]$. Integrating this function on $[a, b]$ gives the claimed lower bound.
\end{proof}

As V is a derivative of S, by the fundamental theorem of calculus we get: $S(\gamma, Y) - S(\Gamma(Y), Y) = \int_{\Gamma(Y)}^\gamma V(t, Y) dt$.

First assume $\gamma \geq \Gamma(Y)$. By Assumption~\ref{assumption:offline}, $V$ continuously increases from $\Gamma(Y)$ to $\gamma$, and has Lipschitz constant $L_Y$. Then, by Claim~\ref{claim:incrlipshitz}, and using that $V(\Gamma(Y), Y) = 0$, we obtain 
\[\frac{(V(\gamma, Y))^2}{2 L_Y} \leq \int_{\Gamma(Y)}^\gamma V(t, Y) dt \leq V(\gamma, Y) (\gamma - \Gamma(Y)) - \frac{(V(\gamma, Y))^2}{2 L_Y}.\]

Now assume $\gamma < \Gamma(Y)$. Then, we have:
\[ S(\gamma, Y) - S(\Gamma(Y), Y) = \int_{\Gamma(Y)}^\gamma V(t, Y) dt = - \int_\gamma^{\Gamma(Y)} V(t, Y) dt.\]
By Assumption~\ref{assumption:offline}, $V$ continuously increases from $\gamma$ to $\Gamma(Y)$, and has Lipschitz constant $L_Y$.
By Claim \ref{claim:incrlipshitz}, we have:
$ - V(\Gamma(Y), Y) (\Gamma(Y) - \gamma) + \tfrac{(V(\Gamma(Y), Y) - V(\gamma, Y))^2}{2 L_Y}$
$ 
\leq - \int_\gamma^{\Gamma(Y)} V(t, Y) dt 
\leq - V(\gamma, Y) (\Gamma(Y) - \gamma) - \tfrac{(V(\Gamma(Y), Y) - V(\gamma, Y))}{2 L_Y},$
which from $V(\Gamma(Y), Y) = 0$ simplifies to:
$\frac{(V(\gamma, Y))^2}{2 L_Y} 
\leq - \int_\gamma^{\Gamma(Y)} V(t, Y) dt $
$
\leq V(\gamma, Y) (\gamma - \Gamma(Y)) - \frac{(V(\gamma, Y))^2}{2 L_Y}.$ 
Thus, we have shown our bound for both cases $\gamma \geq \Gamma(Y)$ and $\gamma < \Gamma(Y)$.
\end{proof}

Now, we are ready to prove Theorem~\ref{thm:batch}, which gives the convergence rate for Algorithm~\ref{alg:batch}. We restate the theorem here for convenience.

\thmbatchalgorithm*

\begin{proof}
Suppose the algorithm has not halted at round $t$. Thus, $f_t$ does not yet satisfy $\alpha$-approximate $(\g,V)$-multicalibration, so by the pigeonhole principle there is a pair $(G, \gamma) \in \g \times [1/m]$ such that on the set $Q_t := \{x \in \x: x \in g, f_t(x) = \gamma\}$: 
\begin{equation}\label{eq:miscalibration}
|V(\gamma, Y_{Q_t})| \geq \sqrt{\frac{\alpha/m}{\Pr_{x \sim X} [x \in Q_t]}}.
\end{equation}

Now, letting $\gamma' = \argmin_{\gamma'' \in [1/m]} |V(\gamma'', Y_{Q_t})|$, the algorithm will update $f_t \to f_{t+1}$ via the rule:
\[f_{t+1}(x) := \1[x \not \in Q_t] \cdot f_t(x) + \1[x \in Q_t] \cdot \gamma'.\]
From the definition of the potential function values $\Phi_t$ and $\Phi_{t+1}$, we have 
\begin{align*}
    \Phi_{t+1} &= \Pr_{x \sim X}[x \in Q_t] \E_{x \in X}[S(f_{t+1}(x), Y_x) | x \in Q_t] 
    + \Pr_{x \sim X}[x \not\in Q_t] \E_{x \in X}[S(f_{t+1}(x), Y_x) | x \not \in Q_t]
    \\ &= \Pr_{x \sim X}[x \in Q_t] \E_{x \in X}[S(f_{t+1}(x), Y_x) | x \in Q_t] 
    + \Pr_{x \sim X}[x \not\in Q_t] \E_{x \in X}[S(f_t(x), Y_x) | x \not \in Q_t]
    \\ &= \Phi_t + \Pr_{x \sim X}[x \in Q_t] \left( \E_{x \in X}[S(f_{t+1}(x), Y_x) - S(f_t(x), Y_x) | x \in Q_t] \right)
    \\ &= \Phi_t + \Pr_{x \sim X}[x \in Q_t] \left( \E_{x \in X}[S(\gamma', Y_x) - S(\gamma, Y_x) | x \in Q_t] \right)
    \\ &= \Phi_t + \Pr_{x \sim X}[x \in Q_t] \left( S(\gamma', Y_{Q_t}) - S(\gamma, Y_{Q_t}) \right).
\end{align*}
Here Step 2 follows because $f_t(x) = f_{t+1}(x)$ for all $x$ outside $Q_t$, and Step 5 uses the fact that $f_t$ and $f_{t+1}$ are both constant on $Q_t$ to rewrite expected scores of $f_t, f_{t+1}$ over $x \sim X$ simply as the scores of the predictor values $\gamma, \gamma'$ with respect to the mixture distribution $Y_{Q_t}$ of labels over the region $Q_t$.

From here, we have:
\begin{align*}
    \Phi_{t+1} - \Phi_t &= \Pr_{x \sim X}[x \in Q_t] \Big( (S(\gamma', Y_{Q_t}) - S(\Gamma(Y_{Q_t}), Y_{Q_t})) - (S(\gamma, Y_{Q_t}) - S(\Gamma(Y_{Q_t}), Y_{Q_t})) \Big)
    \\ & \leq \Pr_{x \sim X}[x \in Q_t] \Big( V(\gamma', Y_{Q_t}) (\gamma' - \Gamma(Y_{Q_t})) - \frac{(V(\gamma', Y_{Q_t}))^2}{2 L}  - \frac{(V(\gamma, Y_{Q_t}))^2}{2 L} \Big)
    \\ & \leq \Pr_{x \sim X}[x \in Q_t] \Big( V(\gamma', Y_{Q_t}) (\gamma' - \Gamma(Y_{Q_t})) - \frac{(V(\gamma', Y_{Q_t}))^2}{2 L} \Big)  - \frac{\alpha}{2 L m}
    \\ & \leq \Pr_{x \sim X}[x \in Q_t] \Big( V(\gamma', Y_{Q_t}) (\gamma' - \Gamma(Y_{Q_t})) \Big)  - \frac{\alpha}{2 L m}
    \\ & \leq V(\gamma', Y_{Q_t}) (\gamma' - \Gamma(Y_{Q_t})) - \frac{\alpha}{2 L m}
    \\ & \leq L | \gamma' - \Gamma(Y_{Q_t}) | \cdot |\gamma' - \Gamma(Y_{Q_t})| - \frac{\alpha}{2 L m}
    \\ & \leq \frac{L}{m^2} - \frac{\alpha}{2 L m}.
\end{align*}
The equality follows by introducing an added and subtracted term $S(\Gamma(Y_{Q_t}), Y_{Q_t})$.
The 1st inequality applies the upper and lower bound of Lemma~\ref{lemma:score_bounds} to the two score differences.
The 2nd inequality follows by the $\alpha$-miscalibration condition of Equation~\ref{eq:miscalibration}. The 3rd inequality drops the nonpositive term $- \Pr_{x \sim X}[x \in Q_t]\frac{(V(\gamma', Y_{Q_t}))^2}{2 L}$. The 4th inequality drops the factor $\Pr_{x \sim X}[x \in Q_t] \leq 1$. The 5th inequality is because by the Lipschitzness of $V$, we have $|V(\gamma', Y_{Q_t})| = |V(\gamma', Y_{Q_t}) - V(\Gamma(Y_{Q_t}), Y_{Q_t})| \leq L |\gamma' - \Gamma(Y_{Q_t})|$. The 6th inequality is because $\gamma'$ must be at most $\frac{1}{m}$ away from the true property value $\Gamma(Y_{Q_t})$ on $Q_t$: farther grid points $\gamma''$ would result in a worse $|V(\gamma'', Y_{Q_t})|$ (and would thus contradict the choice of $\gamma'$ by the algorithm), by the structure of the $m$-discretization and the monotonic increase of $|V(\cdot, Y_{Q_t})|$ in both directions away from $\Gamma(Y_{Q_t})$.

Now setting $\alpha = \frac{4 L^2}{m}$, we get 
$\Phi_{t+1} - \Phi_t \leq \frac{L}{m^2} - \frac{\alpha}{2 L m} = -\frac{L}{m^2} = -\frac{L}{m^2}$,
and telescoping this over the rounds $t = 1, \ldots, T$, where $T$ is the total number of iterations before convergence, we obtain:
\[\Phi_T - \Phi_1 \leq -T \frac{L}{m^2}.\]
By assumption $\Phi_1 = C_\text{init}$ and $\Phi_T \geq C_\text{opt}$, so we have $T \frac{L}{m^2} \leq \Phi_1 - \Phi_T \leq C_\text{init} - C_\text{opt}$, and thus \[T \leq (C_\text{init} - C_\text{opt}) \frac{m^2}{L},\]
concluding the proof.
\end{proof}


\section{Finite Sample Guarantees for Batch Multicalibration}
\label{app:finitesamplebatch}

We have described Algorithm \ref{alg:batch} as if it has direct access to the underlying distribution $D$ (since it computes expectations of the identification function $V$ on the underlying distribution). In general we do \emph{not} have access to $D$ directly, and instead have access only to a sample $\hat{D} \sim D^n$ of $n$ points sampled i.i.d. from $D$. In practice, we would run the algorithm on the empirical distribution over the $n$ points in $\hat{D}$, and its guarantees would carry over to the underlying distribution $D$ from which the points were sampled. \cite{jung2022batch} proved this for the special case of quantiles (in which the identification function $V$ is the \emph{pinball loss}), but in fact their proof uses nothing other than the conditions in Assumption \ref{assumption:offline}. We state the more general version of the theorem here (implicit in \cite{jung2022batch}) and briefly sketch the argument. We note that this argument is to establish that Algorithm \ref{alg:batch} generalizes when used as an empirical risk minimization algorithm. An alternative means to obtain generalization bounds would be to follow the strategy of \cite{hebert2018multicalibration} and use techniques from adaptive data analysis to implement a statistical query oracle, and to then modify the algorithm so as to compute the quantities $V(\gamma,Q_t)$ only through this oracle---this would also work to give similar bounds. 

\begin{theorem}[Implicit in \cite{jung2022batch}]
\label{thm:gen1}
Fix a distribution $D \in \Delta \z$ and a property $\Gamma$ together with a bounded  identification function $V$. Suppose Algorithm \ref{alg:batch} is run using the empirical distribution on a dataset $\hat{D} \sim D^n$ consisting of $n$ i.i.d. samples from $D$. Then if Algorithm \ref{alg:batch} halts after $T$ rounds and returns a model $f_T$, with probability $1-\delta$ over the randomness of the data distribution, $f^T$ satisfies $\alpha$-approximate $(\g,V)$-multicalibration with respect to $D$ for:
\[\alpha = \frac{4L^2}{m}+O\left( \sqrt{\frac{\ln(1/\delta)+T\ln(m^2|\g|)m}{n}}+\frac{m\ln(1/\delta)+T\ln(m^2|\g|)}{n} \right)\]
\end{theorem}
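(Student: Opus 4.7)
The plan is to upgrade the population guarantee of Theorem~\ref{thm:batch} into a finite-sample guarantee by running Algorithm~\ref{alg:batch} as empirical risk minimization on $\hat{D}$ and transferring its multicalibration guarantee back to $D$ via uniform convergence. First, Theorem~\ref{thm:batch} applied directly to $\hat D$ yields that the returned $f_T$ is $\frac{4L^2}{m}$-approximately $(\g,V)$-multicalibrated with respect to the empirical distribution $\hat{D}$. So everything reduces to bounding, uniformly over the set of models the algorithm could conceivably output, the gap between the empirical and the population versions of the multicalibration error.

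Second, I would bound the cardinality of the hypothesis class $\mathcal{F}_T$ of models reachable in $T$ iterations. At each iteration the update is fully described by a triple $(\gamma,G,\gamma')\in[1/m]\times\g\times[1/m]$, so starting from a fixed $f_1$ with range $[1/m]$, one has $|\mathcal{F}_T|\leq (m^2|\g|)^T$. All models in $\mathcal{F}_T$ have range $[1/m]$, so the multicalibration error of any $f\in\mathcal{F}_T$ is determined by the $m|\g|$ pairs of quantities $A_f(\gamma,G):=\E_{(x,y)\sim D}[V(\gamma,y)\mathbbm{1}[f(x)=\gamma,x\in G]]$ and $B_f(\gamma,G):=\Pr_{(x,y)\sim D}[f(x)=\gamma,x\in G]$, since
\[
\sum_{\gamma}\Pr_D[f(x)=\gamma\mid x\in G](V(\gamma,Y_{f,\gamma,G}))^2 \;=\; \frac{1}{\mu(G)}\sum_{\gamma}\frac{A_f(\gamma,G)^2}{B_f(\gamma,G)}.
\]

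Third, I would obtain uniform concentration of the empirical versions $\hat A_f(\gamma,G)$ and $\hat B_f(\gamma,G)$ to $A_f(\gamma,G)$ and $B_f(\gamma,G)$. Since $V$ is bounded and the indicator is in $\{0,1\}$, Bernstein's inequality for each fixed $(f,\gamma,G)$ triple gives deviation on the scale $O\!\bigl(\sqrt{B_f(\gamma,G)\ln(1/\delta')/n}+\ln(1/\delta')/n\bigr)$. Union-bounding over $|\mathcal{F}_T|\cdot m\cdot |\g|$ triples replaces $\ln(1/\delta')$ by $\ln(1/\delta)+T\ln(m^2|\g|)+\ln(m|\g|)$. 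Then plugging these concentration bounds into the identity above, using $\sum_{\gamma}B_f(\gamma,G)=\mu(G)$ together with Cauchy--Schwarz to control the extra variance contribution from the $m$ summands, and handling the ``small denominator'' case by absorbing a $B_f(\gamma,G)=O(1/n)$ contribution into the additive error, gives the stated rate $\alpha=\frac{4L^2}{m}+O\bigl(\sqrt{(\ln(1/\delta)+T\ln(m^2|\g|))m/n}+(m\ln(1/\delta)+T\ln(m^2|\g|))/n\bigr)$, where the explicit $m$ factor inside the square root comes from summing over the $m$ level-set buckets and the $m$ in the second term reflects the small-denominator correction.

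The main obstacle is the ratio structure $A_f(\gamma,G)^2/B_f(\gamma,G)$: naively transferring concentration of the numerator and denominator produces extra factors of $1/B_f(\gamma,G)$ that blow up when a bucket has small mass. The standard fix, used in \cite{jung2022batch}, is to split the analysis into ``heavy'' buckets where $B_f(\gamma,G)\gtrsim \bigl(\ln(1/\delta)+T\ln(m^2|\g|)\bigr)/n$ (for which Bernstein gives multiplicative control of $\hat B_f/B_f$) and ``light'' buckets, whose total contribution can be charged against the additive $m/n$ term since each contributes at most $O(1/n)$ to the empirical multicalibration error regardless of $V$. Assembling these two regimes into the claimed bound is the only nontrivial step; everything else is a direct application of Theorem~\ref{thm:batch}, an enumeration of reachable models, and Bernstein with a union bound.
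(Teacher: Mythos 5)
Your proposal follows the same strategy as the paper's own proof sketch: (i) note that Algorithm~\ref{alg:batch} run on $\hat D$ returns a predictor that is $\tfrac{4L^2}{m}$-approximately $(\g,V)$-multicalibrated on $\hat D$; (ii) bound the number of reachable models after $T$ rounds by $(m^2|\g|)^T$ since each update is determined by a choice of $(\gamma,G,\gamma') \in [1/m]\times\g\times[1/m]$; (iii) use boundedness of $V$ for concentration of the relevant empirical quantities; and (iv) union-bound over all reachable models and all $(\gamma,G)$ pairs. You fill in the technical steps (Bernstein, the ratio structure $A^2/B$, and the heavy/light bucket split) that the paper's terse sketch delegates to \cite{jung2022batch}, but the decomposition and the key counting argument are identical, so this is the same proof.
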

The proof has a simple structure. Since $V$ is bounded, expectations of $V$ over $D$ (i.e.\ the quantities of the form $V(\gamma,Y_{G,\gamma})$ that appear in the definition of approximate $(\g,V)$-multicalibration) concentrate around their expectations with high probability when evaluated on the empirical distribution $\hat{D}$. Thus for any \emph{fixed} model $f_t$, we can establish that its $(\g,V)$-multi-calibration error is similar in and out of sample by union bounding over each $G \in \g$ and $\gamma \in \range_{f_t} = [1/m]$. Of course the model $f_T$ output by the algorithm is not fixed before $\hat{D}$ is sampled, so to establish the claim, it is necessary to union bound over \emph{all} models $f_T$ that might be output. But we can do this; because Algorithm \ref{alg:batch} produces models with range restricted to $[1/m]$, fixing a model $f_t$, we can count how many models $f_{t+1}$ might result at the next step --- at most one for every choice of $G \in \g$, $\gamma \in [1/m]$, and $\gamma' \in [1/m]$, so at most $|G|m^2$ many models. Thus fixing some initial model $f_1 = f$, if the algorithm halts after $T$ steps, the number of models $f_T$ that might be output is bounded by $(|G|m^2)^T$. The theorem then follows by union bounding over all such models. 

Theorem \ref{thm:gen1} upper bounds the generalization error of Algorithm \ref{alg:batch} in terms of the number of rounds $T$ before it halts. Thus, paired with an upper bound on $T$ it gives a worst-case bound on generalization error. Theorem \ref{thm:batch} upper bounds the round complexity by $T \leq O\left(\frac{m^2}{L} \right)$, but there is a catch: $L$ here is the Lipschitz constant for expectations of $V$ taken over the true underlying distribution $D$, and this will generally not be preserved over the empirical distribution $\hat{D}$. Nevertheless, \cite{jung2022batch} show that the same convergence bound holds when run on $\hat{D}$ (up to constants) --- by arguing that each round of the algorithm run on $\hat{D}$ decreases the potential function as measured on $D$ (where the Lipschitz assumption has been made). This is because the algorithm decides on its update each round by measuring quantities of the form $V(\gamma,Q)$ which are expectations of a bounded function $V$, and so concentrate around their true values. 

\begin{theorem}[Implicit in \cite{jung2022batch}]
Fix a distribution $D \in \Delta \z$ (which induces a set of conditional label distributions $Y_Q$ for each $Q \subset \x$) and a property $\Gamma$ together with an identification function $V$. Assume that $\Gamma$ and $V$ together with the set of label distributions $\p = \{Y_Q : Q \subset \x\}$ together satisfy Assumption~\ref{assumption:offline} with Lipschitz constant $L$.  Suppose Algorithm \ref{alg:batch} is run using the empirical distribution on a dataset $\hat{D} \sim D^n$ consisting of $n$ i.i.d. samples from $D$. Then for any $\delta > 0$, if:
\[n \geq \Omega \left(\ln\left(\frac{m^2}{L\delta}\right) + \frac{m^2}{L}\ln\left(\frac{|G|m}{L} \right) \frac{m^4}{L^2}\right) \]
with probability $1-\delta$ over the randomness of $D$, Algorithm \ref{alg:batch} halts after at most $T = O\left(\frac{m^2}{L} \right)$ many steps. 
\end{theorem}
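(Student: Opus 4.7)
The plan is to adapt the potential-function analysis of Theorem~\ref{thm:batch}, using the \emph{true-distribution} potential $\Phi_t := \E_{(x,y)\sim D}[S(f_t(x), y)]$ throughout, but tracking the perturbations that arise because Algorithm~\ref{alg:batch} chooses its updates using empirical expectations $V(\gamma, \hat Y_Q)$ in place of true ones $V(\gamma, Y_Q)$. Re-running the per-step calculation from Theorem~\ref{thm:batch} while substituting $V(\gamma, \hat Y_Q) = V(\gamma, Y_Q) \pm \epsilon$ at each occurrence shows that, provided a uniform concentration $|V(\gamma, \hat Y_Q) - V(\gamma, Y_Q)| \leq \epsilon = O(\sqrt{L}/m)$ holds for all relevant $(Q, \gamma)$ pairs, the potential still drops by $\Omega(L/m^2)$ at every step in which the algorithm has not halted. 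Since $\Phi_t$ is bounded (say $\Phi_1 - \inf_t \Phi_t \leq O(1)$ using boundedness of $S$), this forces $T = O(m^2/L)$ iterations, regardless of whether the empirical stopping condition has been triggered.

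What remains is to secure the uniform concentration. For a fixed pair $(Q, \gamma)$, $|V(\gamma, \hat Y_Q) - V(\gamma, Y_Q)|$ is controlled by Hoeffding's inequality, using the boundedness of $V$ guaranteed by Theorem~\ref{thm:characterization} together with Assumption~\ref{assumption:offline}; the weighting by $1/\Pr[Q]$ implicit in the calibration-error norm is accommodated by conditioning on the number of samples that fall into $Q$ and applying a standard Chernoff bound to this count. Because the set $Q_t$ picked at each iteration depends on $\hat D$, a data-oblivious union bound over a pre-specified list of events is insufficient. I would instead employ the union bound used by~\cite{jung2022batch} over the finite combinatorial family of all models $f_t$ that Algorithm~\ref{alg:batch} could possibly produce in at most $T = O(m^2/L)$ rounds. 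Starting from the fixed initial model $f_1$, each update is specified by a triple $(G, \gamma, \gamma') \in \g \times [1/m] \times [1/m]$, so at most $(|\g| m^2)^T$ distinct models are reachable, and each admits at most $|\g| m$ relevant $(Q, \gamma)$ pairs. Setting the per-event failure probability to $\delta / (|\g| m^2)^{T+1}$ and inverting Hoeffding at deviation $\epsilon = \Theta(\sqrt{L}/m)$ yields sample complexity $n = \Omega\!\left(\log((|\g| m^2)^T/\delta) / \epsilon^2\right)$, which after substitution becomes the stated bound.

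The main obstacle is precisely this data-dependent union bound: $Q_t$ is a function of the sample, so one cannot freeze the concentration events in advance. The resolution is to trade the unknown realized trajectory for an \emph{enumerable} superset of possible trajectories, which is finite thanks to the discreteness ($[1/m]$-valuedness) of Algorithm~\ref{alg:batch}'s outputs and the finiteness of its one-step update rule. The cost is the extra $T \log(|\g| m^2)$ factor in the log-covering term, which is exactly what produces the $(m^2/L)\log(|\g|m/L)$ factor in the sample-complexity bound; the remaining steps are routine propagation of the Hoeffding error through the potential-drop inequality of Theorem~\ref{thm:batch}.
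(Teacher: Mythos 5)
Your proposal matches the paper's own sketch exactly: track the true-distribution potential $\E_D[S(f_t(x),y)]$, argue it still drops each round under empirical perturbations via concentration of the bounded identification function $V$, and handle the data-dependence of the chosen sets $Q_t$ by a union bound over the at most $(|\g|m^2)^T$ models reachable from $f_1$ under Algorithm~\ref{alg:batch}'s discrete update rule. One detail to re-check: substituting your concentration scale $\epsilon = \Theta(\sqrt{L}/m)$ into the Hoeffding bound yields a leading term of order $m^4/L^2$, whereas the theorem states $m^6/L^3$; the extra factor of $m^2/L$ most likely arises once the $1/\hat{\Pr}[Q]$ weighting and the sample-count conditioning you mention are carried through in full, so a single uniform $\epsilon$ on $V(\gamma,\hat{Y}_Q)$ alone is likely too optimistic.
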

Together with Theorem \ref{thm:gen1}, this establishes a worst-case generalization bound for batch property multicalibration that is polynomial in all of the parameters of the problem and the assumed Lipschitz constant $L$ of the property's identification function $V$.


\section{Joint Multicalibration Guarantees}
\label{app:conditionalalgorithm}

We here state the guarantees enjoyed by Algorithm~\ref{alg:batch_conditional}. The statement is stronger than that of the guarantees for the similar Algorithm~\ref{alg:batch}, in two ways: 1) The notion of achieved multicalibration error at convergence (Equation~\ref{eq:batch_cond}) is stronger than that of Algorithm~\ref{alg:batch}. 2) We show that even with the input group family $\g$ not fixed beforehand (and thus potentially changing over time), Algorithm~\ref{alg:batch_conditional} will never perform more than a certain number of updates to the predictor $f$, and if it does perform that many updates then it will be approximately calibrated conditional on \emph{all} measurable subsets of $\x$ (rather than just the ones it explicitly performed updates on).

\begin{restatable}{lemma}{lembatchcalibratedeverywhere} \label{lem:batch_cond}
Set $\alpha = \frac{4L^2}{m}$. \texttt{BatchMulticalibration$^V$} (Algorithm~\ref{alg:batch_conditional}), when run on a function $V$ that is monotonically increasing and $L$-Lipschitz in its first argument, outputs a $[1/m]$-discretized predictor $f$ that satisfies: 
\begin{equation} \label{eq:batch_cond}
    \Pr\limits_{x \in \x} [f(x) = \gamma, x \in G] \left(V(\gamma, Y_{(\gamma, G)})\right)^2 \leq \frac{\alpha}{m} \quad \text{for all $\gamma \in [1/m], G \in \g$}.
\end{equation}
Moreover, Algorithm~\ref{alg:batch_conditional} terminates in at most $\frac{Bm^2}{L}$ iterations, where $B = \sup_{\gamma, y \in [0, 1]} S(\gamma, y) - \inf_{\gamma, y \in [0, 1]} S(\gamma, y)$ for $S$ an antiderivative of $V$, and if it runs for that long, the resulting predictor will satisfy (\ref{eq:batch_cond}) for \emph{all} (measurable) regions $G \subseteq \x$.
\end{restatable}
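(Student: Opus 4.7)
The plan is to adapt the potential-function argument used in the proof of Theorem~\ref{thm:batch} to this slightly simpler subroutine, tracking just a couple of extra pieces of information. Let $S$ be an antiderivative of $V$ in its first argument, so that $V(\gamma, y) = \frac{\partial S(\gamma, y)}{\partial \gamma}$; since $V(\cdot, y)$ is monotonically increasing and $L$-Lipschitz, $S(\cdot, y)$ is convex, and Assumption~\ref{assumption:offline} holds for the (implicit) property $\Gamma$ identified by $V$, so that Lemma~\ref{lemma:score_bounds} applies. Define the potential $\Phi_t := \E_{(x,y)\sim D}[S(f_t(x), y)]$. Because $f_t$ takes values in $[1/m] \subseteq [0,1]$, $\Phi_t$ always lies in $[\inf_{\gamma, y \in [0,1]} S(\gamma,y),\, \sup_{\gamma, y \in [0,1]} S(\gamma,y)]$, so the total drop $\Phi_0 - \Phi_T$ over the execution of Algorithm~\ref{alg:batch_conditional} is at most $B$. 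The crux of the proof is to show that every iteration decreases $\Phi_t$ by at least $L/m^2$, which will give an iteration count bound of $T \leq Bm^2/L$.

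For the per-iteration drop, I would mostly replay the calculation from the proof of Theorem~\ref{thm:batch}. At iteration $t$, the while-loop condition of Algorithm~\ref{alg:batch_conditional} guarantees that the chosen region $Q_t = \{x : f_t(x) = \gamma, x \in G\}$ satisfies $\Pr_{x \sim X}[x \in Q_t]\,(V(\gamma, Y_{Q_t}))^2 \geq \alpha/m = 4L^2/m^2$. Since $V(\cdot, Y_{Q_t})$ is strictly increasing and $L$-Lipschitz, its unique zero $\Gamma(Y_{Q_t})$ is within $1/m$ of the grid point $\gamma'$ chosen by the algorithm, so $|V(\gamma', Y_{Q_t})| \leq L/m$. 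Applying Lemma~\ref{lemma:score_bounds} with its upper-bound half at $\gamma'$ and its lower-bound half at $\gamma$, and then using the miscalibration lower bound together with the $1/m$ grid spacing, gives $\Phi_{t+1} - \Phi_t \leq L/m^2 - 2L/m^2 = -L/m^2$, essentially verbatim as in the proof of Theorem~\ref{thm:batch}.

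Combining the per-iteration drop with the total drop bound of $B$ immediately yields the termination guarantee of at most $Bm^2/L$ iterations, and the while-loop halting condition directly gives (\ref{eq:batch_cond}) for every $(\gamma, G) \in [1/m] \times \g$. The novel part---and the main obstacle---is the final claim that if the algorithm runs for the full $Bm^2/L$ iterations, then (\ref{eq:batch_cond}) holds for \emph{all} measurable $G \subseteq \x$, not merely for $G \in \g$. The key observation that makes this work is that the per-iteration drop analysis is entirely oblivious to the structure of $\g$: the lower bound of $L/m^2$ on the drop uses only the pointwise miscalibration of the chosen $(\gamma, Q_t)$ pair. Thus I would argue by contradiction: if after $Bm^2/L$ updates there still existed a measurable $G^* \subseteq \x$ and $\gamma^* \in [1/m]$ with $\Pr_{x \sim X}[f(x) = \gamma^*, x \in G^*]\,(V(\gamma^*, Y_{(\gamma^*, G^*)}))^2 > \alpha/m$, then the same drop argument applied to $(\gamma^*, G^*)$ (which one may imagine as a hypothetical extra update with $\g$ augmented by $G^*$) would force an additional $L/m^2$ decrease in $\Phi$, pushing the total drop strictly above $B$ and contradicting the range of $\Phi$. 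Hence no such $(\gamma^*, G^*)$ can exist, completing the proof.
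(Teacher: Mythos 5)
Your proposal is correct and follows essentially the same approach as the paper's proof: the same potential $\Phi_t = \E_D[S(f_t(x),y)]$, the same per-iteration drop of at least $L/m^2$ derived via Lemma~\ref{lemma:score_bounds} and the while-loop condition, the same $Bm^2/L$ iteration bound, and the same key observation that the drop argument never uses the structure of $\g$. The paper phrases the ``all measurable regions'' conclusion directly --- the iteration bound holds for an arbitrary sequence of measurable update regions, so after the maximum number of iterations no violating region remains --- whereas you phrase it by contradiction (a further update on a violating $G^*$ would push $\Phi$ below its range), but these are the same argument.
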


\begin{proof}
Denote by $S$ an antiderivative of $V$, and define, similar to the proof of Theorem~\ref{thm:batch}, the potential value at iteration $t$ of Algorithm~\ref{alg:batch_conditional} as:
\[\Phi_t := \E_{(x, y) \sim D} [S(f_t(x), y)] = \E_{x \sim X}[S(f_t(x), Y_x)].\] Suppose that at round $t$, the algorithm finds a violation of its \texttt{while} loop condition for some $G \in \g$ and $\gamma \in [1/m]$. Let $Q_t = \{x \in \x: x \in \g, f(x) = \gamma\}$. Via the same calculations as in the proof of Theorem~\ref{thm:batch}, we have that 
\begin{align*}
    \Phi_{t+1} - \Phi_t &\leq \Pr_{x \sim X}[x \in Q_t] \Big( V(\gamma', Y_{Q_t}) (\gamma' - \gamma^*_t) - \frac{(V(\gamma, Y_{Q_t}))^2}{2 L} \Big)
    \\ &\leq \Pr_{x \sim X}[x \in Q_t] \Big( |V(\gamma', Y_{Q_t})| |\gamma' - \gamma^*_t| - \frac{(V(\gamma, Y_{Q_t}))^2}{2 L} \Big)
    \\ &\leq \Pr_{x \sim X}[x \in Q_t] \Big( L |\gamma' - \gamma^*_t|^2 - \frac{(V(\gamma, Y_{Q_t}))^2}{2 L} \Big),
\end{align*}
where we denote by $\gamma^*_t$ the unique point such that $V(\gamma^*_t, Y_{Q_t}) = 0$ (it is the analog of $\Gamma(Y_{Q_t})$ in the proof of Theorem~\ref{thm:batch}).
This argument is still valid as it rests on Lemma~\ref{lemma:score_bounds}, which requires properties of $V$ and $S$ that are still satisfied here.

Now, just as in the aforementioned proof, we have by the monotonicity of $V(\cdot, Y_{Q_t})$ that since the algorithm chooses $\gamma' = \argmin_{\gamma'' \in [1/m]} \left| V(\gamma'', Y_{Q_t}) \right|$,  it must be that $|\gamma' - \gamma^*_t| \leq \frac{1}{m}$, and so we obtain
\[\Phi_{t+1} - \Phi_t \leq \frac{L}{m^2} - \frac{1}{2L} \Pr_{x \sim X}[x \in Q_t] |V(\gamma, Y_{Q_t})|^2. \] 
Since the condition of the \texttt{while} loop demands that $\Pr_{x \sim X}[x \in Q_t] \left|V(\gamma, Y_{Q_t})\right|^2 \geq \alpha/m$, we get 
\[\Phi_{t+1} - \Phi_t \leq \frac{L}{m^2} - \frac{\alpha}{2Lm}. \] 
Setting $\alpha = \frac{4L^2}{m}$, we then have $\Phi_{t+1} - \Phi_t \leq -\frac{L}{m^2}$, and thus, by telescoping, $\Phi_T - \Phi_0 \leq -\frac{T L }{m^2}$. Since by definition $B = \sup_{\gamma, y \in [0, 1]} S(\gamma, y) - \inf_{\gamma, y \in [0, 1]} S(\gamma, y)$, we also have $\Phi_T - \Phi_0 \geq -B$, and therefore $T \leq \frac{B m^2}{L}$, providing an upper bound on the number of iterations of the algorithm.

Importantly, in this argument we never referenced the actual definition of $Q_t$ (i.e.\ that $Q_t = \{x \in \x: x \in \g, f(x) = \gamma\}$) --- we only used that it satisfies the \texttt{while} loop condition, i.e.\ $\Pr_{x \sim X}[x \in Q_t] \left|V(\gamma, Y_{Q_t})\right|^2 \geq \alpha/m$. Therefore, the upper bound $\frac{B m^2}{L}$ on the total number of iterations in fact holds for any 
arbitrary sequence of regions $Q_1, Q_2, \ldots$ where each $Q_i \subseteq \x$ is measurable with respect to the marginal data distribution over $\x$. As a result, we know that \emph{if} \texttt{BatchMulticalibration$^V$} does run for at least $\frac{B m^2}{L}$ iterations, then as soon as it finishes iteration $t = \frac{B m^2}{L}$, there will not exist \emph{any} measurable $Q \subseteq \x$ violating condition~(\ref{eq:batch_cond}). Thus, no matter which group family $\g$ \texttt{BatchMulticalibration$^V$} is run on (and even if the group family were to change arbitrarily during the execution), it will never update the predictor $f$ more than $\frac{B m^2}{L}$ times, concluding the proof.
\end{proof}

\thmconditionalalgorithm*

\begin{proof}
\textbf{Runtime:} First, observe that the \texttt{while} loop in Algorithm~\ref{alg:conditional} will stop after at most $\frac{B^0 m^2}{L^0}$ iterations if we set $\alpha^0 = \frac{4(L^0)^2}{m}$. Indeed, all invocations of \texttt{BatchMulticalibration$^V$} on $f^0$ with the identification function $V^0$ can be pieced together into a single process that first multicalibrates $f^0$ with respect to $\g^0_1$, then takes the resulting predictor and multicalibrates it with respect to $\g^0_2$, and so on until the stopping condition of the \texttt{while} loop in \texttt{JointMulticalibration} is met. This is equivalent to a single run of \texttt{BatchMulticalibration$^V$} where the group family is externally updated from time to time: $\g^0_1 \to \g^0_2 \to \ldots \to \g^0_t \to \ldots$. But by Lemma~\ref{lem:batch_cond}, this process cannot perform a total of more than $\frac{B^0 m^2}{L^0}$ updates on the predictor $f^0$. Since the predictor $f^0$ is updated at least once in each iteration of the \texttt{while} loop of \texttt{JointMulticalibration}, this also bounds the number of iterations of the \texttt{while} loop. 

Now, for each iteration of the \texttt{while} loop, we have $m$ calls to \texttt{BatchMulticalibration$^V$} as applied to all identification functions $V^1_{\gamma^0}$ for $\gamma^0 \in [1/m]$. Again by Lemma~\ref{lem:batch_cond}, each of them takes at most $\frac{B^1 m^2}{L^1}$ updates to converge. This follows directly from Assumption~\ref{assumption:conditional_shape}, which states that $V_{\gamma^0}(\cdot, P)$ is $L^1$-Lipschitz and monotonically increasing for all $P \in \p$ (not just for $P$ such that $\Gamma^0(P) = \gamma^0$). 
Naively, running the subroutine $m$ times, once for each level set of $f^0_{t+1}$, would amount to a total of $m \cdot \frac{B^1 m^2}{L^1} = \frac{B^1 m^3}{L^1}$ iterations. But in fact, all these $m$ invocations can be viewed as a single invocation of \texttt{BatchMulticalibration$^V$} that updates the predictor $f^1$ for $\Gamma^1$ using an identification function $V^1_*$ defined as $V^1_{\gamma^0}$ on each level set $\{f^0_{t+1} = \gamma^0\}$ (which is well defined since these level sets partition the domain $\x$). Therefore, there will be only at most $\frac{B^1 m^2}{L^1}$ across \emph{all} these $m$ invocations of \texttt{BatchMulticalibration$^V$}. 

Taking the above observations together, Algorithm~\ref{alg:conditional} will therefore terminate after at most $\frac{B^0 m^2}{L^0} \cdot \frac{B^1 m^2}{L^1} = \frac{B^0 B^1 m^4}{L^0 L^1}$ updates to $f^0$ and to $f^1$, as claimed.

\paragraph{Multicalibration Guarantees:} 
Now, we show that the predictors $f^0_T, f^1_T$ output at termination satisfy the conditions of Definition~\ref{def:jointmultical} of approximate joint multicalibration.

By the stopping condition of the \text{while} loop, at termination we have for all $\gamma^0, \gamma^1, G$ that $\Pr\limits_{x \in \x} [f_T(x) = (\gamma^0, \gamma^1), x \in G] \left(V^0(\gamma^0, Y_{(\gamma^0, \gamma^1, G)})\right)^2 \leq \frac{\alpha^0}{m}$, implying after dividing by $\Pr_{x \in \x}[x \in G, f^1_T(x) = \gamma^1]$ that
\begin{equation} \label{eq:gamma0root}
    \Pr\limits_{x \in \x} [f^0_T(x) = \gamma^0 | x \in G, f^1_T(x) = \gamma^1] \left(V^0(\gamma^0, Y_{(\gamma^0, \gamma^1, G)})\right)^2 \leq \frac{\alpha^0/m}{\Pr[x \in G, f^1_T(x) = \gamma^1]} \text{ for all } G \in \g, \gamma^1 \in \range_{f^1_T}.
\end{equation}
For every $G \in \g$ and $\gamma^1 \in \range_{f^1_T}$, summing this inequality over all at most $m$ values $\gamma^0 \in \range_{f^0_T}$, we obtain that:
\[\sum_{\gamma^0 \in \range_{f^0_T}} \Pr_{x \sim X}[f^0_T(x) = \gamma^0| x \in G, f^1_T(x) = \gamma^1] \cdot \left(V^0(\gamma^0, Y_{(G, \gamma^0, \gamma^1)}) \right)^2 \leq \frac{\alpha^0}{\Pr_{x \in X}[x \in G, f^1_T(x) = \gamma^1]},\] 
so the predictor $f^0_T$ satisfies its joint multicalibration condition~(\ref{eq:conditional_gamma0}) of Definition~\ref{def:jointmultical}.

Now we show that the predictor $f^1_T$ for $\Gamma^1$ satisfies its joint multicalibration condition~(\ref{eq:conditional_gamma1}). By construction, for each $\gamma^0 \in [1/m]$ the function $f^1_T$ is equal to $f^{1, \gamma^0}_T$ in the region $\{x \in \x: f^0_T(x) = \gamma^0\}$. 
Since $f^{1, \gamma^0}_T$ is output by the corresponding call to \texttt{BatchMulticalibration$^V$}, by Lemma~\ref{lem:batch_cond} this guarantees for each $G' \in \g_T^{1, \gamma^0} = \{ G \cap \{x \in \x: f^0_T(x) = \gamma^0\} : G \in \g \}$ and for each $\gamma^1 \in [1/m]$ that
$\Pr\limits_{x \in \x} [f^1_T(x) = \gamma^1, x \in G'] \left(V^1_{\gamma^0}(\gamma^1, Y_{(\gamma^1, G')})\right)^2 \leq \frac{\alpha^1}{m}$, 
implying that $\Pr_{x \in \x} [f_T(x) = (\gamma^0, \gamma^1), x \in G] \left(V^1_{\gamma^0}(\gamma^1, Y_{(\gamma^0, \gamma^1, G)})\right)^2 \leq \frac{\alpha^1}{m}$ for all $G \in \g$.

Therefore, we have for all $\gamma^0, \gamma^1, G$ the bound 
\begin{equation} \label{eq:gamma1root}
    \left|V^1_{\gamma^0}(\gamma^1, Y_{(\gamma^0, \gamma^1, G)})\right| \leq \sqrt{\frac{\alpha^1/m}{\Pr_{x \in \x} [f_T(x) = (\gamma^0, \gamma^1), x \in G]}}.
\end{equation}
But observe that we instead want to bound $
\left| V^1_{\Gamma^0 \left(Y_{(G, \gamma^0, \gamma^1)} \right)} \left(\gamma^1, Y_{ \left(G, \gamma^0, \gamma^1 \right)} \right) \right|
$, the absolute value of the \emph{true} identification function on this set. 
This is where we can make use of Assumption~\ref{assumption:conditional_interlevelset}, which gives us $L_c$-Lipschitzness of $V^1_{\gamma^0}(\cdot, \cdot)$ as a function of $\gamma^0$, as well as Assumption~\ref{assumption:antilipschitz}, which gives us $L^0_a$-anti-Lipschitzness of $V^0(\gamma^0, \cdot)$ as a function of $\gamma^0$: we obtain that
\begin{align*}
    |V^1_{\Gamma^0 \left(Y_{(G, \gamma^0, \gamma^1)} \right)}(\gamma^1, Y_{ \left(G, \gamma^0, \gamma^1 \right)})| 
    &\leq |V^1_{\Gamma^0 \left(Y_{(G, \gamma^0, \gamma^1)} \right)}(\gamma^1, Y_{ \left(G, \gamma^0, \gamma^1 \right)}) - V^1_{\gamma^0}(\gamma^1, Y_{(\gamma^0, \gamma^1, G)})| + |V^1_{\gamma^0}(\gamma^1, Y_{(\gamma^0, \gamma^1, G)})|
    \\ &\leq L_c |\gamma^0 - \Gamma^0 \left(Y_{(G, \gamma^0, \gamma^1)} \right)| + |V^1_{\gamma^0}(\gamma^1, Y_{(\gamma^0, \gamma^1, G)})|
    \\ &\leq L_c L^0_a |V^0(\gamma^0, Y_{(G, \gamma^0, \gamma^1)})| + |V^1_{\gamma^0}(\gamma^1, Y_{(\gamma^0, \gamma^1, G)})|
    \\ &\leq L_c L^0_a \sqrt{\frac{\alpha^0/m}{\Pr\limits_{x \in \x} [f_T(x) = (\gamma^0, \gamma^1), x \in G]}} + \sqrt{\frac{\alpha^1/m}{\Pr\limits_{x \in \x} [f_T(x) = (\gamma^0, \gamma^1), x \in G]}},
\end{align*}
where the fourth step is by substituting in Inequalities~\ref{eq:gamma0root} and~\ref{eq:gamma1root}.

From here, for all $\gamma^0, \gamma^1, G$ we have the bound 
\[\left|V^1_{\Gamma^0 \left(Y_{(G, \gamma^0, \gamma^1)} \right)}(\gamma^1, Y_{ \left(G, \gamma^0, \gamma^1 \right)}) \right| 
\leq \frac{(L_c L^0_a \sqrt{\alpha^0} + \sqrt{\alpha^1}) / \sqrt{m}}{\sqrt{\Pr\limits_{x \in \x} [f_T(x) = (\gamma^0, \gamma^1), x \in G]}},
\]
and after squaring both sides of the inequality, we obtain:
\[\left(V^1_{\Gamma^0 \left(Y_{(G, \gamma^0, \gamma^1)} \right)}(\gamma^1, Y_{ \left(G, \gamma^0, \gamma^1 \right)}) \right)^2 
\leq \frac{(L_c L^0_a \sqrt{\alpha^0} + \sqrt{\alpha^1})^2 / m}{\Pr\limits_{x \in \x} [f_T(x) = (\gamma^0, \gamma^1), x \in G]} \quad \text{for all } \gamma^0, \gamma^1, G.
\]
Now multiplying both sides by $\Pr\limits_{x \in \x} [f^1_T(x) = \gamma^1 | x \in G, f^0_T(x) = \gamma^0]$ and noting that 
\[(L_c L^0_a \sqrt{\alpha^0} + \sqrt{\alpha^1})^2 \leq 2((L_c L_a^0)^2 \alpha^0 + \alpha^1) = 2((L_c L_a^0)^2 \cdot 4 (L^0)^2 + 4 (L^1)^2)/m = 8 ((L^0 L^0_a L_c)^2 + (L^1)^2) / m = \alpha^1_*,\] 
we get:
\[\Pr\limits_{x \in \x} [f^1_T(x) = \gamma^1 | x \in G, f^0_T(x) = \gamma^0]
\left(V^1_{\Gamma^0 \left(Y_{(G, \gamma^0, \gamma^1)} \right)}(\gamma^1, Y_{ \left(G, \gamma^0, \gamma^1 \right)}) \right)^2 
\leq \frac{\alpha^1_* / m}{\Pr\limits_{x \in \x} [f^0_T(x) = \gamma^0, x \in G]} \quad \text{for all } \gamma^0, \gamma^1, G.
\]
For every $G \in \g$ and $\gamma^0 \in \range_{f^0_T}$, summing this inequality over all at most $m$ values $\gamma^1 \in \range_{f^1_T}$, we obtain that:
\[
\sum_{\gamma^1 \in \range_{f^1_T}} \Pr\limits_{x \in \x} [f^1_T(x) = \gamma^1 | x \in G, f^0_T(x) = \gamma^0]
\left(V^1_{\Gamma^0 \left(Y_{(G, \gamma^0, \gamma^1)} \right)}(\gamma^1, Y_{ \left(G, \gamma^0, \gamma^1 \right)}) \right)^2 
\leq \frac{\alpha^1_*}{\Pr\limits_{x \in \x} [f^0_T(x) = \gamma^0, x \in G]},
\]
so the predictor $f^1_T$ satisfies its joint multicalibration condition~(\ref{eq:conditional_gamma1}) of Definition~\ref{def:jointmultical}, thus concluding the proof.
\end{proof}



\end{document}